%% file: main.tex
\documentclass{article}
\usepackage{preprint,times}

\usepackage[utf8]{inputenc}
\usepackage[T1]{fontenc}
\usepackage{xurl}
\usepackage{booktabs}
\usepackage{amsfonts}
\usepackage{nicefrac}
\usepackage{microtype}
\usepackage{xcolor}
\usepackage{amsthm,thmtools}

\usepackage{etoolbox,verbatim}

\input{preamble/_preamble_includes}
\title{Conditional Flow Models Convert Epistemic Uncertainty into Apparent Aleatoric Variance}

\author{%
  Navodita Sharma\textsuperscript{1}\thanks{Corresponding author. Email: \href{mailto:navodita@mit.edu}{\texttt{navodita@mit.edu}}.}\quad
  Abhishek Shetty\textsuperscript{2}\quad
  Max Simchowitz\textsuperscript{3}\thanks{Equal advising.}\quad
  Pulkit Agrawal\textsuperscript{1}\footnotemark[2]\\[0.5em]
  \normalfont\small\textsuperscript{1}Massachusetts Institute of Technology\\
  \normalfont\small\textsuperscript{2}Georgia Institute of Technology\\
  \normalfont\small\textsuperscript{3}Carnegie Mellon University
}
\date{}

\begin{document}

\maketitle

\begin{abstract}

\input{body/abstract}

\end{abstract}

\input{body/intro}
\input{body/related}

\input{body/prelim}

\input{body/body}
\input{body/conclusion}

\bibliographystyle{references}
\bibliography{refs}

\appendix

\input{body/app_smoothing.tex}

\input{body/app_coverage.tex}

\input{body/app_variance_inflation.tex}

\end{document}

%% file: preamble/_preamble_includes.tex
\input{preamble/color_defs}

\input{preamble/header}

\input{preamble/theorem_styles}

\input{preamble/characters}

\input{preamble/general_macros}

\input{preamble/algnames}

%% file: preamble/color_defs.tex
\usepackage{xcolor}
\usepackage{tcolorbox}
\tcbuselibrary{skins,breakable}

\tcbset{
  aibox/.style={
      width=\linewidth,
      top=6pt,
      bottom=0pt,
      left=1.5pt,
      right=1.5pt,
      colback=blue!60!gray!5,
      colframe=black,
      colbacktitle=black,
      enhanced,
      center,
      attach boxed title to top left={yshift=-0.1in,xshift=0.15in},
      boxed title style={boxrule=0pt,colframe=white,},
    }
}
\newtcolorbox{AIbox}[2][]{aibox,title=#2,#1}

\definecolor{primalcolor}{HTML}{A60000}
\definecolor{contrarycolor}{HTML}{00A6A6}
\definecolor{darkcontrarycolor}{HTML}{004C4C}
\definecolor{lightblue}{HTML}{2970CC}
\definecolor{lightpurple}{HTML}{673147}
\definecolor{ForestGreen}{HTML}{FF5733}
\definecolor{myred}{HTML}{AA4A44}
\definecolor{hyppurple}{HTML}{800080}

\newcommand{\linkcolor}{darkcontrarycolor}
\newcommand{\urlcolor}{darkcontrarycolor}
\newcommand{\citecolor}{darkcontrarycolor}

\newcommand{\thmcolordark}{red!30!black}

%% file: preamble/header.tex
\usepackage[T1]{fontenc}
\usepackage{courier}
\usepackage{capt-of}
\usepackage[font=small,labelfont=bf]{caption}

\usepackage{natbib}
\usepackage{scalefnt}
\usepackage{amssymb,upgreek,bm}
\usepackage{mathtools}
\usepackage[english]{babel}  
\usepackage{multirow,tcolorbox,tikz-cd,turnstile,xspace,xparse}
\usepackage{relsize,breakcites,appendix}
\usepackage{longtable,tablefootnote,booktabs,float,makecell}
\usepackage[normalem]{ulem}
\usepackage{tabu}
\usepackage[shortlabels]{enumitem} 
\usepackage{footnote}
\usepackage{boxedminipage}
\usepackage{multirow,nicefrac}
\usepackage{verbatim} %

\usepackage[colorlinks=true,linkcolor=\linkcolor,urlcolor=\urlcolor,citecolor=\citecolor,breaklinks,hypertexnames=false]{hyperref}

\usepackage{prettyref}

\usepackage[capitalise,nameinlink]{cleveref}

    \usepackage{amsthm}

\usepackage{thm-restate}

    \usepackage{mathrsfs}
    \usepackage{subcaption}

\usepackage{placeins}

\DeclareMathAlphabet{\mathbfsf}{\encodingdefault}{\sfdefault}{bx}{n}

\numberwithin{equation}{section}

%% file: preamble/theorem_styles.tex
\usepackage{thmtools}

\makeatletter
\@ifundefined{newcounteralias}{}{%
  \renewcommand\thmt@autorefsetup{%
    \@xa\def\csname\thmt@envname autorefname\@xa\endcsname\@xa{\thmt@thmname}%
  }%
}
\makeatother

\Crefname{equation}{Eq.}{Eqs.}
\Crefname{assumption}{Assumption}{Assumptions}
\Crefname{condition}{Condition}{Conditions}
\Crefname{claim}{Claim}{Claims}
\Crefname{property}{Property}{Properties}
\Crefname{construction}{Construction}{Constructions}

\declaretheoremstyle[
    headformat=\normalfont\textcolor{\thmcolordark}{\bfseries\NAME\,\NUMBER}\NOTE,%
    notefont={\normalfont\textcolor{\thmcolordark}{\bfseries}}, 
    notebraces={}{},
    bodyfont=\normalfont\itshape,
    spaceabove = 6pt,
    spacebelow = 6pt,
    ]{coloredthmversion}

\declaretheoremstyle[
    headformat=\normalfont\textcolor{\thmcolordark}{\bfseries\NAME\,\NUMBER}\NOTE,%
    bodyfont=\normalfont\itshape,
    spaceabove = 6pt,
    spacebelow = 6pt,
    ]{coloredthm}

\declaretheoremstyle[
    headformat=\normalfont\textcolor{\thmcolordark}{\bfseries\NAME\,\NUMBER}\NOTE,%
    bodyfont=\normalfont,
    spaceabove = 6pt,
    spacebelow = 6pt,
    ]{coloreddef}

    \theoremstyle{coloredthmversion}

  \theoremstyle{coloredthm}
  \newtheorem{theorem}{Theorem}
  \newtheorem{lemma}{Lemma}[section]
  \newtheorem{corollary}{Corollary}[section]
  \newtheorem{proposition}[lemma]{Proposition}

\newtheorem*{thminformal*}{Informal Theorem}

    \theoremstyle{coloreddef}
    \newtheorem{definition}{Definition}[section]
    
    \newtheorem{remark}{Remark}[section]

\newtheorem{assumption}{Assumption}[section]
\newtheorem{condition}{Condition}[section]

\makeatletter
\newcommand{\neutralize}[1]{\expandafter\let\csname c@#1\endcsname\count@}
\makeatother

    \newtheoremstyle{named}{}{}{\itshape}{}{\bfseries}{}{.5em}{\Cref{#3} {\normalfont (informal)} }{}
    \theoremstyle{named}
    
    \theoremstyle{plain}

\newtheorem*{theorem*}{Theorem}
\newtheorem*{lemma*}{Lemma}
\newtheorem*{corollary*}{Corollary}
\newtheorem*{proposition*}{Proposition}
\newtheorem*{claim*}{Claim}
\newtheorem*{fact*}{Fact}
\newtheorem*{observation*}{Observation}
\newtheorem*{definition*}{Definition}
\newtheorem*{remark*}{Remark}
\newtheorem*{example*}{Example}

%% file: preamble/characters.tex
\def\ddefloop#1{\ifx\ddefloop#1\else\ddef{#1}\expandafter\ddefloop\fi}
\def\ddef#1{\expandafter\def\csname bb#1\endcsname{\ensuremath{\mathbb{#1}}}}
\ddefloop ABCDEFGHIJKLMNOPQRSTUVWXYZ\ddefloop

\def\ddefloop#1{\ifx\ddefloop#1\else\ddef{#1}\expandafter\ddefloop\fi}
\def\ddef#1{\expandafter\def\csname frak#1\endcsname{\ensuremath{\mathfrak{#1}}}}
\ddefloop ABCDEFGHIJKLMNOPQRSTUVWXYZ\ddefloop

\def\ddefloop#1{\ifx\ddefloop#1\else\ddef{#1}\expandafter\ddefloop\fi}
\def\ddef#1{\expandafter\def\csname fr#1\endcsname{\ensuremath{\mathfrak{#1}}}}
\ddefloop ABCDEFGHIJKLMNOPQRSTUVWXYZ\ddefloop

\def\ddefloop#1{\ifx\ddefloop#1\else\ddef{#1}\expandafter\ddefloop\fi}
\def\ddef#1{\expandafter\def\csname eul#1\endcsname{\ensuremath{\EuScript{#1}}}}
\ddefloop ABCDEFGHIJKLMNOPQRSTUVWXYZ\ddefloop

\def\ddefloop#1{\ifx\ddefloop#1\else\ddef{#1}\expandafter\ddefloop\fi}
\def\ddef#1{\expandafter\def\csname scr#1\endcsname{\ensuremath{\mathscr{#1}}}}
\ddefloop ABCDEFGHIJKLMNOPQRSTUVWXYZ\ddefloop

\def\ddefloop#1{\ifx\ddefloop#1\else\ddef{#1}\expandafter\ddefloop\fi}
\def\ddef#1{\expandafter\def\csname b#1\endcsname{\ensuremath{\mathbf{#1}}}}
\ddefloop ABCDEFGHIJKLMNOPQRSTUVWXYZ\ddefloop

\def\ddefloop#1{\ifx\ddefloop#1\else\ddef{#1}\expandafter\ddefloop\fi}
\def\ddef#1{\expandafter\def\csname bhat#1\endcsname{\ensuremath{\hat{\mathbf{#1}}}}}
\ddefloop ABCDEFGHIJKLMNOPQRSTUVWXYZ\ddefloop

\def\ddefloop#1{\ifx\ddefloop#1\else\ddef{#1}\expandafter\ddefloop\fi}
\def\ddef#1{\expandafter\def\csname btil#1\endcsname{\ensuremath{\tilde{\mathbf{#1}}}}}
\ddefloop ABCDEFGHIJKLMNOPQRSTUVWXYZ\ddefloop

\def\ddefloop#1{\ifx\ddefloop#1\else\ddef{#1}\expandafter\ddefloop\fi}
\def\ddef#1{\expandafter\def\csname bst#1\endcsname{\ensuremath{\mathbf{#1}^\star}}}
\ddefloop ABCDEFGHIJKLMNOPQRSTUVWXYZ\ddefloop

\def\ddefloop#1{\ifx\ddefloop#1\else\ddef{#1}\expandafter\ddefloop\fi}
\def\ddef#1{\expandafter\def\csname bst#1\endcsname{\ensuremath{\mathbf{#1}^\star}}}
\ddefloop abcdeghijklmnopqrstuvwxyz\ddefloop

\def\ddefloop#1{\ifx\ddefloop#1\else\ddef{#1}\expandafter\ddefloop\fi}
\def\ddef#1{\expandafter\def\csname bhat#1\endcsname{\ensuremath{\hat{\mathbf{#1}}}}}
\ddefloop abcdefghijklmnopqrstuvwxyz\ddefloop

\def\ddefloop#1{\ifx\ddefloop#1\else\ddef{#1}\expandafter\ddefloop\fi}
\def\ddef#1{\expandafter\def\csname b#1\endcsname{\ensuremath{\mathbf{#1}}}}
\ddefloop abcdeghijklnopqrstuvwxyz\ddefloop

\def\ddefloop#1{\ifx\ddefloop#1\else\ddef{#1}\expandafter\ddefloop\fi}
\def\ddef#1{\expandafter\def\csname barb#1\endcsname{\ensuremath{\bar{\mathbf{#1}}}}}
\ddefloop abcdefghijklmnopqrstuvwxyz\ddefloop

\def\ddef#1{\expandafter\def\csname c#1\endcsname{\ensuremath{\mathcal{#1}}}}
\ddefloop ABCDEFGHIJKLMNOPQRSTUVWXYZ\ddefloop
\def\ddef#1{\expandafter\def\csname h#1\endcsname{\ensuremath{\widehat{#1}}}}
\ddefloop ABCDEFGHIJKLMNOPQRSTUVWXYZ\ddefloop
\def\ddef#1{\expandafter\def\csname hc#1\endcsname{\ensuremath{\widehat{\mathcal{#1}}}}}
\ddefloop ABCDEFGHIJKLMNOPQRSTUVWXYZ\ddefloop
\def\ddef#1{\expandafter\def\csname t#1\endcsname{\ensuremath{\widetilde{#1}}}}
\ddefloop ABCDEFGHIJKLMNOPQRSTUVWXYZ\ddefloop
\def\ddef#1{\expandafter\def\csname tc#1\endcsname{\ensuremath{\widetilde{\mathcal{#1}}}}}
\ddefloop ABCDEFGHIJKLMNOPQRSTUVWXYZ\ddefloop

%% file: preamble/general_macros.tex
\newcommand{\eqstyle}{\textstyle}

\usepackage{tcolorbox}
\tcbuselibrary{skins,breakable}

\tcbset{
  aibox/.style={
      width=\linewidth,
      top=6pt,
      bottom=0pt,
      left=1.5pt,
      right=1.5pt,
colback=blue!60!gray!5,
colframe=black,
      colbacktitle=black,
      enhanced,
      center,
      attach boxed title to top left={yshift=-0.1in,xshift=0.15in},
      boxed title style={boxrule=0pt,colframe=white,},
    }
}

\DeclareMathOperator*{\argmin}{arg\,min}

\newcommand{\Exp}{\mathbb{E}}

\newcommand{\ballkr}[1][r]{\cB_{k}(r)}

\newcommand{\rmd}{\mathrm{d}}

\DeclareMathSymbol{\shortminus}{\mathbin}{AMSa}{"39}

\newcommand{\R}{\mathbb{R}}

%% file: body/abstract.tex
Conditional Flow Models (CFMs) are increasingly being used in place of deterministic predictors across machine learning, including for tasks that were classically formulated as regression. Given their success in such tasks, we ask the question: \textit{what useful information do the conditional distributions produced by CFMs carry beyond what a point predictor would provide}?

We identify a systematic phenomenon: in many natural regimes, CFM samples routinely have larger variance than the true conditional distribution, even in nearly deterministic regression problems. This effect is most pronounced  where the model has not resolved the dependence on the conditioning variable, and thus the additional variance can be viewed as converting \emph{epistemic uncertainty} in fitting the ground truth conditional distribution into apparent \emph{aleatoric variance} in the generated samples. Through controlled synthetic and real-data experiments, and comparisons with explicit smoothing estimators, we show that the induced variance tracks model underfitting of complex or hard-to-fit targets, and decreases with additional training or data. We provide a theoretical account of the phenomenon in a simplified setting and demonstrate a consequent practical use-case of conditional flow-matching: designing high-coverage predictive sets that cover the ground-truth regression target.

%% file: body/intro.tex
\section{Introduction}

The role of generative models in machine learning has expanded rapidly. 
Rather than returning a single point prediction, these models produce a conditional distribution over possible outputs. 
Perhaps surprisingly, they have gained traction even in settings where predicting a single value, rather than distribution over values, suffices. These setting, which we will call \emph{point-prediction} problems,  encompass predicting high-performing actions in robotics \citep{chi2023diffusion},  accurate estimates of a policy value function in reinforcement learning \citep{agrawalla2025floqtrainingcriticsflowmatching}, and numerous other  scientific estimation  and time-series problems \citep{klein2025cellflow}.

In this work, we focus on conditional flow-matching models (CFMs), which produce samples by integrating a learned velocity field from base noise to the output space, and suggest that these models confer a complementary benefit: they can \emph{\textbf{naturally express estimation uncertainty}}, even in point-prediction problem in which observations are nearly \emph{noiseless}. %

In the idealized setting, the optimal flow $v^\star$ transports the base distribution to the true conditional law \citep{lipman2023flow,albergo2023building}. 
In practice, however, the flow field is parametrized by a neural network $v_\theta$ that is learned imperfectly, due to a combination of limited training data, network capacity, and incomplete optimization (e.g.\ early stopping, popular in the robot learning community). 
Thus the learned conditional distribution is shaped not only by the target distribution, but also by the inductive biases caused by fitting the ground-truth vector-field learning with a neural network. In particular, neural networks tend to oversmooth high-frequency features during learning \citep{rahaman2019spectral,xu2020frequency, tancik2020fourier}, causing the learned vector-field $v_\theta$ to underfit the ground-truth $v^\star$. %

We demonstrate, both across domains and in natural theoretical setups, that under this common underfitting behavior, conditional flow models \emph{convert epistemic uncertainty into apparent aleatoric variance}. That is, the approximation error or uncertainty about the dependence of the generated variable on the conditioning variable is expressed as a larger spread in the learned conditional distribution. %

Beyond scientific interest, this has actionable consequences. One can train conditional flow-matching models to fit potentially \emph{deterministic target functions} and, as long as the model does not overfit, use the spread of their resulting samples to define confidence sets containing the ground-truth function, with no conformal correction. 

Across the application domains we study, including RL policy evaluation, cell perturbation prediction, and robot policy learning, predictive sets built this way are approximately calibrated empirically in the underfitting regime, a property the variance-inflation phenomenon enables. A practitioner receives an uncertainty estimate ``for free'' alongside the point prediction.

\begin{figure}[!ht]
    \centering
    \includegraphics[width=1.0\textwidth]{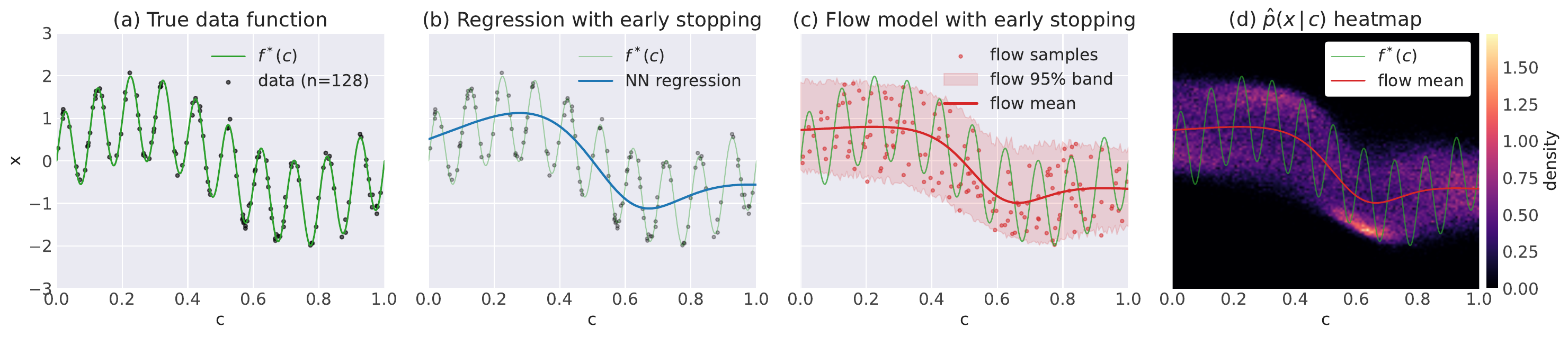}
    \caption{Flow models vs Regression during underfitting. } \label{fig:flow_vs_reg}
\end{figure}

More concretely, we make the following contributions:
\begin{itemize}[leftmargin=6.5pt]
    \item We identify a \textbf{surprisingly general} relationship between learning error in the conditional flow field and the resulting generated samples: whenever a learned CFM underfits the ground-truth velocity field as a function of the conditioning variable, its generated samples have conditional variance strictly larger than that of the true distribution, in expectation over the conditioning variable. This stands in sharp contrast to deterministic regression, where underfitting manifests only as oversmoothing of the predictive mean.
    \item We trace this variance inflation to a \textbf{single mechanism}: smoothing bias in vector-field learning. Any inductive bias that smooths the learned conditional vector field along the conditioning variable, whether early stopping, or explicit regularization, produces the same effect. We demonstrate this empirically on synthetic regression problems, controlled smoothing estimators, and high-dimensional offline RL benchmarks.
    \item We propose an idealized setting of regularized smoothing to \textbf{theoretically study} the variance inflation phenomenon. Under an equivalent-kernel approximation to regularized vector-field learning, the learned conditional distribution is a local mixture of true conditionals around the query point. This yields a clean bias--variance relation: on average, the learned conditional variance is at least the true conditional variance plus the squared bias of the learned conditional mean. This simplified setting serves as a useful explanatory model to understand the variance inflation phenomenon empirically observed in neural CFMs. We also discuss the validity of this approximation to neural CFMs and the regimes where the approximation fails.
    \item Motivated by our theoretical study, we identify another empirical consequence of the variance inflation phenomenon -- the distribution learned by CFMs is approximately calibrated in the underfitting regime. We give a \textbf{practical method} for leveraging variance inflation for predictive coverage. Across three application domains, policy evaluation, cell perturbation prediction, and behavior cloning for robotics, predictive sets built directly from CFM samples are approximately calibrated in the underfitting regime, and are competitive with or tighter than conformal-prediction baselines.
\end{itemize}

%% file: body/related.tex
\newcommand{\atsign}{\makeatletter @ \makeatother }
\section{Related Work}

\paragraph{Flow matching and conditional transport.}
Flow matching and related stochastic-interpolant and rectified-flow methods learn continuous-time transports by regressing a time-dependent vector field along a path between base noise and data \citep{lipman2023flow, liu2023flow, albergo2025stochasticinterpolantsunifyingframework}. Conditional flow matching indexes this transport by side information, yielding a model of the conditional law of an output given a conditioning variable. Conditional Flow Models (CFMs) have been applied broadly in robotics \citep{chisari2024learningroboticmanipulationpolicies, rouxel2024flowmatchingimitationlearning, zhang2024flowpolicyenablingfastrobust} and scientific modeling \citep{klein2025cellflow, miller2024flowmm}.

\paragraph{Inductive biases and generalization in diffusion and flow models.}
Several recent works study why diffusion or score-based models generalize rather than simply memorizing the training data. Recent work has attributed this behavior to several sources of bias: variance in the denoising score-matching target \citep{vastola2025generalization}, architectural bias \citep{kamb2025analytic}, and selective underfitting of the learned score \citep{song2025selective}. These papers are closest in spirit to ours, however, they focus on unconditional generation, whereas we study conditional generation and ask how underfitting with respect to the conditioning variable changes the conditional law.

\paragraph{Implicit regularization, spectral bias, and smoothing.}
Our explanation is based on smoothing bias in the learned conditional vector field. Neural networks are known to preferentially fit low-frequency or smoother structure early in training, a phenomenon often called spectral bias \citep{rahaman2019spectral,xu2020frequency}. In wide-network regimes, training dynamics can be approximated by kernel gradient flow, connecting early stopping and finite-time training to classical regularization in function space \citep{jacot2018neural}.
Related ideas appear throughout nonparametric regression, smoothing splines, kernel ridge regression, and inverse problems, where regularized estimators can be described by equivalent kernels or linear smoothing operators. %
We build on this viewpoint, but study a different object: not the bias of a point estimator, but the distribution obtained after integrating a smoothed conditional vector field.

%% file: body/prelim.tex
\section{Preliminaries}

\newcommand{\Lfm}{\cL_{\textnormal{\textsc{fm}}}}
\newcommand{\Lhatfm}{\hat\cL_{\textnormal{\textsc{fm}}}}
\newcommand{\ddt}{\frac{\rmd}{\rmd t}}
\newcommand{\Xspace}{\R^{d_{x}}}
\newcommand{\Cspace}{\R^{d_{c}}}

We study conditional generative modeling. 
Given samples from a joint distribution $p^\star$ over random variables $(X,C) \in \Xspace \times \Cspace$, where $X$ is the generated variable and $C$ is the conditioning variable, conditional generative models learn to generate new samples from the conditional distribution of $X$ given $C = c$, denoted $p^\star(\cdot \mid c)$. Flow models represent such distributions as \textbf{continuous normalizing flows} (CNF) and ODEs transporting a simple initial distribution to a terminal distribution approximating $p^\star$. Here, we consider the conditional variant.

\begin{definition}[Continuous Normalizing Flow] \citep{chen2018neural} Given a joint distribution $p_1$ over $(X,C)$, an initial distribution $p_0$ over $(X,C)$, and conditional vector field $v: [0,1] \times \Xspace \times \Xspace \to \Cspace$ mapping $(t,x,c) \to v_t(x \mid c)$, 
we say that $(v,p_0)$ is a \textbf{conditional continuous normalization flow} (CCNF) for $p_1$ if, for every $c \in \Cspace$, the random process $(X_t)_{t \in [0,1]}$ defined by the 
\begin{align}
    \eqstyle X_0 \sim p_0(\cdot \mid C = c), \quad \ddt X_t = v_t(X_t,c) \label{eqw:CNF}
\end{align}
satisfies $X_1 \sim p(\cdot \mid c)$. We let $\Phi_t^v(X_0\mid c)$ denote the deterministic function, called a flow-map, mapping $X_0$ to $X_t$ in \Cref{eqw:CNF}. 
Given $(p_0,v)$, we let $p_t^{v}(\cdot \mid c,p_0)$ denote the distribution of $X_t \mid c$ marginalized over $X_0 \sim p_0(\cdot \mid c)$ under \Cref{eqw:CNF}. Note then that, by definition, $(v,p_0)$ is a CCNF for $p_1^v(\cdot \mid \cdot, p_0)$. We suppress dependence on $p_0$ when clear from context.
\end{definition}

\textbf{Flow-matching} \citep{lipman2023flow, albergo2023building, liu2023flow} provides a generic recipe for learning (C)CNFs from data. Let $p_0$ and differentiable coefficients $\alpha_t,\beta_t:[0,1] \to \R$ satisfying $\alpha_0 = \beta_1 = 1$ and $\beta_0 = \alpha_1 = 0$ both be given. The induced (linear) \textbf{stochastic interpolant} is the random variable $X_t = \alpha_t X_0 + \beta_t X_1$, where $X_0 \sim p_0(\cdot \mid c)$ and $X_1 \sim p^\star(\cdot \mid c)$. Flow-matching is based on the following observation:
\begin{proposition}[\cite{albergo2025stochasticinterpolantsunifyingframework}]\label{prop:Lfm}Define the vector field $v^\star$ via $v_t^\star(\cdot \mid c) = \Exp[\dot X_t \mid X_t, c]$, where $\dot X_t = \dot \alpha_t X_0 + \dot \beta_t X_1$. Then, $(v^\star, p_0)$ is a CNF for $p^\star$. Moreover, $v^\star$ satisfies $v^\star \in \argmin_{v} \Lfm(v)$, where the $\Lfm$ is the flow-matching loss
\begin{align}
    \eqstyle \Lfm(v) := \Exp_{(X,C) \sim p^\star} \Exp_{t \sim \mathrm{Unif}([0,1]),X_0 \sim p_0(\cdot \mid c)}\Exp[\|v_t( X_t,C) - \dot X_t\|^2]
\end{align}
\end{proposition}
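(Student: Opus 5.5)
The proof has two parts. The first shows that $v^\star$ minimizes $\Lfm$; this is a conditional-expectation (Pythagorean) argument. The second shows that $(v^\star,p_0)$ transports $p_0(\cdot\mid c)$ to $p^\star(\cdot\mid c)$; this is a continuity-equation argument. Throughout, $c$ is fixed and all statements are made conditionally on $C=c$, so this reduces to the unconditional stochastic-interpolant result applied for each $c$.

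\textbf{Minimization.} Fix $t$ and $c$ and write $\dot X_t - v_t(X_t,c) = \bigl(\dot X_t - v^\star_t(X_t\mid c)\bigr) + \bigl(v^\star_t(X_t\mid c) - v_t(X_t,c)\bigr)$. The second summand is $\sigma(X_t,c)$-measurable. The first has zero conditional mean given $(X_t,c)$, by the definition $v^\star_t = \Exp[\dot X_t\mid X_t,c]$. The cross term therefore vanishes after conditioning on $(X_t,C)$ and applying the tower property. This gives
\begin{align*}
\Lfm(v) = \Lfm(v^\star) + \Exp\bigl[\|v_t(X_t,C)-v^\star_t(X_t\mid C)\|^2\bigr],
\end{align*}
so $v^\star\in\argmin_v \Lfm(v)$. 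This requires only $\Exp\|\dot X_t\|^2<\infty$, which we assume through second moments of $p_0$ and $p^\star$ and bounded $\dot\alpha_t,\dot\beta_t$.

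\textbf{Transport.} Let $\rho_t(\cdot\mid c)$ denote the law of the interpolant $X_t=\alpha_tX_0+\beta_tX_1$. The boundary conditions $\alpha_0=\beta_1=1$ and $\beta_0=\alpha_1=0$ give $\rho_0=p_0(\cdot\mid c)$ and $\rho_1=p^\star(\cdot\mid c)$.

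1. For a test function $\varphi\in C_c^\infty$, differentiate $\Exp[\varphi(X_t)\mid c]$ in $t$. By the chain rule and the tower property,
\begin{align*}
\ddt \Exp[\varphi(X_t)\mid c] = \Exp[\nabla\varphi(X_t)\cdot \dot X_t\mid c] = \Exp[\nabla\varphi(X_t)\cdot v^\star_t(X_t\mid c)\mid c].
\end{align*}
This says that $\rho_t$ is a weak solution of the continuity equation $\partial_t\rho_t+\nabla\cdot(\rho_t v^\star_t)=0$.

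2. The law $p^{v^\star}_t(\cdot\mid c)$ of the ODE solution in \Cref{eqw:CNF} also solves this continuity equation, with the same initial condition $p_0(\cdot\mid c)$.

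3. Uniqueness of solutions to the continuity equation then forces $p^{v^\star}_t=\rho_t$ for all $t$. In particular $X_1\sim p^\star(\cdot\mid c)$.

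The main obstacle is step 3, and the accompanying requirement that the ODE be well-posed. Both need regularity of $v^\star$, for example local Lipschitz continuity in $x$ with integrable growth, so that the flow map $\Phi^{v^\star}_t$ exists and the continuity equation has a unique solution (via Cauchy–Lipschitz, or Ambrosio's superposition/DiPerna–Lions theory). I would impose such regularity as a standing assumption, for instance Gaussian $p_0$ with $\alpha_t>0$ on $[0,1)$, which makes $v^\star$ smooth for $t<1$. The endpoint $t=1$ I would handle by a limiting argument: $X_t\to X_1$ in law as $t\to1$.
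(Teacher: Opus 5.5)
Your proposal is correct and follows essentially the same route as the source: the paper gives no proof of its own and defers to \citet{albergo2025stochasticinterpolantsunifyingframework}. That reference derives the transport equation for the interpolant's law in weak form, with $v^\star$ as the velocity; it uses characteristic functions, which are just a particular family of test functions. It then obtains the minimizer from the $L^2$-projection property of conditional expectation, exactly as in your Pythagorean decomposition. The one point to tighten is the endpoint: besides $X_t\to X_1$ in law, you need the ODE trajectories $\Phi^{v^\star}_t(X_0\mid c)$ to converge as $t\to 1$, which follows from $\mathbb{E}\int_0^1\|v^\star_t(\Phi^{v^\star}_t(X_0\mid c)\mid c)\|\,dt=\int_0^1\mathbb{E}\|v^\star_t(X_t\mid c)\|\,dt\le\int_0^1\mathbb{E}\|\dot X_t\|\,dt<\infty$ (using your identification of the laws for $t<1$, Tonelli, and Jensen).
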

Motivated by \Cref{prop:Lfm}, flow matching trains a velocity field $v_\theta = (v_{\theta;t}(x,c))$ by (approximately) minimizing an empirical approximation to the flow matching loss $\Lfm$.

%% file: body/body.tex
\section{Variance Inflation in Underfit Conditional Flows}
\label{sec:variance_inflation}

In this section, we uncover two surprising phenomena about underfit conditional flow models: First, the resulting samples have \textbf{larger conditional variance} than the ground-truth distribution (in expectation over the conditioning variable), even in nearly deterministic regression problems. Second, this variance inflation is a consequence of \textbf{smoothing bias} in the learned vector field: any mechanism that smooths the conditional vector field along the conditioning variable produces the same effect. We demonstrate the phenomenon on a synthetic 1D problem (Figure \ref{fig:flow_vs_reg}), establish that it is driven by smoothing (Figure \ref{fig:reg_trends}), and verify that it transfers to a high-dimensional benchmark used in offline RL (Figure \ref{fig:pca})

\begin{figure}[ht]
    \centering
    \begin{subfigure}[t]{0.9\textwidth}
        \centering
        \includegraphics[width=1.0\textwidth]{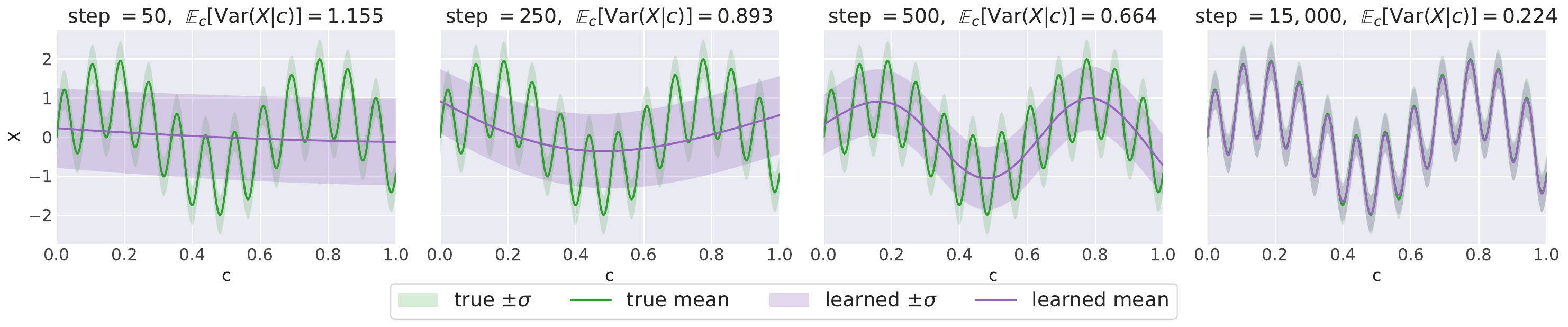}
        \caption{NN vector field parameterization: [Left to Right] Distributions learned with increasing training time}
    \end{subfigure}
    \begin{subfigure}[t]{0.9\textwidth}
        \centering
        \includegraphics[width=1.0\textwidth]{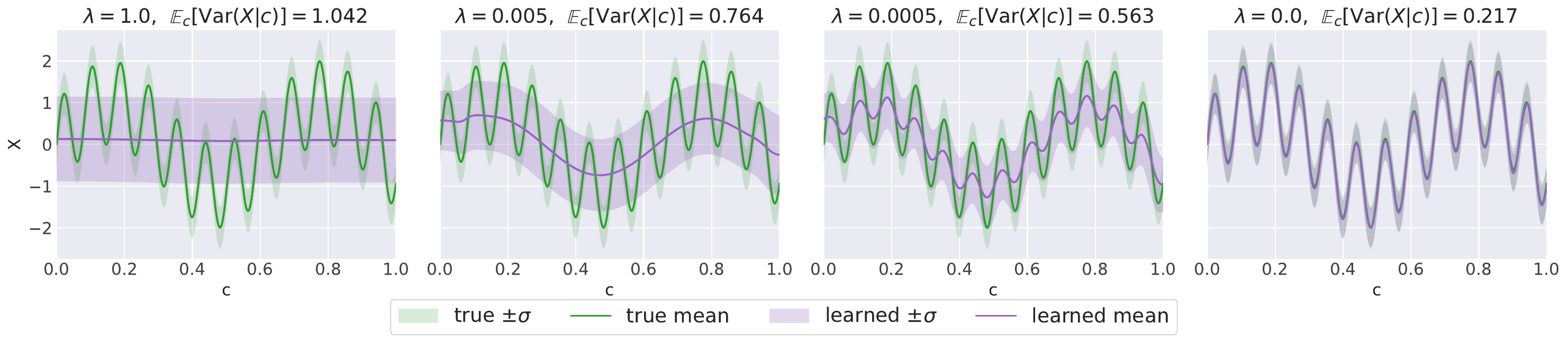}
        \caption{NN vector field parameterization with explicit regularization penalty $\lambda (\partial_c v )^2$}
    \end{subfigure}
    \begin{subfigure}[t]{0.9\textwidth}
        \centering
        \includegraphics[width=1.0\textwidth]{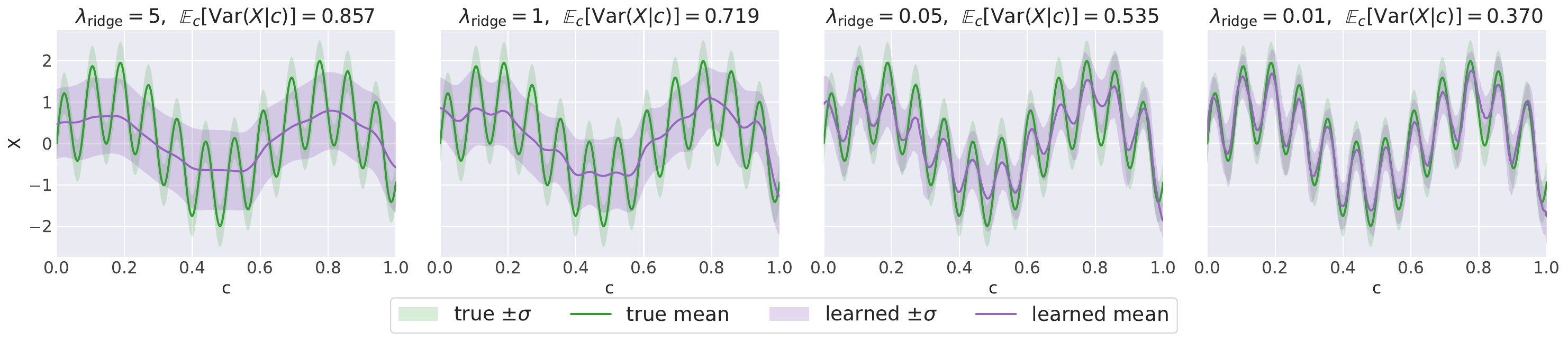}
        \caption{Vector field learning with kernel ridge regression with penalty $\lambda$}
    \end{subfigure}
    \caption{Any smoothing bias in vector-field learning induces variance inflation. In all three cases, the average learned conditional variance (reported in each panel title) decreases monotonically as smoothing decreases.} \label{fig:reg_trends}
\end{figure}

Figure \ref{fig:flow_vs_reg}, given in the introduction, illustrates a one-dimensional synthetic regression problem in which the target function contains both low and high-frequency structure. When a neural-network regressor is trained with early stopping, underfitting leads to oversmoothing of the predictive mean; the learned function captures the coarse trend but fails to recover the finer oscillations.

For a conditional flow model trained under the same regime, the learned conditional mean is similarly oversmoothed, but unlike the deterministic regressor, the flow also learns substantial additional variance, visible in the wide 95\% band around the mean. 
Even in a nearly deterministic problem, underfitting in the conditional vector field appears not only as bias in the learned mean, but also as extra variance in the generated conditional distribution. 
Panel (d) shows the full learned conditional density $\hat p(x \mid c)$, which is broad and roughly bimodal in regions where $f^\star$ (the conditional mean) has high-frequency oscillations within a smoothing neighborhood; the flow's conditional distribution effectively averages over the local oscillations of $f^\star$. This finding is not specific to this example, and Appendix \ref{sec:add_synthetic} shows that similar phenomena arise across a number of synthetic target functions.

\textbf{Why do we call this uncertainty epistemic?} The excess variance arises from unresolved dependence of the target on the conditioning variable: the learned model smooths over distinctions between nearby conditions that the true conditional mean resolves. This uncertainty is therefore reducible and model-dependent, rather than irreducible randomness in the data-generating process. As the model resolves more of this structure, for example with additional data or training, both the smoothing bias and the excess variance decrease. However, in a CFM this unresolved variation appears as spread within the learned conditional distribution, and hence as apparent aleatoric variance. From the learned distribution alone, this model-induced spread is not distinguishable from genuine aleatoric variation.

\subsection{Variance inflation tracks smoothing bias}

Figure \ref{fig:reg_trends} reveals that the phenomenon in Figure \ref{fig:flow_vs_reg} is not tied to a single estimator or training recipe. We compare three mechanisms that induce smoothing in the learned conditional vector field: early stopping of a neural network which biases training towards low-frequency components of the target \citep{yao2007early, rahaman2019spectral, tancik2020fourier}, an
\emph{explicit roughness penalty} $\lambda \,(\partial_c v)^2$ in the
conditioning variable as used in classic non-parametric regression \citep{wahba1990spline, green1994nonparametric}, and kernel ridge regression with an RKHS-norm regularization penalty $\lambda$ \citep{caponnetto2007optimal}. All these methods induce smoothing of the conditional mean function, as can be observed from the plotted learned mean at different regularization strengths (Figure \ref{fig:reg_trends}). In all three cases, this smoothing is accompanied by another consistent effect: the learned conditional variance increases as the learned mean becomes smoother. The average learned conditional variance (reported in each panel title) decreases monotonically as smoothing decreases.

Our key finding, corroborated across additional functions in Appendix \ref{sec:add_synthetic},  is that the key object is the smoothing bias in vector-field learning, not flow matching alone. Flow matching simply provides a mechanism through which this smoothing bias is expressed as conditional spread. Also note in Figure \ref{fig:reg_trends} that the induced variance is often largest precisely when the learned mean is most biased.

\begin{figure}[!ht]
    \centering
    \begin{subfigure}[t]{0.9\textwidth}
        \centering
        \includegraphics[width=1.0\textwidth]{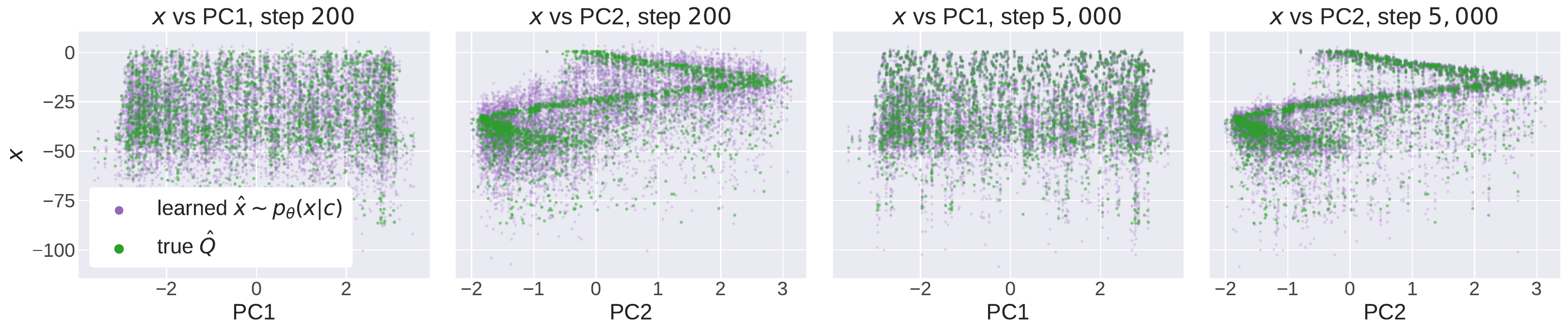}
        \caption{Cube Single}
    \end{subfigure}
    \begin{subfigure}[t]{0.9\textwidth}
        \centering
        \includegraphics[width=1.0\textwidth]{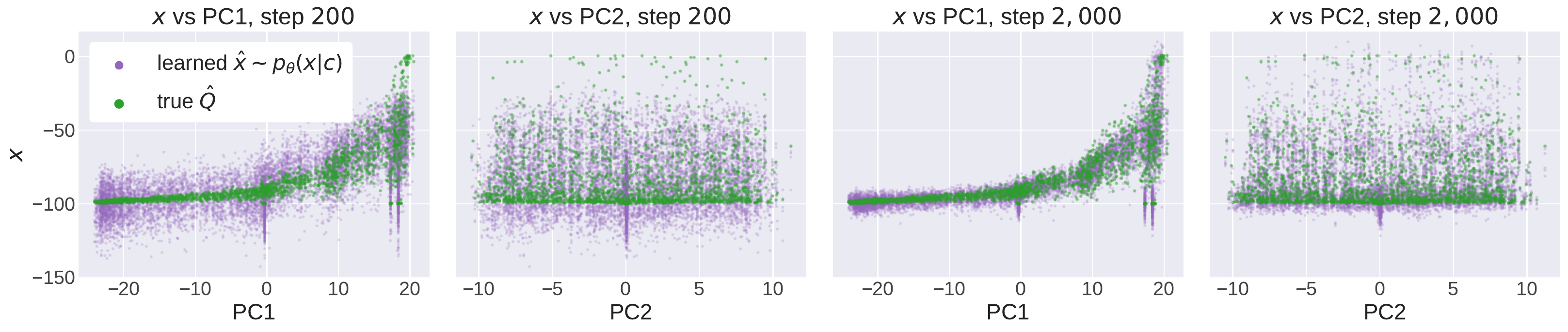}
        \caption{Antmaze}
    \end{subfigure}
    \caption{Policy Evaluation: learned (purple) and true (green) conditional distributions against the first two principal components of the state-action conditioning} \label{fig:pca} %
\end{figure}

\subsection{Beyond one-dimensional synthetic regression: Monte-Carlo Policy Evaluation} \label{sec:policy_eval_pca} 

We next test whether the variance-inflation phenomenon persists when the conditioning variable is high-dimensional.
For this, we use a Policy Evaluation task, where we train a CFM to predict $Q$-values conditioned on state-action pairs $c=(s,a)$, using Monte-Carlo targets.
We evaluate on two OGBench \citep{park2025ogbenchbenchmarkingofflinegoalconditioned} environments, \texttt{Cube-Single} and \texttt{Antmaze-large}, where the conditioning variable is high-dimensional while the target remains scalar.

Figure \ref{fig:pca} visualizes the learned and empirical distributions against the first two principal components of the conditioning variable. At early training steps, the learned conditional distribution is noticeably more diffuse than the empirical target, but still 'covers' the target distribution samples; at later checkpoints, the learned samples contract and align more closely with the target scatter. This mirrors the synthetic experiments: underfitting in the conditional vector field leads to excess spread, which decreases as training proceeds.

\section{An Idealized Model for Variance Inflation} \label{sec:theory}

\begin{figure}[!ht]
    \centering
    \includegraphics[width=1.0\textwidth]{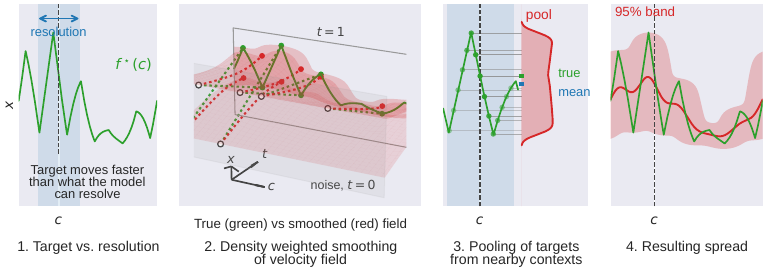}
    \caption{Underfitting of the velocity field leads to local mixing of conditional distributions, not just smoothing of the mean.} \label{fig:smoothing_local_mixture}
\end{figure}

Due to finite data, capacity constraints on the learned network, and optimization choices (e.g. early stopping), $v_\theta$ is an imperfect approximation of $v^\star$. Because the precise form of this approximation is difficult to characterize for neural function approximators, we study the following idealized population-level \textbf{regularized flow-matching problem}.

\begin{align}
v_\lambda \in \arg\min_{v\in\mathcal V} \Bigg\{ \mathbb E\left[\left\|v_T(C,X_T)-\dot X_T\right\|^2\right] +
\lambda\mathbb E\left[\left\|(L_c v_T)(C,X_T)\right\|^2\right]\Bigg\}.
\label{eq:reg_fm}
\end{align}

Above, operator $L_c$ is a linear operator inducing smoothing in the \emph{conditioning variable}. Specifically, for each fixed $(t, x)$, $L_c$ acts on the function $c \mapsto v_t(c, x)$ while treating $t$ and $x$ as fixed. This matches the regime of interest: the learned vector field may depend flexibly on the flow state $x$ and on time $t$, but its dependence on the conditioning variable $c$ is smoothed. Concrete examples include first-derivative penalties $L_c = \nabla_c$, higher-order smoothness penalties $L_c = \nabla_c^k$, and RKHS-type penalties on $c \mapsto v_t(c, x)$ \citep{wahba1990spline}. 
It is well-known that such regularized regression problems have equivalent kernel representations \citep{scholkopf2001generalized}. This objective serves as an idealized model of neural-network smoothing bias; related regularized models have been studied for conditional diffusion models \citep{pfrommer2025conditionaldiffusionmodelactually}. Appendix \ref{app:smoothing} provides further motivation, and Appendix \ref{app:ek_validity} numerically evaluates our equivalent-kernel approximation (Equations~\eqref{eq:slice_solution} and~\eqref{eq:kernel_factorize}) for neural CFMs on synthetic tasks.

Since $L_c$ only induces smoothing in the conditioning variable $c$ and treats $t$ and $x$ as fixed, for any fixed $(t,x)$, up to a multiplicative constant independent of $g$, the regularized least-squares problem for the slice $g(c):=v_t(c,x)$ is
\begin{align}
\hat g_{t,x}\in \arg\min_g
\left\{
\int_{\mathcal C}\left(\|g(c)-v^\star_t(c,x)\|^2 + \lambda\,\|(L_cg)(c)\|^2\right) \,p^\star_t(x\mid c)P_C(dc)\right\}.
\end{align}
For Tikhonov-type penalties, the solution is a linear smoother in the regression target, with some equivalent kernel $K_{\lambda,t,x}$ depending on $(t,x)$ \citep{silverman1984spline} :
\begin{align}
v_{\lambda,t}(c, x) = \hat g_{t,x}(c)=\int K_{\lambda,t,x}(c,c')\,v^\star_t(c',x)\,p^\star_t(x\mid c')P_C(dc'). \label{eq:slice_solution}
\end{align}
For the constant-preserving smoothers considered here, the equivalent kernel is normalized so that \(
\int K_{\lambda,t,x}(c,c')\,p^\star_t(x\mid c')P_C(dc')=1
\). The factor $p^\star_t(x\mid c')$ is the design density of the slice problem, not a separate modeling choice. The starting point of our analysis is a structural assumption on the kernel $K_{\lambda,t,x}$.

\begin{restatable}[Factorization of the kernel]{assumption}{kernelFactorization}
\label{assum:kernel_factorization}
The exact kernel $K_{\lambda,t,x}$ depends on $(t,x)$ through the design density $\rho_{t,x}(c):=p^\star_t(x\mid c)p_C(c)$. We adopt the standard equivalent-kernel approximation that assumes the spatial shape of the kernel depends weakly on $(t,x)$:
\begin{align}
K_{\lambda,t,x}(c,c') = a_{t,x}(c)\,K'_\lambda(c,c'),
\label{eq:kernel_factorize}
\end{align}
for some normalization $a_{t,x}(c)$ and a kernel $K'_\lambda$ that depends on $\lambda$ but not on $(t,x)$.
\end{restatable}

Appendix~\ref{app:smoothing} motivates this assumption and discusses its regime of validity (small $t$, large path variance, or locally slowly varying conditional mean) in detail. Under Assumption~\ref{assum:kernel_factorization}, together with mild moment and regularity conditions (Assumptions~\ref{assum:zero_mean}, \ref{assum:path_regularity}, and~\ref{assum:symmetric_smoothing} in Appendix~\ref{app:smoothing}), the conditional distribution generated by $v_\lambda$ satisfies the following bias--variance relation.

\begin{theorem}[Excess Variance and Covariance]
\label{thm:main}
Suppose Assumption~\ref{assum:kernel_factorization} and Assumptions~\ref{assum:zero_mean}, \ref{assum:path_regularity}, and \ref{assum:symmetric_smoothing} hold. Let $\widehat X_1$ denote the endpoint sample generated by integrating
$v_\lambda$ at conditioning $C$, and let $X_1$ denote the true target.
\begin{align}
\mathbb{E}_C\!\left[
\mathrm{Cov}(\widehat X_1 \mid C)
\right]
\succeq
\mathbb{E}_C\!\left[
\mathrm{Cov}(X_1 \mid C)
\right]
+
\mathbb{E}_C\!\left[
\left( \mathbb{E}[\widehat X_1 \mid C]
-
\mathbb{E}[X_1 \mid C]\right)^{\otimes 2}
\right].
\label{eq:main_bound_vector_trace}
\end{align}
\end{theorem}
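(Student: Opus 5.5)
The plan is to show that, under Assumption~\ref{assum:kernel_factorization}, the learned conditional law is a \emph{local mixture} of true conditionals. Concretely, we aim for
\[
\hat p(\cdot \mid c) = \int w(c,c')\, p^\star(\cdot\mid c')\,P_C(dc'),
\]
with a nonnegative weight $w(c,\cdot)$ integrating to one and independent of $(t,x)$. We then combine the law of total variance with a symmetry/averaging argument over $C$.

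\textbf{Step 1: mixture representation.} Substituting the factorization \eqref{eq:kernel_factorize} into \eqref{eq:slice_solution} gives
\[
v_{\lambda,t}(c,x) = a_{t,x}(c)\int K'_\lambda(c,c')\, v^\star_t(c',x)\, p^\star_t(x\mid c')\,P_C(dc').
\]
The normalization forces
\[
a_{t,x}(c) = \Big(\int K'_\lambda(c,c')\,p^\star_t(x\mid c')\,P_C(dc')\Big)^{-1}.
\]
Define the mixture path $\bar p_t(x\mid c) := \int K'_\lambda(c,c') p^\star_t(x\mid c')P_C(dc')$. Then $v_{\lambda,t}(c,x)$ is exactly the current-weighted average
\[
v_{\lambda,t}(c,x) = \frac{\int K'_\lambda\, v^\star_t\, p^\star_t}{\bar p_t}.
\]
This is precisely the velocity of a mixture of flows. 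Since each $p^\star_t(\cdot\mid c')$ solves the continuity equation with velocity $v^\star_t(c',\cdot)$, the mixture $\bar p_t(\cdot\mid c)$ solves the continuity equation with velocity $v_{\lambda,t}(c,\cdot)$. Its initial condition is $\bar p_0 = p_0$, provided $p_0$ does not depend on $c$; this is what I expect Assumption~\ref{assum:zero_mean} to supply. Uniqueness of solutions to the continuity equation under the regularity of Assumption~\ref{assum:path_regularity} then gives $\widehat X_1\mid C=c \sim \bar p_1(\cdot\mid c)$. This step needs $K'_\lambda\ge 0$ with $\int K'_\lambda(c,c')P_C(dc')=1$, which I would extract from Assumption~\ref{assum:symmetric_smoothing} (a nonnegative, constant-preserving, symmetric smoother).

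\textbf{Step 2: law of total covariance.} Introduce an auxiliary $C'\sim w(c,\cdot)P_C$ given $C=c$, so that $\widehat X_1 \mid (C, C') \sim p^\star(\cdot\mid C')$. Write $\mu(c')=\mathbb E[X_1\mid C=c']$ and $\Sigma(c')=\mathrm{Cov}(X_1\mid C=c')$. The law of total covariance gives
\[
\mathrm{Cov}(\widehat X_1\mid C=c) = \mathbb E[\Sigma(C')\mid C=c] + \mathrm{Cov}(\mu(C')\mid C=c).
\]
Take $\mathbb E_C$. The first term becomes $\mathbb E_{C'}[\Sigma(C')]$. Symmetry of the smoother with respect to $P_C$ (a doubly stochastic kernel) makes the marginal of $C'$ equal to $P_C$, so this term equals $\mathbb E_C[\mathrm{Cov}(X_1\mid C)]$.

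\textbf{Step 3: bounding the bias term.} It remains to show
\[
\mathbb E_C\,\mathrm{Cov}(\mu(C')\mid C) \succeq \mathbb E_C[(\hat\mu(C)-\mu(C))^{\otimes2}], \qquad \hat\mu(c)=\mathbb E[\mu(C')\mid C=c].
\]
Expand the left side as $\mathbb E[\mu(C')^{\otimes 2}] - \mathbb E[\hat\mu(C)^{\otimes2}]$. Using $C'\stackrel{d}{=}C$, this equals $\mathbb E[\mu^{\otimes2}]-\mathbb E[\hat\mu^{\otimes2}]$. The right side is $\mathbb E[\mu^{\otimes 2}] - 2\,\mathrm{sym}\,\mathbb E[\hat\mu\mu^\top] + \mathbb E[\hat\mu^{\otimes2}]$. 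So the claim reduces to
\[
\mathrm{sym}\,\mathbb E[\hat\mu\mu^\top] \succeq \mathbb E[\hat\mu^{\otimes 2}],
\]
i.e.\ $\langle \mu, W\mu\rangle \succeq \langle W\mu, W\mu\rangle$ for the Markov operator $W$ in $L^2(P_C)$. This holds if $W$ is self-adjoint with spectrum in $[0,1]$, since then $W-W^2\succeq0$. I would take this as the content of Assumption~\ref{assum:symmetric_smoothing}, which is natural for Tikhonov smoothers of the form $(I+\lambda L^*L)^{-1}$.

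\textbf{Expected obstacle.} The delicate point is Step 1. It requires justifying that the continuity-equation solution is unique, so that the mixture really is the law generated by $v_\lambda$, and that the normalization $a_{t,x}$ is compatible with a $(t,x)$-independent weight $K'_\lambda$. Steps 2–3 are linear-algebraic once positivity and self-adjointness of the smoother are in hand.
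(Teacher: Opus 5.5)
Your proposal is correct and follows essentially the same route as the paper. The factorized kernel turns $v_\lambda$ into a density-weighted average, and the corresponding mixture path solves the continuity equation, which gives the local-mixture endpoint law. The law of total covariance together with stationarity of $W_\lambda$ handles the noise term, and the bias term reduces to nonnegativity of $\langle g, S_\lambda(I-S_\lambda)g\rangle$. You write that last step in matrix form, while the paper tests along each direction $g_u=u^\top\mu^\star$; these are the same argument.

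One small point: the bound of $1$ on the spectrum is not part of the assumption on $S_\lambda$. It follows because a Markov kernel that preserves $P_C$ is a contraction on $L^2(P_C)$ by Jensen's inequality, which is how the paper obtains it.
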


The proof proceeds by showing that \Cref{assum:kernel_factorization} reduces the regularized flow-matching solution to a density-weighted local average of the true velocity field, which in turn implies that the endpoint distribution generated by the learned velocity field is a local mixture of the true conditional distributions. Taking moments of this mixture and using standard assumptions on the smoothing kernel (Assumptions \ref{assum:path_regularity}, \ref{assum:symmetric_smoothing}) gives \Cref{thm:main}. The full proof is deferred to Appendix~\ref{app:smoothing}.

The comparison above is in the Loewner order. Equivalently, in every direction \(u\), the average learned conditional variance of \(u^\top\widehat X_1\) is at least the corresponding true conditional variance plus the average squared directional bias of the learned conditional mean.
In the scalar case, Theorem~\ref{thm:main} says that, on average over the conditioning variable, the learned conditional variance is at least the true conditional variance plus the squared bias of the learned conditional mean. 
Thus, any smoothing bias incurred in fitting the conditional mean is reflected as additional spread in the generated samples. Figure \ref{fig:bias_variance_corr} plots squared bias with excess variance for vector fields trained using different smoothing estimators on multiple one-dimensional synthetic distributions, and the figure reflects the relation predicted in \Cref{eq:main_bound_vector_trace}.

\begin{figure}[ht]
    \centering
        \includegraphics[width=0.8\textwidth]{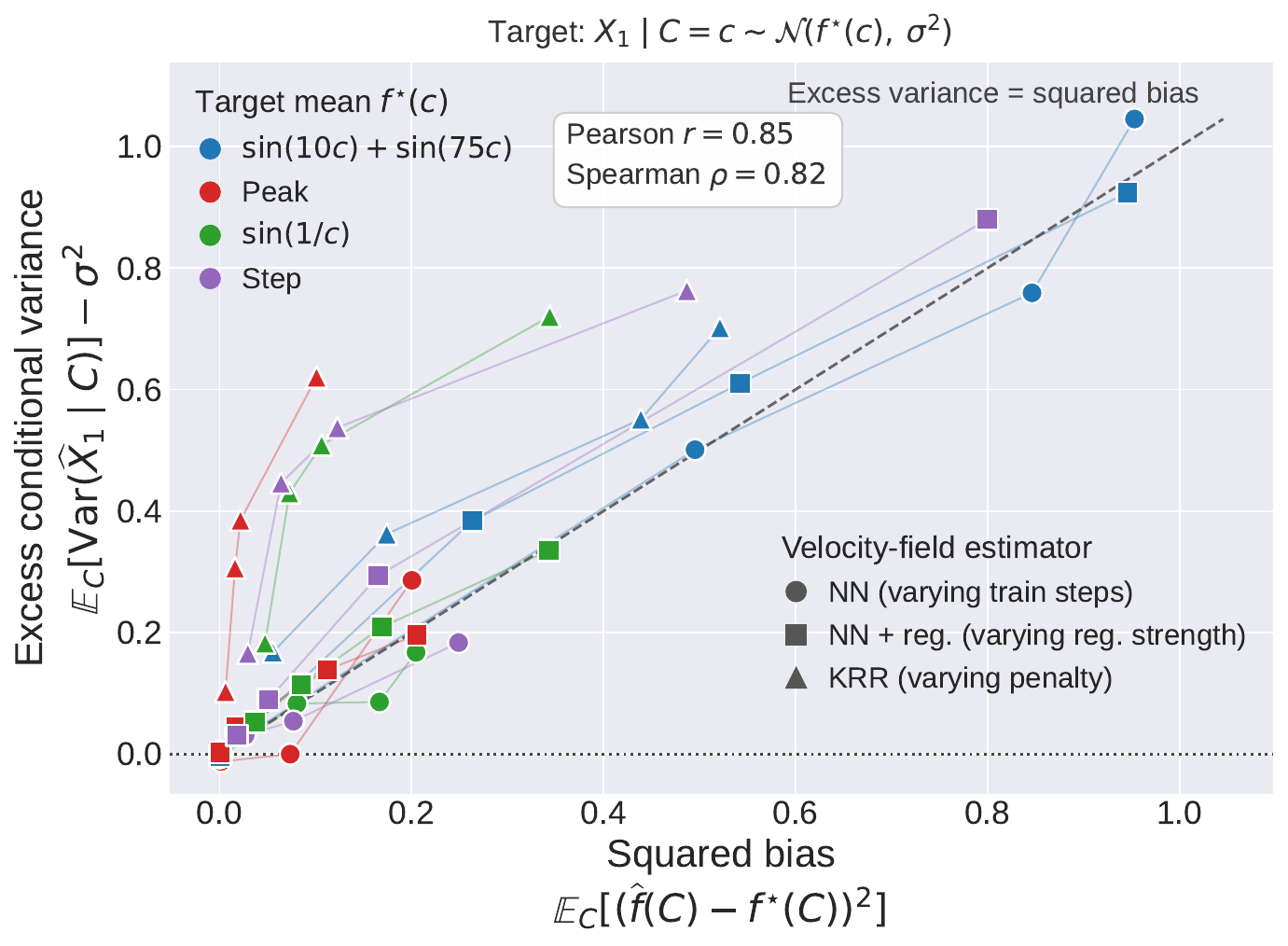}
        \caption{\textbf{Excess variance tracks squared bias.}
        Colors identify the mean functions $f^\star$ of the synthetic target
        distributions $X_1 \mid C=c \sim \mathcal{N}(f^\star(c),\sigma^2)$,
        and marker shapes identify vector-field estimators.
        For neural networks, we plot results at different training checkpoints.
        For roughness-penalized neural networks and kernel ridge regression,
        we vary the regularization strength.
        Lines connect results for the same target--estimator pair.
        Squared bias and excess conditional variance are both averaged over $C$.
        The corresponding conditional distributions are shown in Fig. \ref{fig:reg_trends} (blue)
        and Figs. \ref{fig:sin_inverse}--\ref{fig:step} (green, red, and purple, respectively).}
        \label{fig:bias_variance_corr}
        
\end{figure}

\section{Coverage Properties of Conditional Flow Models} \label{sec:coverage}

We now evaluate whether the variance induced by conditional flow models yields meaningful predictive coverage on real-world tasks. The heatmap in panel (d) of \Cref{fig:flow_vs_reg} suggests that the excess variance of an underfit CFM has structure: it assigns more mass to regions of higher density under the true distribution. Motivated by this, we aim to test a practical consequence of the variance-inflation phenomenon: whether the learned conditional distributions assign appropriate mass to the true outcomes, even in regimes where the conditional mean may be biased. We consider three application domains: Monte-Carlo policy evaluation, cell perturbation prediction, and behavior cloning. We focus on the \emph{underfitting regime}, the regime where the smoothing bias of neural networks holds.

\textbf{Coverage and interval definitions.}
For a CFM $\hat p_\theta(x\mid c)$ and nominal level $\alpha\in(0,1)$, let $C_{\theta,\alpha}(c)$ denote a predictive set containing approximately $\alpha$ probability mass. We measure empirical coverage on $\mathcal D=\{(x_i,c_i)\}_{i=1}^{|\mathcal D|}$ as
\(
\widehat{\mathrm{Cov}}_\alpha(\mathcal D)
=
\frac{1}{|\mathcal D|}
\sum_{i=1}^{|\mathcal D|}
\mathbf 1\{x_i\in C_{\theta,\alpha}(c_i)\}.
\)
We report coverage at $\alpha\in\{0.75,0.90,0.95\}$. For scalar outputs (policy evaluation), we construct $C_{\theta,\alpha}(c)$ as the central $\alpha$-mass interval of the learned conditional distribution; for high-dimensional outputs (cell-pertubation, behavior cloning), we use sample-based KNN predictive sets \citep{wang2022probabilisticconformalpredictionusing}. Full constructions are given in Appendix~\ref{app:coverage_details}.

The question investigated through these experiments is: \textit{Do predictive sets constructed from CFMs as above provide meaningful coverage?} We experiment on the following tasks:

\textbf{i) Monte-Carlo Policy Evaluation.}
We use three OGBench environments \citep{park2025ogbenchbenchmarkingofflinegoalconditioned}. The CFM models the distribution of scalar Monte-Carlo returns $x$ conditioned on the state-action pair $c=(s,a)$. Figure~\ref{fig:coverage_antmaze} reports results for \texttt{Antmaze-large}; \texttt{Cube-Single} and \texttt{Humanoidmaze-medium} are deferred to Appendix~\ref{app:additional_coverage_exp}.

\textbf{ii) Single-cell perturbation prediction.}
We evaluate on Norman, PBMC cytokines, and SciPlex3 from scPerturb \citep{peidli2024scperturb} datasets, using the CellFlow architecture \citep{klein2025cellflow}. The CFM models the distribution of perturbed-cell states $x$, with control cells as the source distribution and a pretrained perturbation embedding as the condition $c$. Figure~\ref{fig:coverage_bio} reports Norman results; PBMC and SciPlex3 are deferred to Appendix~\ref{app:additional_coverage_exp}.

\textbf{iii) Behavior cloning.}
We evaluate a flow-matching policy on PushT \citep{chi2023diffusion}. The CFM models the distribution of short action chunks $x$ conditioned on the robot's observation history $c$.

Figures~\ref{fig:coverage_antmaze}--\ref{fig:coverage_bc} and \ref{fig:coverage_cube}--\ref{fig:coverage_sciplex} report our results. We plot CFM training and validation losses, empirical coverage, predictive-set sizes, and utility metrics. We include the loss curves to identify when the trained model begins to overfit. In particular, they indicate the training regime over which we expect coverage to remain reliable: from the beginning of training until the train and validation loss curves diverge (or validation improvement saturates). This is shaded in the figures. In practice, models are generally not trained far into the overfitting regime; techniques such as early stopping, regularization, and large-data training are used to prevent this. Here, however, we deliberately train for longer to demonstrate that coverage remains reliable in the underfitting regime and changes thereafter. For scalar-output tasks we additionally plot pointwise predictive variance against squared prediction error. Set sizes are reported as interval widths for scalar outputs and KNN radii for high-dimensional outputs, while validation log-likelihood (or task success rate for behavior cloning) is tracked to assert that high coverage is not obtained from a degenerate predictive distribution.

\paragraph{Baselines.}
We compare against six baselines. For scalar outputs, \textbf{det + SC} applies split conformal prediction to a deterministic regressor with the same conditioning backbone as the CFM, while \textbf{det + CQR} uses conformalized quantile regression to obtain locally adaptive intervals. \textbf{Conf CFM} conformalizes the raw CFM predictive sets, whereas \textbf{CFM mean + SC} applies split conformal prediction to the CFM conditional mean. We additionally compare against \textbf{Deep Evidential Regression (DER)} \citep{amini2020deep} for scalar outputs and \textbf{Heteroscedastic Gaussian Regression (HGR)}, using a diagonal Gaussian model for high-dimensional outputs. DER and HGR are reported for fully trained models. Complete experimental details are given in Appendix~\ref{app:coverage_details}. The takeaways are as follows:

\begin{figure}[!ht]
    \centering
    \includegraphics[width=1.0\textwidth]{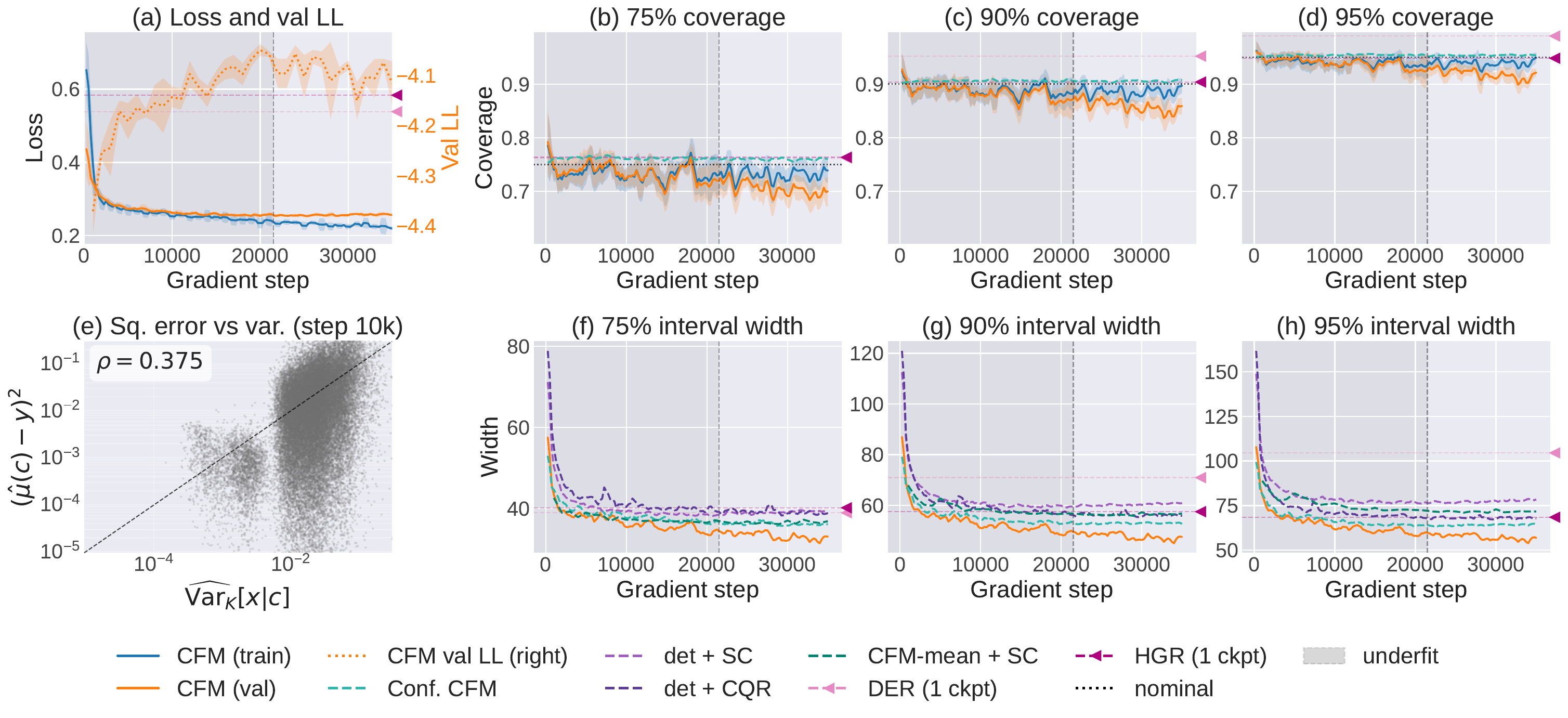}
    \caption{Policy Evaluation: Antmaze-Large} \label{fig:coverage_antmaze}
\end{figure}

\begin{figure}[!ht]
        \centering
        \includegraphics[width=1.0\textwidth]{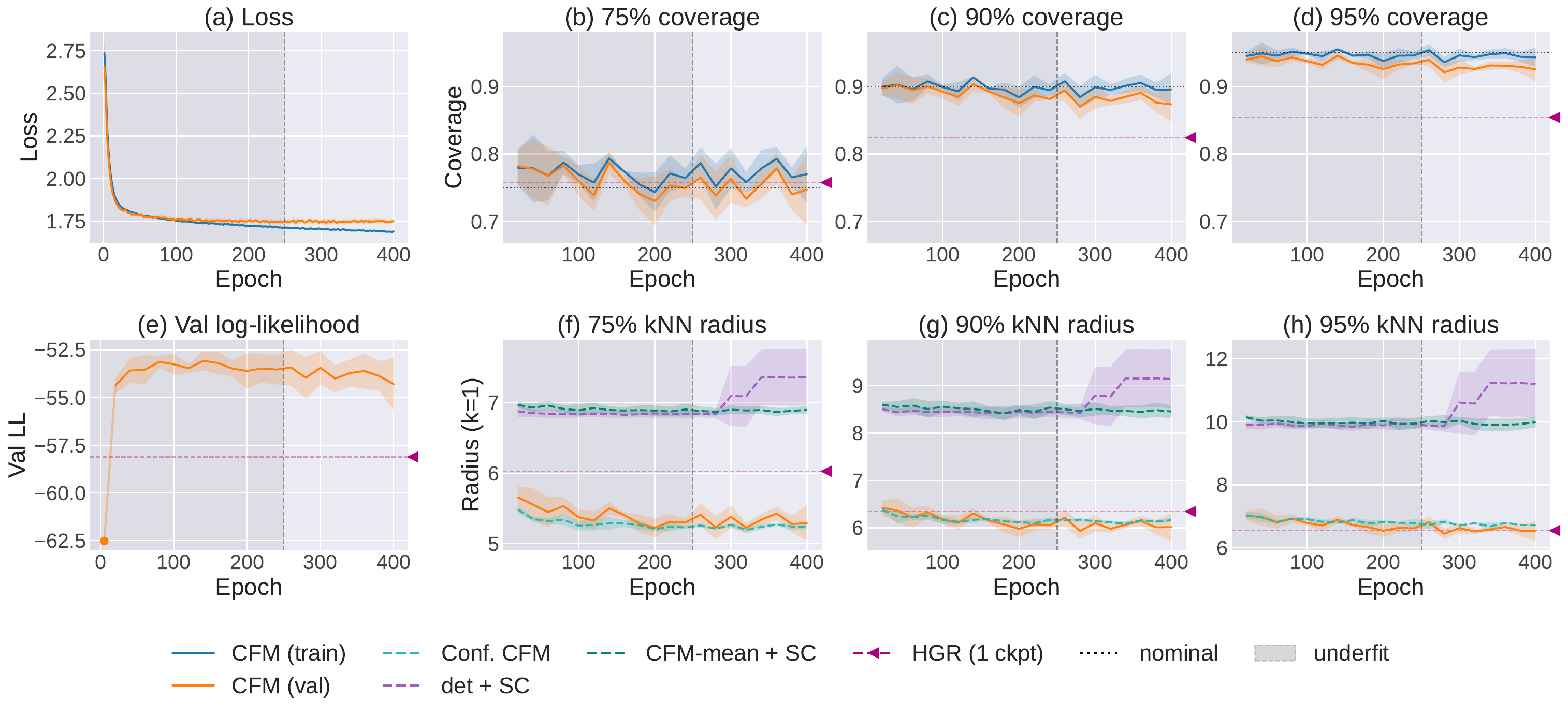}
    \caption{Cell-pertubation dataset: Norman} \label{fig:coverage_bio}
\end{figure}

\begin{figure}[!ht]
    \centering
    \includegraphics[width=1.0\textwidth]{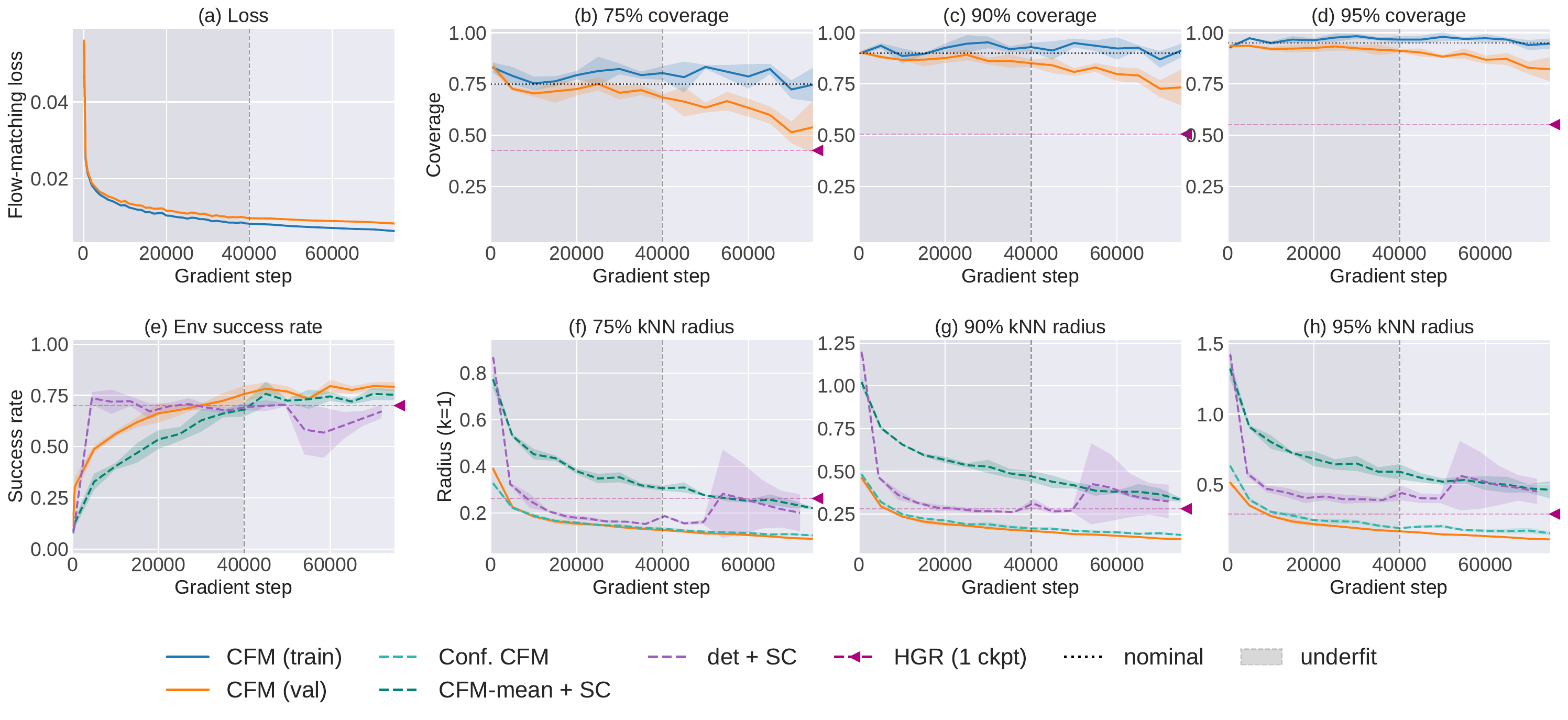}
    \caption{Behavioral Cloning} \label{fig:coverage_bc}
\end{figure}

\begin{enumerate}[leftmargin=6.5pt]
    \item \textbf{Coverage in the underfitting regime is approximately calibrated.} There are two takeaways here: i) Up till the point CFM train and val loss curves diverge (underfitting regime), coverage of CFM on val data stays approx. $\geq$ nominal level ($\widehat{\mathrm{Cov}}_\alpha(\mathcal D) \gtrsim \alpha$), and dips afterwards. ii) Interestingly, coverage of CFMs on the training data stays $\geq$ nominal throughout training. This is because the training data is always 'in-distribution' for the model. Once the model starts to overfit, the training data stays in-distribution w.r.t to the model's learned distribution, whereas the validation does not. %
    \item \textbf{Variance tracks pointwise prediction error.} Examples with larger flow variance also tend to have larger squared prediction error for the flow mean, with positive Spearman correlation across all three environments with one-dimensional outputs (panel (e) in Figures \ref{fig:coverage_antmaze}, \ref{fig:coverage_cube}, \ref{fig:coverage_hmmaze}). The variance is larger precisely where the model's mean is less accurate, consistent with the interpretation that underfitting is expressed as conditional sample variance.
    \item \textbf{CFM based coverage sets are tight.} Throughout the regime of interest (before overfitting sets in): i) CFM-based coverage widths / KNN-radii are tighter than split-conformals. This is because split-conformals are not locally adaptive, and also shows that CFM-based coverage sets are. ii) CFM coverage set sizes are competitive with CQR and other baselines that produce close-to nominal coverage. iii) CFM and conformalized CFM sets have very similar sizes, showing that additional conformalization changes the sets only minimally.
\end{enumerate}

\FloatBarrier

%% file: body/conclusion.tex
\section{Conclusion and Limitations}

We identify a surprising inductive bias of conditional flow models: smoothing bias in vector-field learning causes unresolved dependence on the conditioning variable to appear as excess predictive variance, converting epistemic uncertainty into apparent aleatoric variance. We observe this phenomenon across synthetic and real-world tasks, and provide an idealized model to study it. We empirically find that this excess variance has approximate coverage properties, and, \textit{at any point} in training before overfitting, predictive sets constructed directly from CFM samples achieve approximately nominal coverage without explicit calibration training.

\paragraph{Limitations.}
Our theory analyzes an idealized population-level regularized smoothing model and does not establish the variance-inflation result for arbitrary neural-network training. Although the theoretical result characterizes the variance inflation phenomenon, it does not imply formal predictive-coverage guarantees. The coverage properties studied here are empirical, and developing stronger theoretical coverage guarantees is left for future work.

\newpage

%% file: body/app_smoothing.tex
\newpage

\section{Theoretical Model of Variance Inflation}
\label{app:smoothing}

This section provides the formal setup and proof of \Cref{thm:main} from \Cref{sec:theory}. The argument proceeds in three steps.
\begin{itemize}
    \item First, an equivalent-kernel approximation (\Cref{sec:eq_kernel}) reduces the regularized flow-matching solution to a density-weighted local average of the true velocity field.
    \item Second, this representation implies that the endpoint distribution generated by the regularized flow is a local mixture of true conditionals (\Cref{lem:local_mixture}).
    \item Third, taking moments of this mixture gives an explicit decomposition of the learned covariance into true conditional covariance and unresolved variation of the conditional mean. A short operator argument then shows that, on average, this unresolved variation dominates the squared smoothing bias, yielding \Cref{thm:main}.
\end{itemize}

\paragraph{Setup.}
We restate the regularized flow-matching problem from \Cref{sec:theory}: we study the population-level optimum
\begin{align}
v_\lambda \in \arg\min_{v \in \mathcal V}
\left\{
\mathbb{E}\!\left[\| v_T(C, X_T) - \dot X_T \|^2\right]
+ \lambda\, \mathbb{E}\!\left[\| (L_c v_T)(C, X_T) \|^2\right]
\right\},
\label{eq:reg_fm_appendix}
\end{align}
as a proxy for the inductive bias of a learned conditional flow. Here $\mathcal V$ is the chosen vector-field class, $\lambda \geq 0$ controls the strength of regularization, and $L_c$ is a linear operator acting only on the conditioning variable: for each fixed $(t, x)$, $L_c$ acts on the function $c \mapsto v_t(c, x)$ while treating $t$ and $x$ as fixed. Examples include derivative penalties such as $L_c = \nabla_c$, higher-order smoothness penalties, and RKHS-type penalties. This captures the regime of interest: the learned vector field may depend flexibly on the flow state $x$ and time $t$, but its dependence on the conditioning variable $c$ is smoothed.

\paragraph{Justification for studying this setup:}
Prior work has established that neural networks exhibit a spectral or smoothness bias during training, preferentially fitting smoother and lower-frequency components before finer-scale structure \citep{daniely2017sgdlearnsconjugatekernel, jacot2018neural, rahaman2019spectral, xu2020frequency, Lee_2020}. Equation~\eqref{eq:reg_fm} captures this behavior in an idealized population-level model. Closely related, \citet{pfrommer2025conditionaldiffusionmodelactually} study a special case with \(L_c=\nabla_c^2\) as a model of smoothness bias in conditional diffusion models. More broadly, underfitting of learned velocity and score fields has been studied extensively in the diffusion and flow-matching literature. Prior work has linked smoothness and selective underfitting of these fields to generalization, interpolation, and hallucination in generative models \citep{yoon2023diffusion,scarvelis2023closed,aithal2024understanding,song2025selective}. Finally, in Appendix~\ref{app:ek_validity} we numerically evaluate how well the factorized equivalent-kernel representation in Equations~\eqref{eq:slice_solution} and~\eqref{eq:kernel_factorize} approximates trained neural CFMs on synthetic one-dimensional regression tasks.

For brevity, throughout this appendix we write $\hat v := v_\lambda$ for the regularized population solution.

For any vector field $v$, the endpoint distribution generated at condition $c$ is the law of $\Phi_1^v(c,Z)$ with $Z\sim \rho_0$. We write its conditional mean and covariance as
\begin{align}
m_v(c) := \mathbb E_Z[\Phi_1^v(c,Z)], \qquad
\Sigma_v(c) := \mathrm{Cov}_Z(\Phi_1^v(c,Z)).
\end{align}
When the generated variable is scalar, we write $s_v^2(c):=\Sigma_v(c)$. In the unregularized realizable population limit, $\Sigma_v(c)$ matches the true conditional covariance of $X_1\mid C=c$. The central question of this section is what happens to this endpoint covariance when the vector field is regularized or underfit in the $c$-direction. We show that the learned endpoint distribution can have substantially larger spread than the true conditional distribution, with the excess spread reflecting unresolved variation of the true conditional mean over the model's effective smoothing neighborhood.

\begin{assumption}[Target moments]
\label{assum:zero_mean}
The target satisfies
\begin{align}
X_1 = \mu^\star(C)+\xi, \qquad
\mathbb E[\xi\mid C]=0, \qquad
\mathrm{Cov}(\xi\mid C)=\Sigma^\star(C),
\end{align}
where $X_1\in\mathbb R^{d_x}$ and all second moments are finite. The base variable $X_0\sim \rho_0$ is independent of $(C,\xi)$, and $p^\star_0(\cdot\mid c)=\rho_0$ does not depend on $c$. In the scalar case $d_x=1$, we write $\mu^\star=f^\star$ and $\Sigma^\star=\sigma^2$.
\end{assumption}

The above assumption simply means that $L_2$ regression is well-specified and second moments are finite. We make no parametric assumption about $\xi$ beyond its first two conditional moments.

\subsection{Assumptions: Equivalent-kernel model of regularized flow matching, and regularity conditions}
\label{sec:eq_kernel}

The regularized objective in \eqref{eq:reg_fm_appendix} penalizes variation of $c \mapsto v_t(c,x)$. For each fixed $(t,x)$, this is a supervised regression problem in $c$ whose population design distribution is the conditional law of $C\mid X_t=x$, i.e., proportional to $p^\star_t(x\mid c')P_C(dc')$. Classical equivalent-kernel calculations for regularized least squares show that the fitted value at $c$ is a local average of the regression target \citep{silverman1984spline}, which gives \Cref{eq:slice_solution}. We now simplify this equivalent-kernel representation into the form used in our analysis.

\paragraph{Simplifying the Kernel.}
We first restate \Cref{assum:kernel_factorization} for convenience.

\kernelFactorization*

We now show that \Cref{assum:kernel_factorization} implies \Cref{assum:density_kernel}, and then motivate \Cref{assum:kernel_factorization} in the light of prior literature.

\begin{lemma}[Density-weighted equivalent kernel]
\label{assum:density_kernel}
For each regularization level $\lambda$, there exists a Markov kernel $W_\lambda(dc' \mid c)$ on the conditioning space such that the regularized population velocity in~\eqref{eq:reg_fm} satisfies
\begin{align}
\hat v_t(c, x) \;=\;
\frac{\int v^\star_t(c', x)\, p^\star_t(x \mid c')\, W_\lambda(dc' \mid c)}
     {\int p^\star_t(x \mid c')\, W_\lambda(dc' \mid c)}.
\label{eq:eq_kernel}
\end{align}
\end{lemma}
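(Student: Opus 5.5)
We derive the lemma from the equivalent-kernel representation \eqref{eq:slice_solution} combined with the factorization of \Cref{assum:kernel_factorization}, together with the normalization identity $\int K_{\lambda,t,x}(c,c')\,p^\star_t(x\mid c')P_C(dc')=1$. Substituting $K_{\lambda,t,x}(c,c')=a_{t,x}(c)K'_\lambda(c,c')$ into \eqref{eq:slice_solution} gives
\begin{align}
\hat v_t(c,x)=a_{t,x}(c)\int K'_\lambda(c,c')\,v^\star_t(c',x)\,p^\star_t(x\mid c')\,P_C(dc').
\end{align}
The same substitution in the normalization identity gives
\begin{align}
a_{t,x}(c)=\Bigl(\int K'_\lambda(c,c')\,p^\star_t(x\mid c')\,P_C(dc')\Bigr)^{-1}.
\end{align}
So the prefactor is forced to be exactly the reciprocal of the density-weighted kernel mass. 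Dividing the first display by the inverse of the second yields a ratio of two integrals against the common measure $K'_\lambda(c,c')P_C(dc')$.

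We then define the Markov kernel
\begin{align}
W_\lambda(dc'\mid c):=\frac{K'_\lambda(c,c')\,P_C(dc')}{Z_\lambda(c)},\qquad
Z_\lambda(c):=\int K'_\lambda(c,c'')\,P_C(dc'').
\end{align}
The constant $Z_\lambda(c)$ cancels between numerator and denominator of the ratio, which then coincides with \eqref{eq:eq_kernel}. Because $W_\lambda$ is built from $K'_\lambda$ and $P_C$ alone, it does not depend on $(t,x)$, as the lemma requires.

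The main obstacle is checking that $W_\lambda$ is a genuine Markov kernel. This requires $K'_\lambda(c,\cdot)\ge 0$ and $0<Z_\lambda(c)<\infty$, together with measurability in $c$. Nonnegativity does not follow automatically from Tikhonov equivalent kernels; spline equivalent kernels, for instance, can oscillate and take negative values. We therefore read the factorization assumption, as the phrase ``constant-preserving smoothers'' suggests, as asserting a nonnegative locally integrable shape $K'_\lambda$, and we add this as a standing condition if it is not already part of \Cref{assum:symmetric_smoothing}.

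With nonnegativity in hand, the remaining conditions are checked as follows. Finiteness and positivity of $Z_\lambda(c)$ come from integrability of $K'_\lambda(c,\cdot)$ under $P_C$ and from the normalization identity, which forces the kernel mass to be nonzero wherever $p^\star_t(x\mid\cdot)$ is positive. The denominator of \eqref{eq:eq_kernel} is then strictly positive at every $(t,x)$ with $p^\star_t(x)>0$, which is the only region where the flow-matching objective constrains $\hat v$.
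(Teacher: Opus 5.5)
Your proposal is correct and follows essentially the same route as the paper. You substitute the factorization $K_{\lambda,t,x}=a_{t,x}(c)K'_\lambda$ into the slice solution, use the normalization identity to identify $a_{t,x}(c)$ as the reciprocal of the density-weighted kernel mass, and take $W_\lambda(dc'\mid c)$ to be $K'_\lambda(c,c')P_C(dc')$ normalized in $c'$. You also point out that $W_\lambda$ is a genuine Markov kernel only if $K'_\lambda(c,\cdot)$ is nonnegative and $P_C$-integrable (spline equivalent kernels can go negative); this is a fair point that the paper's one-line proof leaves implicit.
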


\begin{proof}
Substituting \eqref{eq:kernel_factorize} into the slice solution \Cref{eq:slice_solution} and absorbing $a_{t,x}(c)$ into the normalization, we obtain
\begin{align}
\hat v_t(c,x) =
\frac{\int K'_\lambda(c,c')\,v^\star_t(c',x)\,p^\star_t(x\mid c')\,p_C(c')\,dc'}
{\int K'_\lambda(c,c')\,p^\star_t(x\mid c')\,p_C(c')\,dc'}.
\label{eq:density_weighted_v}
\end{align}
The form in~\eqref{eq:density_weighted_v} is exactly the representation in \Cref{assum:density_kernel}, with $W_\lambda$ obtained concretely by normalizing $K'_\lambda(c,c')p_C(c')$ in $c'$.
\end{proof}

\textbf{Motivating \Cref{assum:kernel_factorization}.} The factorization \eqref{eq:kernel_factorize} is an equivalent-kernel approximation of the kind used for smoothing splines and other roughness-penalized estimators \citep{silverman1984spline,wahba1990spline,green1994nonparametric}. The exact kernel $K_{\lambda,t,x}$ depends on $(t,x)$ only through the design density $\rho_{t,x}(c):=p^\star_t(x\mid c)p_C(c)$, so replacing it by a common kernel is justified when $\rho_{t,x}$ is locally stable over the smoothing length scale of the penalty. 
Equivalently, using the normal-equation resolvent $R_\rho:=(M_\rho+\lambda L_c^\ast L_c)^{-1}$, the second-resolvent identity
\begin{align}
R_{\rho_{t,x}}-R_{\bar\rho}=-R_{\rho_{t,x}}\big(M_{\rho_{t,x}}-M_{\bar\rho}\big)R_{\bar\rho}
\label{eq:resolvent_identity}
\end{align}
shows that the exact kernels are stable whenever the perturbation $M_{\rho_{t,x}}-M_{\bar\rho}$ is small in the operator norm induced by the regularized problem. This is the standard Fredholm/resolvent picture behind Tikhonov-regularized least squares and kernel ridge regression \citep{cucker2002mathematical,caponnetto2007optimal,kress2014linear,engl1996regularization}. For the Gaussian-like paths we consider, the approximation is expected to be best for small $t$, large path variance, or locally slowly varying conditional mean $\mu^\star$. It is weakest near $t=1$ in low-noise regions, where $p^\star_t(x\mid c)$ sharply localizes around level sets of the conditional mean.\\
Also, in \Cref{app:ek_validity} we numerically test how well \Cref{eq:eq_kernel} approximates the velocity field learned by a neural network, on a synthetic task. This is equivalent to verifying how well \Cref{eq:slice_solution} and \Cref{eq:kernel_factorize} together approximate a neural CFM's learned velocity field.

\begin{remark} The learned vector field representation \Cref{eq:eq_kernel} obtained from \Cref{assum:density_kernel} is stronger than saying that $\hat v_t(c,x)$ is an ordinary pointwise smoother of $v^\star_t(c,x)$: ordinary smoothing in $c$ at fixed $(t,x)$ would replace the conditional mean by a smoothed mean and would not create the same excess endpoint variance. The density-weighted form in \eqref{eq:density_weighted_v} is structurally different: it locally blends the conditional probability paths themselves, weighting by how likely each $c'$ is to have produced the observed $x$ under the true model. This blending is what produces the variance inflation.
\end{remark}

We conclude this subsection with two assumptions on the resulting kernel $W_\lambda$. First, we require mild regularity conditions ensuring that the mixed path above is the path generated by $\hat v$.

\begin{assumption}[Path regularity and uniqueness]
\label{assum:path_regularity}
For $W_\lambda$ in \Cref{assum:density_kernel}, the mixture density
\begin{align}
\hat p_t^\lambda(x\mid c):=\int p^\star_t(x\mid c')\,W_\lambda(dc'\mid c)
\end{align}
is positive and continuously differentiable in $(t,x)$, differentiation under the $W_\lambda$-integral is valid, and the multivariate continuity equation
\begin{align}
\partial_t p_t(x\mid c)+\nabla_x\cdot\!\left(p_t(x\mid c)\hat v_t(c,x)\right)=0
\end{align}
with initial law $\rho_0$ admits a unique weak solution.
\end{assumption}

Second, we assume reversibility and positive semidefiniteness of the kernel --- two standard properties of self-adjoint smoothing operators on $L^2(P_C)$. In particular, for example, they are satisfied by the Markov kernels arising from Tikhonov regularization with symmetric quadratic penalties \citep{engl1996regularization}.

\begin{assumption}[Symmetric positive smoothing]
\label{assum:symmetric_smoothing}
The kernel $W_\lambda$ preserves $P_C$ and is reversible:
\begin{align}
P_C(dc)\,W_\lambda(dc'\mid c)=P_C(dc')\,W_\lambda(dc\mid c').
\end{align}
Moreover, the smoothing operator
\[
(S_\lambda g)(c):=\int g(c')\,W_\lambda(dc'\mid c)
\]
is positive semidefinite as an operator on $L^2(P_C)$.
\end{assumption}

\subsection{Formal Main Results}

We are now ready to state our formal main results, deferring proofs to \Cref{ssec:proofs}. First, we show that under our assumptions, \textbf{smoothing over the flows directly translates to smoothing over the conditionals}. This is the key bridge between regularization of the vector field and variance inflation in the generated distribution.

\begin{proposition}[Regularization induces a local mixture]
\label{lem:local_mixture}
Suppose \Cref{assum:zero_mean,assum:kernel_factorization,assum:path_regularity} hold. Then the conditional path $\hat p_t^\lambda(\cdot\mid c)$ defined in \Cref{assum:path_regularity} satisfies the continuity equation
\begin{align}
\partial_t \hat p_t^\lambda(x\mid c)
+
\nabla_x\cdot\!\left(
\hat p_t^\lambda(x\mid c)\hat v_t(c,x)
\right)
=0,
\end{align}
with initial law $\hat p_0^\lambda(\cdot\mid c)=\rho_0$. Consequently, the endpoint law generated by $\hat v$ is the $W_\lambda$-mixture of true conditionals:
\begin{align}
\hat p_1^\lambda(x\mid c)
=
\int p^\star(x\mid c')\,W_\lambda(dc'\mid c).
\label{eq:endpoint_mixture}
\end{align}
\end{proposition}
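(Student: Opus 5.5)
The proof has three steps: use the continuity equation satisfied by each true conditional path, mix these equations against $W_\lambda(dc'\mid c)$, and then invoke uniqueness from \Cref{assum:path_regularity}.

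\textbf{Step 1: the true paths.} By \Cref{prop:Lfm}, for every fixed $c'$ the true conditional path $p^\star_t(\cdot\mid c')$ is generated by $v^\star_t(c',\cdot)$. In weak form it therefore satisfies
\[
\partial_t p^\star_t(x\mid c')+\nabla_x\cdot\big(p^\star_t(x\mid c')\,v^\star_t(c',x)\big)=0 .
\]
Its initial law is $p^\star_0(\cdot\mid c')=\rho_0$, which does not depend on $c'$ by \Cref{assum:zero_mean}. Its endpoint is $p^\star(\cdot\mid c')$.

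\textbf{Step 2: mix the equations.} Fix $c$ and integrate the equation above against $W_\lambda(dc'\mid c)$. \Cref{assum:path_regularity} allows differentiation under the integral, so
\[
\partial_t \hat p^\lambda_t(x\mid c)+\nabla_x\cdot\Big(\int p^\star_t(x\mid c')\,v^\star_t(c',x)\,W_\lambda(dc'\mid c)\Big)=0 .
\]
By \Cref{assum:density_kernel}, which follows from \Cref{assum:kernel_factorization}, the integrated flux equals $\hat p^\lambda_t(x\mid c)\,\hat v_t(c,x)$. The numerator of \eqref{eq:eq_kernel} is exactly this flux, and its denominator is exactly $\hat p^\lambda_t(x\mid c)$, which is positive by \Cref{assum:path_regularity}. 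This gives the claimed continuity equation. The initial condition holds because $W_\lambda(\cdot\mid c)$ is a probability measure, so $\hat p^\lambda_0(\cdot\mid c)=\int\rho_0\,W_\lambda(dc'\mid c)=\rho_0$.

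\textbf{Step 3: uniqueness.} The law of $\Phi^{\hat v}_t(c,Z)$ with $Z\sim\rho_0$ is also a weak solution of the same continuity equation with initial law $\rho_0$. Uniqueness in \Cref{assum:path_regularity} then forces the two to coincide for every $t$. Setting $t=1$ and using $p^\star_1(\cdot\mid c')=p^\star(\cdot\mid c')$ gives \eqref{eq:endpoint_mixture}.

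\textbf{Main obstacle.} The delicate point is justifying the weak-form manipulations. Exchanging the $W_\lambda$-integral with the test-function pairing requires Fubini, and that needs integrability of $p^\star_t\,|v^\star_t|$ jointly in $(x,c')$. I would obtain this from the finite second moments in \Cref{assum:zero_mean}: since $v^\star_t$ is a conditional expectation of $\dot X_t$, the flux is bounded in $L^1$ by $\mathbb E[\|\dot X_t\|\mid c']$, which is finite. Everything else is supplied by \Cref{assum:path_regularity}.
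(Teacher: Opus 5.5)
Your proposal is correct and follows the paper's proof essentially step for step. You integrate the per-$c'$ continuity equations against $W_\lambda(dc'\mid c)$, identify the mixed flux as $\hat p_t^\lambda(x\mid c)\hat v_t(c,x)$ via \Cref{assum:density_kernel}, check that the initial law is $\rho_0$, and conclude from the uniqueness clause of \Cref{assum:path_regularity} at $t=1$. Your ``main obstacle'' paragraph is unnecessary, because the paper builds the exchange of derivatives with the $W_\lambda$-integral directly into \Cref{assum:path_regularity} (and finite second moments under $P_C$ alone would not guarantee integrability against every $W_\lambda(\cdot\mid c)$ anyway).
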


At this point, the flow-dynamics part of the argument is complete: the regularized flow generates a local mixture of the true conditional distributions. The remainder of the variance analysis follows from the moments of this mixture.

Let
\begin{align}
C'_\lambda\mid C=c\sim W_\lambda(\cdot\mid c),
\qquad
X'_1\sim p^\star(\cdot\mid C'_\lambda).
\end{align}
By \Cref{lem:local_mixture},
\begin{align}
\hat X_1\mid C=c \overset{d}{=} X'_1.
\end{align}
Therefore, by the laws of total expectation and total covariance,
\begin{align}
\hat\mu_\lambda(c)
&:=
\mathbb E[\hat X_1\mid C=c]
=
\mathbb E[\mu^\star(C'_\lambda)\mid C=c]
=
(S_\lambda\mu^\star)(c),
\label{eq:vector_mean_mixture}
\\
\hat\Sigma_\lambda(c)
&:=
\mathrm{Cov}(\hat X_1\mid C=c)
\nonumber\\
&=
\mathbb E[\Sigma^\star(C'_\lambda)\mid C=c]
+
\mathrm{Cov}(\mu^\star(C'_\lambda)\mid C=c).
\label{eq:vector_cov_mixture}
\end{align}

The decomposition in \eqref{eq:vector_cov_mixture} already exposes the variance-inflation mechanism. The first term is the true conditional covariance averaged over the smoothing neighborhood, while the second is the variation of nearby conditional means that the smoothed model has failed to resolve.

\paragraph{Variance in the Scalar Case.}
For simplicity, first consider scalar outputs. Write $\mu^\star=f^\star$ and $\Sigma^\star=\sigma^2$, and define
\begin{align}
\hat f_\lambda(c)
&:=
\mathbb E[f^\star(C'_\lambda)\mid C=c]
=
(S_\lambda f^\star)(c),
\\
\delta_\lambda(c)
&:=
\hat f_\lambda(c)-f^\star(c),
\end{align}
together with
\begin{align}
s^2_\lambda(c)
&:=
\mathrm{Var}(f^\star(C'_\lambda)\mid C=c),
\\
\bar\sigma^2_\lambda(c)
&:=
\mathbb E[\sigma^2(C'_\lambda)\mid C=c].
\end{align}
Specializing \eqref{eq:vector_mean_mixture}--\eqref{eq:vector_cov_mixture} to the scalar case gives
\begin{align}
\mathbb E[\hat X_1\mid C=c]
&=
f^\star(c)+\delta_\lambda(c),
\\
\mathrm{Var}(\hat X_1\mid C=c)
&=
s^2_\lambda(c)+\bar\sigma^2_\lambda(c).
\label{eq:scalar_moments}
\end{align}

Equation~\eqref{eq:scalar_moments} makes the source of the additional variance explicit. The term $s_\lambda^2(c)$ is the variance of $f^\star$ over the smoothing neighborhood $W_\lambda(\cdot\mid c)$, i.e., the unresolved variation of the regression function over the model's effective receptive field around $c$. The term $\bar\sigma^2_\lambda(c)$ is the local average of the irreducible noise variance $\sigma^2$ over the same neighborhood. In the unregularized limit $(\lambda\to 0, W_\lambda\to \delta_c)$, the first term vanishes and the second collapses to $\sigma^2(c)$, recovering the correct conditional variance.

Thus, the learned conditional variance exceeds the local average noise level by exactly $s_\lambda^2(c)$. The remaining question is whether this unresolved variation is large enough to account for the squared bias $\delta_\lambda^2(c)$ induced by the same smoothing operator.

\begin{proposition}[Average excess variance dominates squared bias]
\label{cor:variance_dominates_bias}
Suppose $d_x=1$. Under \Cref{assum:zero_mean,assum:symmetric_smoothing,assum:kernel_factorization,assum:path_regularity},
\begin{align}
\mathbb E_C\!\left[
s^2_\lambda(C)-\delta^2_\lambda(C)
\right]
=
2\,\langle
S_\lambda f^\star,
(I-S_\lambda)f^\star
\rangle_{L^2(P_C)}
\ge 0.
\label{eq:variance_bias_operator}
\end{align}
\end{proposition}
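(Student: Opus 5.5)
We will prove \eqref{eq:variance_bias_operator} by a direct moment computation in $L^2(P_C)$. Nothing further about the flow dynamics is needed beyond \Cref{lem:local_mixture} and the scalar moment formulas \eqref{eq:scalar_moments}. Throughout, write $\langle\cdot,\cdot\rangle$ for the inner product of $L^2(P_C)$ and $S=S_\lambda$. We assume $f^\star\in L^2(P_C)$, which follows from the finite second moments in \Cref{assum:zero_mean}.

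\textbf{Step 1: express $s_\lambda^2$ through $S$.} Since $C'_\lambda\mid C=c\sim W_\lambda(\cdot\mid c)$, we have
\[
s^2_\lambda(c)=\mathbb E[f^\star(C'_\lambda)^2\mid C=c]-\big(\mathbb E[f^\star(C'_\lambda)\mid C=c]\big)^2=\big(S(f^{\star 2})\big)(c)-\big((Sf^\star)(c)\big)^2 .
\]
Taking expectation over $C\sim P_C$ and using that $W_\lambda$ preserves $P_C$ (\Cref{assum:symmetric_smoothing}) gives $\mathbb E_C[S(f^{\star2})(C)]=\mathbb E_C[f^\star(C)^2]=\|f^\star\|^2$. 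Hence
\[
\mathbb E_C[s^2_\lambda(C)]=\|f^\star\|^2-\|Sf^\star\|^2 .
\]
This invariance step is the only place where stationarity is used, and it is where a naive argument could fail.

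\textbf{Step 2: expand the bias.} Since $\delta_\lambda=Sf^\star-f^\star$,
\[
\mathbb E_C[\delta_\lambda^2]=\|Sf^\star\|^2-2\langle Sf^\star,f^\star\rangle+\|f^\star\|^2 .
\]
Subtracting this from Step 1 gives
\[
\mathbb E_C[s^2_\lambda-\delta^2_\lambda]=2\langle Sf^\star,f^\star\rangle-2\|Sf^\star\|^2=2\langle Sf^\star,(I-S)f^\star\rangle ,
\]
which is the stated identity.

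\textbf{Step 3: nonnegativity.} This is the main obstacle, because it needs a spectral argument. Reversibility in \Cref{assum:symmetric_smoothing} makes $S$ self-adjoint on $L^2(P_C)$. Because $S$ is Markov and preserves $P_C$, Jensen's inequality gives $\|Sg\|^2\le \mathbb E_C[S(g^2)]=\|g\|^2$, so $\|S\|\le 1$. Together with positive semidefiniteness, the spectrum of $S$ lies in $[0,1]$. By the spectral theorem, with spectral measure $E_{f^\star}$ of $f^\star$,
\[
\langle Sf^\star,(I-S)f^\star\rangle=\langle (S-S^2)f^\star,f^\star\rangle=\int_{[0,1]}\lambda'(1-\lambda')\,dE_{f^\star}(\lambda')\ge 0 .
\]
Alternatively, one can avoid spectral calculus. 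Since $S\succeq 0$ has a square root $S^{1/2}$ commuting with $S$, write $S-S^2=S^{1/2}(I-S)S^{1/2}$ and use $I-S\succeq 0$ from $\|S\|\le1$. This completes the plan.
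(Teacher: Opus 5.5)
Your proposal is correct and follows the paper's proof essentially step for step. Both use the same stationarity computation $\mathbb E_C[s_\lambda^2(C)]=\|f^\star\|^2-\|S_\lambda f^\star\|^2$, the same expansion of the squared bias, and the same spectral conclusion that $S_\lambda(I-S_\lambda)\succeq 0$ because $S_\lambda$ is self-adjoint, positive semidefinite, and a contraction. Your Jensen argument for $\|S_\lambda\|\le 1$ spells out the contraction step that the paper only asserts. That argument also uses stationarity, so your remark that Step 1 is the only place stationarity enters is slightly off.
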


\paragraph{Average scalar excess variance.}
\Cref{cor:variance_dominates_bias} can be equivalently phrased directly in terms of the conditional variance learned by the flow. Let
\begin{align}
\hat V_\lambda(c)
&:=
\mathrm{Var}(\hat X_1\mid C=c)
=
s^2_\lambda(c)+\bar\sigma^2_\lambda(c),
\\
V^\star(c)
&:=
\mathrm{Var}(X_1\mid C=c)
=
\sigma^2(c),
\end{align}
and
\begin{align}
\mathrm{Bias}_\lambda(c)
:=
\mathbb E[\hat X_1\mid C=c]
-
\mathbb E[X_1\mid C=c]
=
\delta_\lambda(c).
\end{align}
Combining \eqref{eq:scalar_moments} with \Cref{cor:variance_dominates_bias} and using
\(
\mathbb E[\bar\sigma^2_\lambda(C)]
=
\mathbb E[\sigma^2(C)]
\)
by stationarity,
\begin{align}
&\mathbb E_C[\hat V_\lambda(C)]
-
\mathbb E_C[V^\star(C)]
-
\mathbb E_C[\mathrm{Bias}_\lambda(C)^2]
\nonumber\\
&\qquad =
\mathbb E_C[
s^2_\lambda(C)-\delta_\lambda^2(C)
]
\ge 0.
\end{align}
Equivalently,
\begin{align}
\mathbb E_C[\hat V_\lambda(C)]
\ge
\mathbb E_C[V^\star(C)]
+
\mathbb E_C[\mathrm{Bias}_\lambda(C)^2].
\label{eq:average_scalar_variance_budget}
\end{align}
Thus, on average over the conditioning variable, the learned conditional variance is at least the true conditional variance plus the squared smoothing bias of the learned conditional mean.

This gives a precise sense in which the conditional flow converts epistemic uncertainty into apparent aleatoric variance. The smoothing operator $S_\lambda$ captures unresolved variation of the regression function $f^\star$ in the conditioning variable, which the model cannot represent given its inductive bias. This unresolved variation manifests as bias $\delta_\lambda(c)$ in the learned conditional mean; \eqref{eq:average_scalar_variance_budget} shows that the same smoothing also induces enough additional conditional spread to cover the squared bias on average. Underfit conditional flows therefore do not merely smooth their predictions toward a less informative estimate; they also produce stochastic outputs whose extra variance carries the unresolved dependence on $C$.

\paragraph{Vector Case.}
The vector case follows by applying the same scalar argument along every direction in the output space, and admits the same interpretation.

\begin{proposition}[Vector-valued excess covariance]
\label{cor:vector_variance_budget}
Suppose $X_1\in\mathbb R^{d_x}$. Let
\begin{align}
b_\lambda(c):=\hat\mu_\lambda(c)-\mu^\star(c).
\end{align}
Under \Cref{assum:zero_mean,assum:path_regularity,assum:kernel_factorization,assum:symmetric_smoothing},
\begin{align}
\mathbb E_C[\hat\Sigma_\lambda(C)]
\succeq
\mathbb E_C[\Sigma^\star(C)]
+
\mathbb E_C[b_\lambda(C)b_\lambda(C)^\top],
\label{eq:vector_covariance_budget}
\end{align}
where $\succeq$ denotes the Loewner order.
\end{proposition}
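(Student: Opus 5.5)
The plan is to reduce \Cref{cor:vector_variance_budget} to the scalar statement \Cref{cor:variance_dominates_bias} by projecting onto an arbitrary fixed direction $u\in\mathbb R^{d_x}$. The Loewner inequality \eqref{eq:vector_covariance_budget} holds if and only if, for every $u$,
\begin{align}
u^\top \mathbb E_C[\hat\Sigma_\lambda(C)]u \ge u^\top\mathbb E_C[\Sigma^\star(C)]u + \mathbb E_C[(u^\top b_\lambda(C))^2].
\end{align}
So I will fix $u$ and work with the scalar quantities $f_u:=u^\top\mu^\star$ and $\sigma_u^2:=u^\top\Sigma^\star u$.

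First, I will use the mixture moment formulas \eqref{eq:vector_mean_mixture}--\eqref{eq:vector_cov_mixture}, which follow from \Cref{lem:local_mixture}. Contracting them with $u$ gives
\begin{align}
u^\top\hat\mu_\lambda(c)=(S_\lambda f_u)(c), \qquad u^\top\hat\Sigma_\lambda(c)u=\mathbb E[\sigma_u^2(C'_\lambda)\mid C=c]+\mathrm{Var}(f_u(C'_\lambda)\mid C=c).
\end{align}
These are exactly the scalar quantities $\hat f_\lambda$, $\bar\sigma^2_\lambda$ and $s^2_\lambda$ of the scalar case, with $f^\star$ replaced by $f_u$ and $\sigma^2$ replaced by $\sigma^2_u$. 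Moreover, $u^\top b_\lambda=(S_\lambda-I)f_u$ plays the role of $\delta_\lambda$.

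Second, I will rerun the scalar computation for these projected quantities. Stationarity of $P_C$ under $W_\lambda$ (\Cref{assum:symmetric_smoothing}) gives $\mathbb E_C[\bar\sigma^2_{u,\lambda}(C)]=\mathbb E_C[\sigma_u^2(C)]$. The claim then reduces to $\mathbb E_C[s^2_{u,\lambda}-\delta^2_{u,\lambda}]\ge 0$.

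To prove this, I expand $s^2_{u,\lambda}=S_\lambda(f_u^2)-(S_\lambda f_u)^2$ and $\delta_{u,\lambda}^2=(S_\lambda f_u-f_u)^2$. Taking $P_C$-expectations and using $\mathbb E[S_\lambda(f_u^2)]=\mathbb E[f_u^2]$, again by stationarity, I obtain
\begin{align}
\mathbb E_C[s^2_{u,\lambda}-\delta^2_{u,\lambda}]=2\langle S_\lambda f_u,f_u\rangle-2\|S_\lambda f_u\|^2=2\langle S_\lambda f_u,(I-S_\lambda)f_u\rangle_{L^2(P_C)}.
\end{align}
This is precisely the identity \eqref{eq:variance_bias_operator} applied to $f_u$. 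In fact I can simply invoke \Cref{cor:variance_dominates_bias} for the scalar model $u^\top X_1=f_u(C)+u^\top\xi$, which satisfies \Cref{assum:zero_mean} with $d_x=1$. The kernel $W_\lambda$ is the same, so the other assumptions carry over.

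The main obstacle is the nonnegativity of $\langle S_\lambda f,(I-S_\lambda)f\rangle$, which \Cref{cor:variance_dominates_bias} supplies. The argument runs as follows. Reversibility makes $S_\lambda$ self-adjoint on $L^2(P_C)$. Being Markov, $S_\lambda$ is a contraction ($\|S_\lambda\|\le1$, by Jensen and stationarity). Combined with positive semidefiniteness, its spectrum lies in $[0,1]$. Hence $S_\lambda(I-S_\lambda)=S_\lambda^{1/2}(I-S_\lambda)S_\lambda^{1/2}\succeq0$ by the functional calculus.

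The only point needing care is that $f_u\in L^2(P_C)$, which holds by the finite second moments in \Cref{assum:zero_mean}. Since $u$ was arbitrary, the quadratic-form inequality holds for all $u$, which yields \eqref{eq:vector_covariance_budget}.
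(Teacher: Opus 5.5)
Your proposal is correct and follows the paper's proof essentially step for step. You decompose $\hat\Sigma_\lambda$ via the mixture covariance formula, remove the averaged-noise term by stationarity of $P_C$ under $W_\lambda$, and reduce the Loewner inequality along each direction $u$ to $2\langle S_\lambda g_u,(I-S_\lambda)g_u\rangle_{L^2(P_C)}\ge 0$ with $g_u=u^\top\mu^\star$. One small caution concerns your aside about invoking \Cref{cor:variance_dominates_bias} for the projected model $u^\top X_1$: that model's own regularized flow need not have equivalent kernel $W_\lambda$, so the assumptions do not simply carry over. It is cleaner to do what you also do, and what the paper does: reuse only the scalar operator calculation, which needs nothing beyond \Cref{assum:symmetric_smoothing} and $g_u\in L^2(P_C)$.
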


\subsection{Deferred Proofs}
\label{ssec:proofs}

We first prove the local-mixture representation that connects smoothing of the velocity field to smoothing of the generated conditional distribution.

\begin{proof}[Proof of \Cref{lem:local_mixture}]
For every fixed $c'$, the exact conditional path satisfies the multivariate continuity equation
\begin{align}
\partial_t p^\star_t(x\mid c')
+
\nabla_x\cdot\!\left(
p^\star_t(x\mid c')v^\star_t(c',x)
\right)
=0.
\end{align}
Integrating this identity against $W_\lambda(dc'\mid c)$ and exchanging the integral with the derivatives, by \Cref{assum:path_regularity}, gives
\begin{align}
\partial_t \hat p_t^\lambda(x\mid c)
+
\nabla_x\cdot\!\left(
\int p^\star_t(x\mid c')v^\star_t(c',x)\,
W_\lambda(dc'\mid c)
\right)
=0.
\end{align}
By \Cref{assum:density_kernel},
\begin{align}
\int p^\star_t(x\mid c')v^\star_t(c',x)\,
W_\lambda(dc'\mid c)
=
\hat p_t^\lambda(x\mid c)\hat v_t(c,x).
\end{align}
The initial law is $\rho_0$ since $p^\star_0(\cdot\mid c')=\rho_0$ does not depend on $c'$. Uniqueness in \Cref{assum:path_regularity} then identifies $\hat p_t^\lambda$ as the law generated by $\hat v$. Evaluating at $t=1$ gives \eqref{eq:endpoint_mixture}.
\end{proof}

\begin{proof}[Proof of \Cref{cor:variance_dominates_bias} (scalar case)]
The proof compares two effects of the same smoothing operator: the unresolved variation $s_\lambda^2$ created by averaging nearby values of $f^\star$, and the squared bias $\delta_\lambda^2$ created by moving $f^\star$ toward $S_\lambda f^\star$. Their difference reduces to a quadratic form involving $S_\lambda(I-S_\lambda)$.

By definition,
\[
s^2_\lambda(c)
=
(S_\lambda f^{\star 2})(c)
-
(S_\lambda f^\star)(c)^2.
\]
Taking expectation over $C\sim P_C$ and using stationarity of $W_\lambda$,
\begin{align}
\mathbb E[s^2_\lambda(C)]
=
\mathbb E[f^\star(C)^2]
-
\mathbb E[(S_\lambda f^\star)(C)^2].
\label{eq:Es2}
\end{align}
Similarly, since
\[
\delta_\lambda(c)
=
(S_\lambda f^\star)(c)-f^\star(c),
\]
we have
\begin{align}
\mathbb E[\delta^2_\lambda(C)]
=
\mathbb E[(S_\lambda f^\star)(C)^2]
-
2\,\mathbb E[(S_\lambda f^\star)(C)f^\star(C)]
+
\mathbb E[f^\star(C)^2].
\label{eq:Ebias2}
\end{align}
Subtracting \eqref{eq:Ebias2} from \eqref{eq:Es2} and writing inner products in $L^2(P_C)$ gives
\begin{align}
\mathbb E[s^2_\lambda(C)-\delta^2_\lambda(C)]
&=
2\langle S_\lambda f^\star,f^\star\rangle
-
2\langle S_\lambda f^\star,S_\lambda f^\star\rangle
\nonumber\\
&=
2\langle
S_\lambda f^\star,
(I-S_\lambda)f^\star
\rangle.
\end{align}

It remains to show that this quadratic form is nonnegative. By \Cref{assum:symmetric_smoothing}, $S_\lambda$ is self-adjoint on $L^2(P_C)$ by reversibility and positive semidefinite by assumption. As a Markov operator preserving $P_C$, it is also a contraction. Hence its spectrum lies in $[0,1]$, and so
\(
S_\lambda(I-S_\lambda)
\)
is positive semidefinite. Therefore,
\begin{align}
\langle
S_\lambda f^\star,
(I-S_\lambda)f^\star
\rangle
=
\langle
f^\star,
S_\lambda(I-S_\lambda)f^\star
\rangle
\ge 0.
\end{align}
\end{proof}

\begin{proof}[Proof of \Cref{cor:vector_variance_budget} (vector case)]
The vector result follows by applying the scalar argument to every one-dimensional projection of the conditional mean.

By \eqref{eq:vector_cov_mixture},
\begin{align}
\hat\Sigma_\lambda(c)
=
\mathbb E[\Sigma^\star(C'_\lambda)\mid C=c]
+
A_\lambda(c),
\qquad
A_\lambda(c)
:=
\mathrm{Cov}(\mu^\star(C'_\lambda)\mid C=c).
\end{align}
Since $W_\lambda$ preserves $P_C$,
\begin{align}
\mathbb E_C[
\mathbb E[\Sigma^\star(C'_\lambda)\mid C]
]
=
\mathbb E_C[\Sigma^\star(C)].
\end{align}
It therefore remains to show
\begin{align}
\mathbb E_C[A_\lambda(C)]
\succeq
\mathbb E_C[b_\lambda(C)b_\lambda(C)^\top].
\end{align}

Fix $u\in\mathbb R^{d_x}$ and define
\(
g_u(c):=u^\top\mu^\star(c).
\)
Then
\begin{align}
u^\top A_\lambda(c)u
&=
\mathrm{Var}(g_u(C'_\lambda)\mid C=c),
\\
u^\top b_\lambda(c)
&=
(S_\lambda g_u)(c)-g_u(c).
\end{align}
Applying the scalar calculation from \Cref{cor:variance_dominates_bias} to $g_u$ gives
\begin{align}
u^\top\!\left(
\mathbb E_C[A_\lambda(C)]
-
\mathbb E_C[b_\lambda(C)b_\lambda(C)^\top]
\right)u
=
2\langle
S_\lambda g_u,
(I-S_\lambda)g_u
\rangle_{L^2(P_C)}
\ge 0,
\end{align}
where the final inequality follows from \Cref{assum:symmetric_smoothing}. Since this holds for every $u$, the Loewner inequality follows:
\begin{align}
\mathbb E_C[A_\lambda(C)]
\succeq
\mathbb E_C[b_\lambda(C)b_\lambda(C)^\top].
\end{align}
Combining this with the covariance decomposition in \eqref{eq:vector_cov_mixture} gives \eqref{eq:vector_covariance_budget}.
\end{proof}

We emphasize that both \eqref{eq:average_scalar_variance_budget} and \eqref{eq:vector_covariance_budget} are average statements over $C$. They do not assert that the scalar excess variance dominates squared bias pointwise, nor that the learned covariance pointwise dominates the true covariance in every heteroscedastic problem. The empirical observations in \Cref{sec:coverage} go somewhat beyond this, showing that pointwise predicted variance correlates strongly with pointwise squared error of the flow's mean.

\subsection{Validity of the Equivariant-Kernel Assumption (\texorpdfstring{\Cref{assum:kernel_factorization}}{Assumption \ref*{assum:kernel_factorization}})} \label{app:ek_validity}

In this section, we numerically test how well \Cref{eq:eq_kernel} approximates the velocity field learned by a neural network, on a synthetic task. This is equivalent to verifying how well \Cref{eq:slice_solution} and \Cref{eq:kernel_factorize} together approximate a neural CFM's learned velocity field. Let $\hat{v}_\theta(c, x_t, t)$ denote a neural CFM, and $v^*_t, p^*_t$ denote the true velocity field and distribution respectively at time $t$. Specifically, we want to test if there exists a kernel $W$ such that $\tilde v_{W,t}(c, x) \;=\;
\frac{\int v^\star_t(c', x)\, p^\star_t(x \mid c')\, W(dc' \mid c)}{\int p^\star_t(x \mid c')\, W(dc' \mid c)}$ closely approximates $\hat{v}_\theta$.

To assess the validity of the equivalent-kernel assumption, we use a 1D Gaussian target:
\[
X_1 \mid C = c \sim \mathcal{N}(f^\star(c), \sigma^2), \qquad
X_0 \sim \mathcal{N}(0, \sigma_0^2),
\]
with linear path $X_t = (1-t)X_0 + tX_1$, giving
\[
X_t \mid C = c \sim \mathcal{N}\big(t f^\star(c), s_t^2\big), \qquad
s_t^2 = (1-t)^2 \sigma_0^2 + t^2 \sigma^2 .
\]

The factorized kernel approximation is most accurate when the slice-design density
$\rho_{t,x}(c') = p_t^\star(x \mid c')\, p_C(c')$ varies little over the smoothing bandwidth
$h_\lambda$. With $p_C$ locally constant, the path mean shifts by
$\approx t h_\lambda |f^{\star\prime}(c)|$ over this neighborhood while the path std is $s_t$,
giving the heuristic condition
\[
\frac{t h_\lambda |f^{\star\prime}(c)|}
{\sqrt{(1-t)^2 \sigma_0^2 + t^2 \sigma^2}} \ll 1 .
\]

For our experiments, we use
$f^\star(c) = \sin(2\pi c) + \sin(10\pi c)$, $\sigma = 0.05$, and
train a CFM $\hat{v}_\theta(c, x_t, t)$, analyzing checkpoints
$N_{\mathrm{steps}} \in \{1000, 3000, 5000\}$ (heavy/moderate/mild underfitting).

To test the equivalent-kernel (EK) approximation, we discretize $c$ on a uniform grid
$\{c_j\}_{j=1}^N$ and define
\[
\tilde{v}_{\phi,t}(c_i, x)
= \frac{\sum_j W_{\phi,ij}\, v_t^\star(c_j, x)\, p_t^\star(x \mid c_j)}
       {\sum_j W_{\phi,ij}\, p_t^\star(x \mid c_j)} ,
\]
where $v^\star_t$ and $p_t^\star$ are available in closed form.
We parameterize $W_\phi$ as a learned-embedding Gibbs kernel,
\[
K_{\phi,ij}
=
\exp\left(
-\frac{\|h_\phi(c_i)-h_\phi(c_j)\|^2}{2\sigma_K^2}
\right),
\qquad
W_\phi = D_\phi K_\phi D_\phi,
\]
where $h_\phi:[0,1]\to\mathbb{R}^2$ is a small MLP and $\sigma_K$ is learned.
The positive diagonal matrix $D_\phi$ is obtained by symmetric Sinkhorn normalization so that
$W_\phi\mathbf{1}=\mathbf{1}$. This construction ensures
\[
W_\phi=W_\phi^\top,\qquad
W_\phi\succeq0,\qquad
(W_\phi)_{ij}\geq0,\qquad
W_\phi\mathbf{1}=\mathbf{1}.
\]

Denoting $\hat{v}_{\theta,t}=\hat{v}_\theta(\cdot,\cdot,t)$, we fit $\phi$ by minimizing
\[
\mathcal{L}(\phi)
=
\mathbb{E}_{t,C,X_t\sim \hat p_t(\cdot\mid C)}
\left[
\left(
\tilde{v}_{\phi,t}(C,X_t)
-
\hat{v}_{\theta,t}(C,X_t)
\right)^2
\right],
\]
where $\hat p_t(\cdot\mid c)$ is the marginal induced by integrating the learned field
$\hat v_\theta$ from the Gaussian base distribution up to time $t$.
Thus, we test whether the EK approximation explains the trained field on the region of
state space actually visited by the learned model.

Let $\hat{p}_1(\cdot \mid c), \tilde{p}_1(\cdot \mid c)$ be the endpoint distributions obtained by
integrating $\hat{v}_\theta, \tilde{v}_\phi$ from a shared $\mathcal{N}(0,1)$ base, with
means/variances $\hat{f}, \hat{\sigma}^2$ and $\tilde{f}, \tilde{\sigma}^2$ respectively.
Writing
\[
\langle \cdot \rangle
=
\mathbb{E}_{C, X_t \sim \hat p_t(\cdot \mid C)}[\cdot],
\]
we report the field error
\[
E_{\mathrm{field}}(t)
=
\frac{
\left\langle
(\tilde{v}_{\phi,t}-\hat{v}_{\theta,t})^2
\right\rangle
}{
\left\langle
\hat{v}_{\theta,t}^2
\right\rangle
}
\]
as a function of $t$ in \Cref{fig:ek_error}, Column 1.
We also plot $\tilde{v}_{\phi,t}(c,x_t)$ and $\hat{v}_{\theta,t}(c,x_t)$ as a function of $c$
at $x_t=0$ for $t=0.25,0.5,0.7,0.9$ in \Cref{fig:ek_error}, Columns 2--5.
As expected, \Cref{fig:ek_error} shows that $\tilde{v}_{\phi}$ is a good approximation of
$\hat{v}_{\theta}$ over much of the learned trajectory, with the approximation becoming weaker
in some regions as $t\rightarrow1$.
In \Cref{fig:ek_samples}, we also plot the endpoint distributions obtained by integrating
$\hat{v}_{\theta}$ and $\tilde{v}_{\phi}$. Interestingly, the endpoint distributions look very
similar, and the vector-field approximation error close to $t=1$ does not affect the integral
significantly. Finally, in \Cref{fig:ek_smoothing} we visualize how both $\hat{v}_\theta$ and
$\tilde{v}_\phi$ look like density-weighted smoothed functions of the true velocity field
$v^\star$.

\begin{figure}[!ht]
    \centering
    \includegraphics[width=1.0\textwidth]{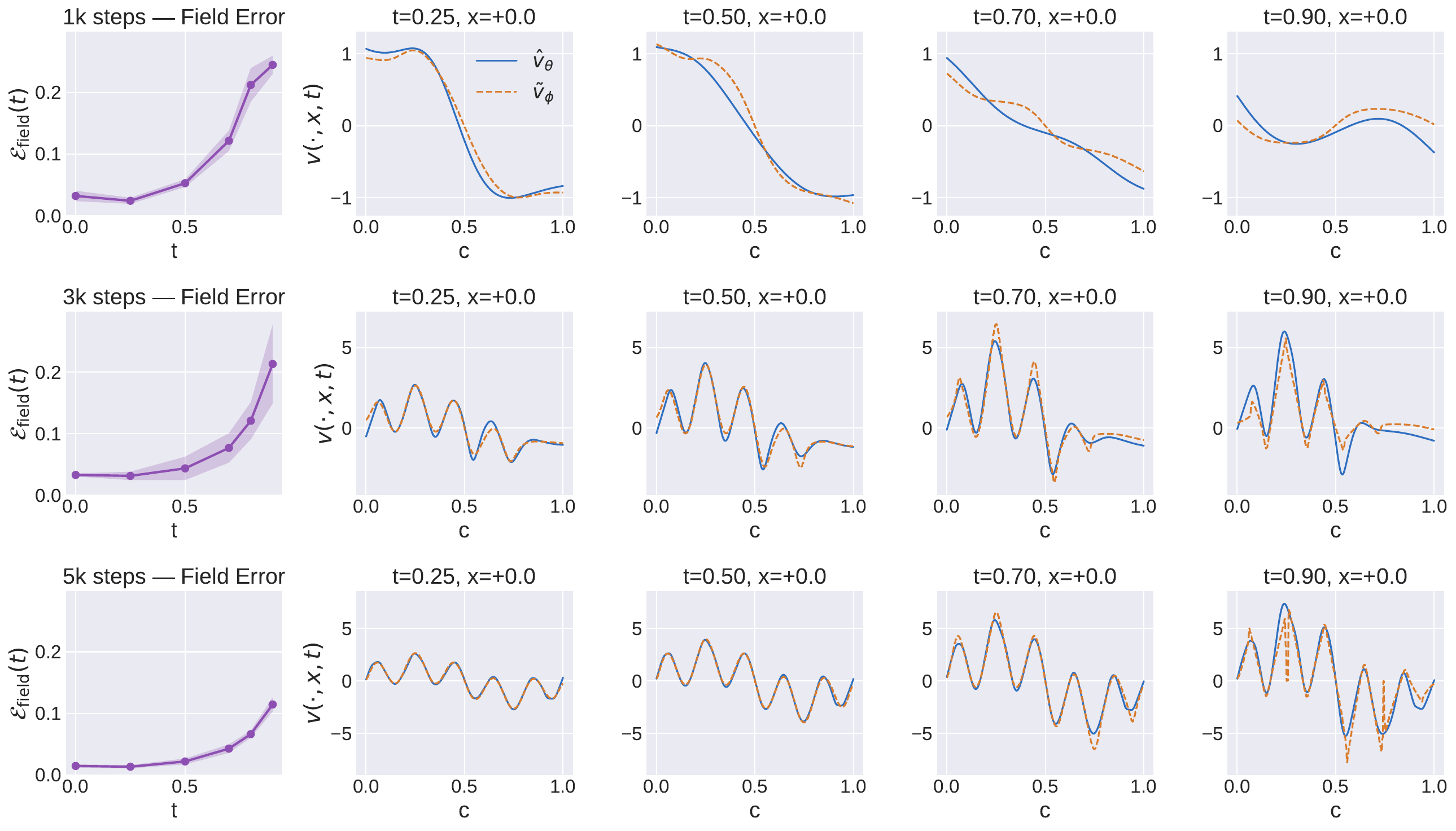}
    \caption{Kernel approximation error and velocity field visualization} 
    \label{fig:ek_error}
\end{figure}

\begin{figure}[!ht]
    \centering
    \includegraphics[width=1.0\textwidth]{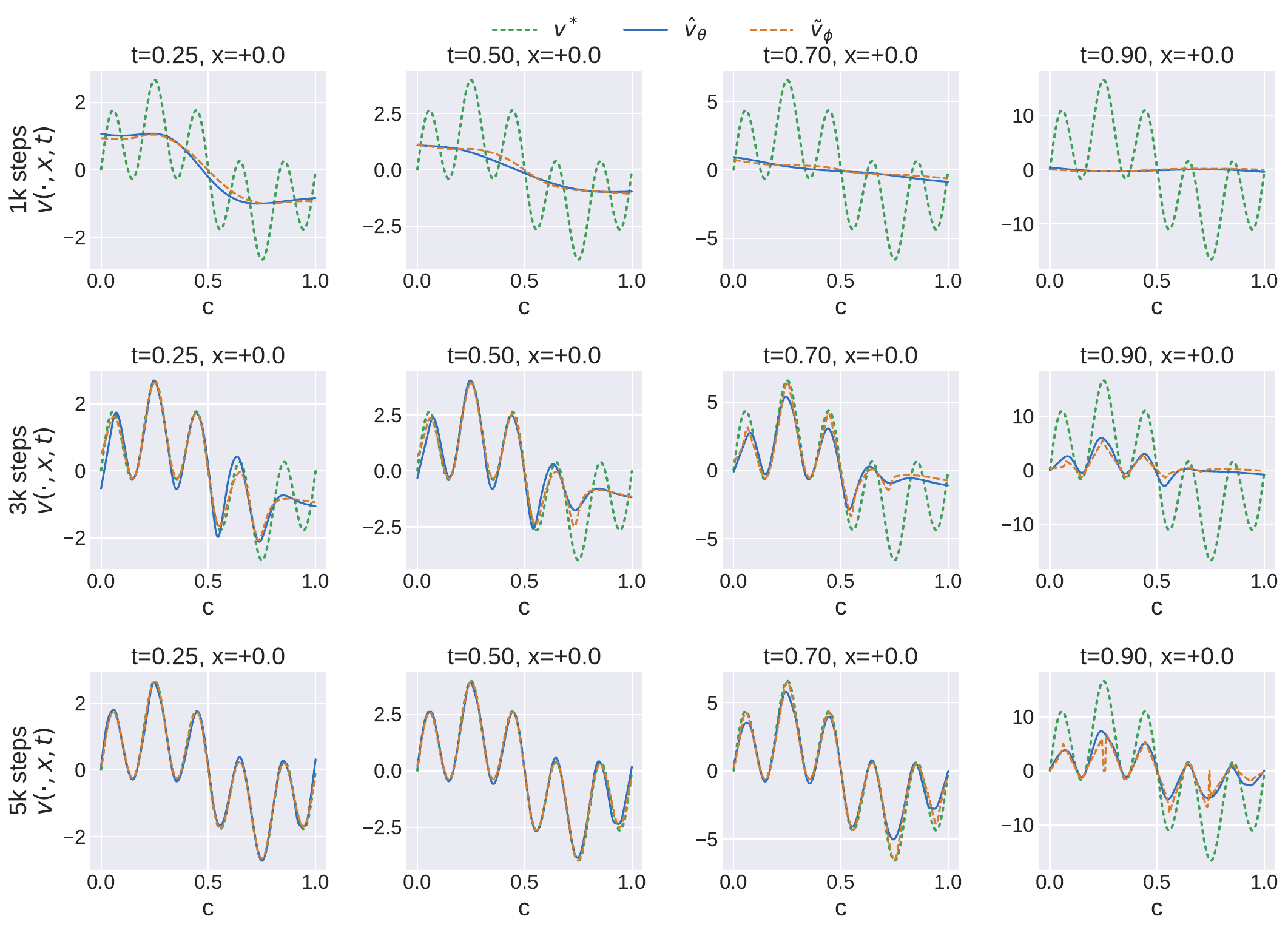}
    \caption{Visualization of density-weighted smoothing in the learned velocity field and the kernel approximation} 
    \label{fig:ek_smoothing}
\end{figure}

\begin{figure}[!ht]
    \centering
    \includegraphics[width=1.0\textwidth]{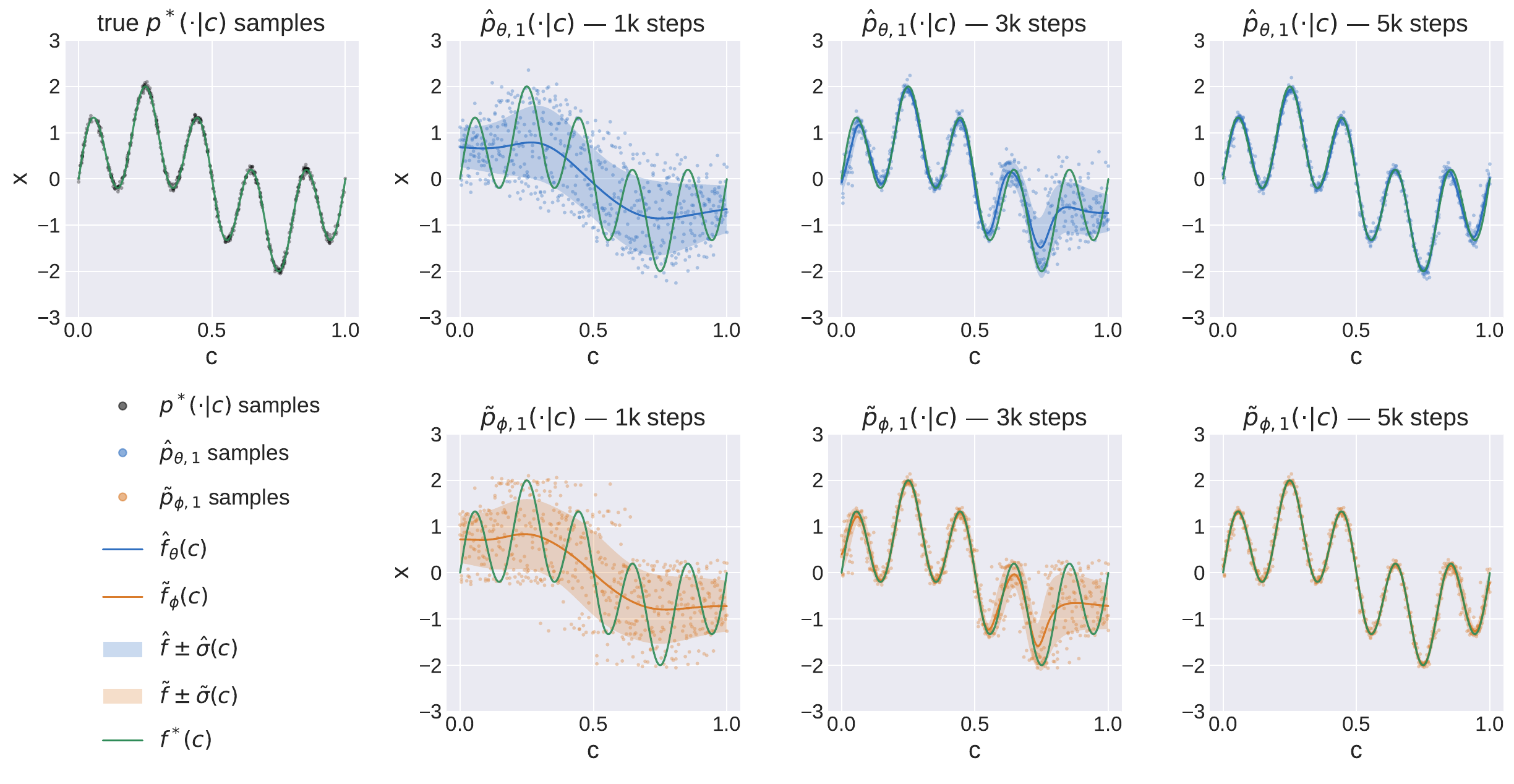}
    \caption{Effect of kernel approximation on the endpoint distribution} 
    \label{fig:ek_samples}
\end{figure}

%% file: body/app_coverage.tex
\section{Coverage Experiments} \label{app:coverage_details}

This section provides full constructions of the predictive sets and conformal baselines used in Section~\ref{sec:coverage}, along with experimental details.

\subsection{Coverage Set Creation}

\paragraph{Flow predictive intervals (scalar outputs)} \label{app:flow_intervals}

In the Monte-Carlo policy-evaluation experiments, the target variable is one-dimensional. We write each example as $(x_i, c_i)$, where $c_i$ is the conditioning input and $x_i \in \mathbb{R}$ is the target value. Given a trained conditional flow with flow map $\Phi^\theta_1(\cdot \mid c)$ and base distribution $\rho_0$, the learned conditional distribution at condition $c$ is the law of
\[
\Phi^\theta_1(Z \mid c), \qquad Z \sim \rho_0.
\]

Because the output is one-dimensional, flow trajectories do not cross. The map $z \mapsto \Phi^\theta_1(z \mid c)$ is therefore order-preserving, and central predictive intervals can be obtained by transporting quantiles of the base distribution. For a nominal coverage level $\alpha \in (0, 1)$, let
\[
z^-_\alpha = \operatorname{Quantile}_{(1-\alpha)/2}(\rho_0),
\qquad
z^+_\alpha = \operatorname{Quantile}_{(1+\alpha)/2}(\rho_0).
\]
We define the raw flow predictive interval by
\begin{align}
C^{\mathrm{flow}}_{\theta, \alpha}(c)
=
\left[ \Phi^\theta_1(z^-_\alpha \mid c), \, \Phi^\theta_1(z^+_\alpha \mid c) \right].
\label{eq:flow_interval}
\end{align}
Equivalently, this is the central $\alpha$-mass interval of the learned one-dimensional conditional distribution. The corresponding interval width is
\begin{align}
W^{\mathrm{flow}}_\alpha(c)
=
\Phi^\theta_1(z^+_\alpha \mid c) - \Phi^\theta_1(z^-_\alpha \mid c).
\end{align}
For a dataset $\mathcal D$, we report the empirical coverage $\widehat{\mathrm{Cov}}_\alpha(\mathcal D)$ and the average interval width
\begin{align}
\overline W^{\mathrm{flow}}_\alpha(\mathcal D)
=
\frac{1}{|\mathcal D|}
\sum_{(x_i, c_i) \in \mathcal D}
W^{\mathrm{flow}}_\alpha(c_i).
\end{align}

\paragraph{KNN predictive sets (high-dimensional outputs)} \label{app:knn_sets}

For high-dimensional outputs, central quantile intervals are not well-defined, and the standard conformal baselines (split conformal, CQR) do not generalize naturally to multivariate outputs. We instead use a sample-based KNN predictive set, similar to what is proposed in \citet{wang2022probabilisticconformalpredictionusing}.

For a condition $c$, draw $\tilde x_1, \ldots, \tilde x_M \sim \hat p_\theta(\cdot \mid c)$. For any candidate point $x$, define $d_k(x; c)$ as the distance from $x$ to its $k$-th nearest neighbor among the generated samples. We compute the same score for the generated samples themselves using leave-one-out nearest neighbors, and define
\[
\tau_{\theta, \alpha}(c)
=
\mathrm{Quantile}_\alpha\!\left( \{ d_k(\tilde x_m; c) \}_{m=1}^M \right).
\]
The KNN predictive set is
\begin{align}
C^{\mathrm{KNN}}_{\theta, \alpha}(c)
=
\{ x : d_k(x; c) \le \tau_{\theta, \alpha}(c) \}.
\label{eq:knn_set}
\end{align}
A held-out target $x_i$ is counted as covered if $d_k(x_i; c_i) \le \tau_{\theta, \alpha}(c_i)$. Throughout, we use $k = 1$ for computing coverage metrics.

\subsection{Baselines} \label{app:conformal_baselines}

All conformal methods use a held-out calibration set $\mathcal D_{\mathrm{cal}}$ disjoint from the training set. The deterministic baselines use the same backbone architecture as the flow's conditioning encoder and are trained on the same training set with matched optimizer, learning rate, and number of gradient steps. Split conformal prediction is applied either to a deterministic regressor (\textbf{det + SC}) or to the conditional mean of the trained CFM (\textbf{CFM mean + SC}). \textbf{Conf CFM} instead applies conformal calibration directly to the raw predictive set produced by the CFM.

\paragraph{Split conformal prediction.}
Let $\hat f(c)$ denote a point predictor. For \textbf{det + SC}, $\hat f(c)$ is the output of the deterministic regressor, whereas for \textbf{CFM mean + SC}, $\hat f(c)$ is the conditional mean of the trained CFM.

On $\mathcal D_{\mathrm{cal}}$, we compute absolute residual scores
\[
r_i = |x_i-\hat f(c_i)|.
\]
Let $q^{\mathrm{split}}_\alpha$ be the $\alpha$-quantile of $\{r_i\}$, with the standard finite-sample correction. The split-conformal interval is
\[
C^{\mathrm{split}}_\alpha(c)
=
\left[
\hat f(c)-q^{\mathrm{split}}_\alpha,\,
\hat f(c)+q^{\mathrm{split}}_\alpha
\right],
\]
with width \(
W^{\mathrm{split}}_\alpha(c)
=
2q^{\mathrm{split}}_\alpha.
\)
Every input receives the same expansion, so split conformal achieves marginal coverage but is not locally adaptive.

For high-dimensional outputs, \textbf{CFM mean + SC} uses the analogous construction with Euclidean residual scores
\[
r_i
=
\left\|x_i-\hat f(c_i)\right\|_2.
\]
The resulting predictive set is the conformal ball
\[
C^{\mathrm{CFM\text{-}mean+SC}}_\alpha(c)
=
\left\{
x:
\left\|x-\hat f(c)\right\|_2
\leq
q^{\mathrm{split}}_\alpha
\right\}.
\]

\paragraph{Conformalized quantile regression.}
For \textbf{det + CQR}, at each coverage level $\alpha$, we train two predictors $\hat\ell_\alpha(c)$ and $\hat u_\alpha(c)$ using quantile losses for the lower and upper tails, with target quantile levels $(1-\alpha)/2$ and $(1+\alpha)/2$, respectively. Before conformal calibration, these define the interval \(
[\hat\ell_\alpha(c),\hat u_\alpha(c)].
\)

On $\mathcal D_{\mathrm{cal}}$, we compute CQR scores
\[
s_i
=
\max
\left\{
\hat\ell_\alpha(c_i)-x_i,\,
x_i-\hat u_\alpha(c_i)
\right\}.
\]
Let $q^{\mathrm{CQR}}_\alpha$ be the $\alpha$-quantile of $\{s_i\}$, with the standard finite-sample correction. The conformalized CQR interval is
\[
C^{\mathrm{CQR}}_\alpha(c)
=
\left[
\hat\ell_\alpha(c)-q^{\mathrm{CQR}}_\alpha,\,
\hat u_\alpha(c)+q^{\mathrm{CQR}}_\alpha
\right],
\]
with width
\[
W^{\mathrm{CQR}}_\alpha(c)
=
\hat u_\alpha(c)-\hat\ell_\alpha(c)
+
2q^{\mathrm{CQR}}_\alpha.
\]
Unlike split conformal prediction, CQR can produce different interval widths at different conditioning inputs.

\paragraph{Conformalized CFM.}
For scalar outputs, let
\[
C_{\theta,\alpha}(c)
=
\left[
\hat\ell_{\theta,\alpha}(c),
\hat u_{\theta,\alpha}(c)
\right]
\]
denote the raw CFM predictive interval. On $\mathcal D_{\mathrm{cal}}$, we compute the conformity scores
\[
s_i^{\mathrm{CFM}}
=
\max
\left\{
\hat\ell_{\theta,\alpha}(c_i)-x_i,\,
x_i-\hat u_{\theta,\alpha}(c_i)
\right\}.
\]
Let $q^{\mathrm{CFM}}_\alpha$ be their $\alpha$-quantile, with the standard finite-sample correction. The conformalized CFM interval is
\[
C^{\mathrm{ConfCFM}}_{\theta,\alpha}(c)
=
\left[
\hat\ell_{\theta,\alpha}(c)-q^{\mathrm{CFM}}_\alpha,\,
\hat u_{\theta,\alpha}(c)+q^{\mathrm{CFM}}_\alpha
\right].
\]

For high-dimensional outputs, we conformalize the KNN predictive set. Let $\tau_{\theta,\alpha}(c)$ denote the raw CFM KNN threshold defined above. We compute calibration scores
\[
s_i^{\mathrm{CFM}}
=
d_k(x_i;c_i)-\tau_{\theta,\alpha}(c_i)
\]
and let $q^{\mathrm{CFM}}_\alpha$ be their conformal quantile. The resulting predictive set is
\[
C^{\mathrm{ConfCFM}}_{\theta,\alpha}(c)
=
\left\{
x:
d_k(x;c)
\leq
\tau_{\theta,\alpha}(c)
+
q^{\mathrm{CFM}}_\alpha
\right\}.
\]
Thus, Conf CFM applies a conformal correction to the boundary of the raw CFM predictive set without changing the underlying learned distribution.

\paragraph{Deep Evidential Regression.}
We use the standard Deep Evidential Regression method of \cite{amini2020deep} as an additional probabilistic uncertainty baseline. DER is evaluated only for scalar outputs, and its predictive intervals are obtained from its learned predictive distribution. We report DER at its best validation checkpoint.

\paragraph{Heteroscedastic Gaussian Regression.}
Heteroscedastic Gaussian Regression models the conditional distribution as
\[
p_\psi(x\mid c)
=
\mathcal N
\left(
x;
\mu_\psi(c),
\Sigma_\psi(c)
\right).
\]
For scalar outputs, the network predicts a conditional mean $\mu_\psi(c)$ and variance $\sigma_\psi^2(c)>0$, and is trained by minimizing the Gaussian negative log-likelihood
\[
\mathcal L_{\mathrm{HGR}}
=
\frac{1}{2}
\left[
\frac{
\left(x-\mu_\psi(c)\right)^2
}{
\sigma_\psi^2(c)
}
+
\log \sigma_\psi^2(c)
\right],
\]
up to an additive constant. The central $\alpha$-mass predictive interval is
\[
C^{\mathrm{HGR}}_{\psi,\alpha}(c)
=
\left[
\mu_\psi(c)
+
z_{(1-\alpha)/2}\sigma_\psi(c),
\,
\mu_\psi(c)
+
z_{(1+\alpha)/2}\sigma_\psi(c)
\right],
\]
where $z_q$ denotes the $q$-quantile of the standard normal distribution.

For $d$-dimensional outputs, HGR predicts a diagonal conditional covariance,
\[
\Sigma_\psi(c)
=
\operatorname{diag}
\left(
\sigma_{\psi,1}^2(c),
\ldots,
\sigma_{\psi,d}^2(c)
\right),
\]
and minimizes
\[
\mathcal L_{\mathrm{HGR}}
=
\frac{1}{2}
\sum_{j=1}^{d}
\left[
\frac{
\left(x_j-\mu_{\psi,j}(c)\right)^2
}{
\sigma_{\psi,j}^2(c)
}
+
\log \sigma_{\psi,j}^2(c)
\right].
\]

\paragraph{Computing $k$-NN radii across methods.}
For high-dimensional outputs, we express all methods' predictive-region sizes on the same $k$-NN radius scale used for the CFM. At each condition $c$, the sample-based radius is the $\alpha$-quantile of the leave-one-out $k$-nearest-neighbor distances among $K$ samples from the predictive distribution; we report its average over conditions. We hold $K$ and $k$ fixed across methods within each task to ensure a common sampling resolution. This preserves the CFM's original radius statistic while avoiding an ellipsoidal approximation to its predictive region.

For the CFM, we sample from the learned conditional flow; for HGR, from the predicted diagonal Gaussian. HGR radii are computed separately at each condition because Euclidean $k$-NN distances depend on the full shape of the diagonal covariance, not only on the geometric mean of its marginal standard deviations.

For the point-predictor split-conformal baselines (\textbf{det + SC} and \textbf{CFM mean + SC}), no per-condition sampling is required. At each nominal level $\alpha$, we interpret the conformal $L_2$ ball of radius $R_\alpha$ as the central $\alpha$-mass region of an isotropic Gaussian $\mathcal N(\mu(c),s_\alpha^2 I_D)$, where $\mu(c)$ is the corresponding point prediction and $D$ is the output dimension. This fixes
\[
s_\alpha
=
\frac{R_\alpha}{\sqrt{\chi^2_\alpha(D)}},
\]
where $\chi^2_\alpha(D)$ denotes the $\alpha$-quantile of the chi-squared distribution with $D$ degrees of freedom. Since pairwise distances are translation-invariant and scale linearly with $s_\alpha$, the corresponding radius is
\[
\tau_\alpha
=
\frac{R_\alpha}{\sqrt{\chi^2_\alpha(D)}}\,
q_{\alpha,k}(K,D),
\]
where $q_{\alpha,k}(K,D)$ is the reference radius obtained by applying the same leave-one-out $k$-NN quantile procedure to a cloud of $K$ standard Gaussian samples in $D$ dimensions. We estimate this reference quantity once by simulation for each $(\alpha,k,K,D)$ and reuse it for both ball-based baselines. The distance rescaling is exact under the isotropic-Gaussian representation; only the reference quantity is estimated numerically. This representation is used solely to express the ball radius on the common $k$-NN scale.

For \textbf{Conf CFM}, we report the sample-based CFM threshold after adding the conformal correction defined above, again averaged over conditions.

\subsection{Datasets and Training Setup}

All the plots are created using 3 random seeds. Dataset specific details and training setup are given below.

\subsubsection{Biology datasets}\label{app:bio_details}

We evaluate conditional flow matching on three single-cell perturbation benchmarks from the harmonized scPerturb collection \citep{peidli2024scperturb}: \textbf{Norman}, \textbf{SciPlex3}, and \textbf{PBMC cytokines}. In each dataset, a cell is represented by a PCA-reduced gene-expression vector $x \in \mathbb{R}^d$, and each sample is annotated with a perturbation condition $c$ together with a control condition. The perturbation prediction problem is cast as conditional generation: the source distribution is the population of control cells, the target distribution is the population of cells under the perturbation condition, and the conditioning variable $c$ is a pretrained embedding of the perturbation. With the source distribution of control cells fixed, the flow learns the distribution of perturbed-cell states conditioned on $c$.

\paragraph{Norman (Perturb-seq).}
The Norman et al. \citep{norman2019} Perturb-seq dataset consists of single-cell RNA-seq profiles of K562 cells subjected to CRISPRa-induced overexpression of one or two genes. We use the preprocessed AnnData release distributed by the authors of biolord \citep{piran2024biolord}, which contains $\sim$89k cells across $229$ single and double-gene perturbation conditions plus unperturbed controls; this is the same release used by the CellFlow benchmark \citep{klein2025cellflow}. Of biolord's predefined train/test partitions, we use the  ``train'' perturbation conditions ($\sim$42k cells) for our flow-matching experiments and the in-distribution cells are then randomly split into a training set and a validation set (80-20 split). The $7353$ unperturbed control cells form the pool from which $x_0$ samples are drawn at training time. The condition $c \in \mathbb{R}^{2560}$ is the concatenation of two ESM2-650M \citep{lin2023esm2} protein embeddings, one per perturbed gene. The target $x \in \mathbb{R}^{50}$ is the $50$-component PCA projection of the log-normalized expression vector, with PCA fit on all cells. 

The $x_0$ distribution is control-pool $+$ Gaussian noise: $x_0 = z + \sigma\,\epsilon$ with $z\sim\mathrm{Uniform}(\text{control pool})$, $\epsilon\sim\mathcal{N}(0,I)$, and $\sigma=1.0$. The velocity field is a CellFlow-style \citep{klein2025cellflow} three-branch MLP: a condition encoder ($[1024,1024]\!\to\!256$), a cell-state encoder ($[512,512,512]\!\to\!512$), and a sinusoidal time embedding ($128$d) followed by $[512,512,512]\!\to\!512$, fused by a $[1024,1024,1024]\!\to\!50$ decoder. Activations are SiLU throughout. We train with Adam at learning rate $10^{-3}$, batch size $1024$. Sampling at evaluation uses $100$ Euler steps.

\paragraph{PBMC (cytokine stimulation).}
The PBMC cytokine atlas released by Parse Biosciences \citep{parsebio_pbmc} contains single-cell RNA-seq profiles of peripheral blood mononuclear cells from 12 donors stimulated with 90 cytokines plus PBS controls ($\sim$9.7M cells over 2{,}000 highly-variable genes). We access this dataset through the CellFlow data loader \citep{klein2025cellflow} and restrict to a single donor (Donor 1), which contains $\sim$1.3m cells: $\sim$90k PBS-stimulated controls and $\sim$1.2m cytokine-stimulated cells across the 90 cytokines. To keep the per-condition cell count balanced, we subsample uniformly at random to at most $1{,}000$ cells per cytokine condition (and the same cap for controls), yielding $91{,}000$ cells which are split into a training set and a validation set ($80$--$20$ split). The $1000$ subsampled PBS controls form the pool from which $x_0$ samples are drawn at training time. The condition $c \in \mathbb{R}^{2560}$ is the ESM2 \citep{lin2023esm2} protein embedding of the cytokine. The target $x \in \mathbb{R}^{100}$ is the $100$-component PCA projection of the log-normalized expression vector, with PCA fit on all cells.

The $x_0$ distribution is again control-pool $+$ Gaussian noise with standard deviation $\sigma=10.0$. The velocity field used a condition encoder ($[1024,1024]\!\to\!256$), a cell-state encoder ($[512,512,512]\!\to\!512$), and a sinusoidal time embedding ($1024$d) followed by $[1024,1024,1024]\!\to\!1024$, fused by a $[1024,1024,1024]\!\to\!100$ decoder. We use SiLU activations, train with Adam at learning rate $10^{-3}$, batch size $1024$. Sampling at evaluation uses $1{,}000$ Euler steps.

\paragraph{SciPlex (small-molecule perturbations).}
The SciPlex3 dataset of Srivatsan et al.\ \citep{srivatsan2020} profiles single-cell transcriptomes of three cancer cell lines (A549, MCF7, K562) exposed to ${\sim}188$ small-molecule drugs at four doses each, with DMSO as the vehicle control. We restrict to the A549 lung-adenocarcinoma line, retrieve the dataset through the \texttt{pertpy} library \citep{heumos2024pertpy} via \texttt{pertpy.data.sciplex3\_raw()} and follow the preprocessing of the CellFlow benchmark \citep{klein2025cellflow}.
After filtering, the dataset contains $703$ in-distribution conditions covering $187$ drugs, with $\sim$136k perturbed cells and $\sim$3k DMSO control cells, which are split into train-val datasets (80-20 split). The condition $c \in \mathbb{R}^{1025}$ is $[\text{Morgan-1024-fingerprint}(\text{drug}),\,\log_{10}(\text{dose in nM})]$.
The target $x \in \mathbb{R}^{100}$ is the $100$-component PCA projection of the log-normalized expression vector, with PCA fit on all cells.

Similar to the previous two datasets, th $x_0$ distribution is constructed by adding gaussian noise with standard-deviation $= 10.0$ on the control pool. The velocity field architecture is the same as that used in the Norman experiments. Activations are SiLU throughout. We train with Adam at learning rate $10^{-3}$, batch size $1024$, for $1{,}000$ epochs. Sampling at evaluation uses $1000$ Euler steps.

\subsubsection{Policy Evaluation}

We construct three policy-evaluation datasets on top of the OGBench \citep{park2025ogbenchbenchmarkingofflinegoalconditioned} offline RL benchmark, one per environment. For each
  environment, we take the trained floq agent of \citet{agrawalla2025floqtrainingcriticsflowmatching} as the policy $\pi$ to be evaluated, and distill its true return $Q^{\pi}(s,a)$
  into a flow-matching predictor by Monte-Carlo: for each $(s,a)$ pair we average the discounted return over $n_{\mathrm{MC}}$ independent on-policy rollouts of $\pi$ from $(s,a)$,
  giving a stochastic target $\hat Q(s,a) \in \mathbb{R}$. The condition is $c=[s,a]$ and the target is $x=\hat Q(s,a)$.

  The per-environment details are:

  \begin{table}[ht]
  \centering
  \small
  \setlength{\tabcolsep}{4pt}
  \begin{tabular}{lccccccccc}
  \toprule
  Dataset & $n_{\mathrm{MC}}$ & $H$ & $\gamma$ & $\dim s$ & $\dim a$ & $\dim c$ & $|\mathcal{D}|$ \\
  \midrule
  Cube-single          & 20 & 200  & 0.99  & 28 & 5  & 33 & 200{,}000 \\
  AntMaze-large        & 20 & 1000 & 0.998 & 29 & 8  & 37 & 200{,}000 \\
  Humanoid-Maze-medium & 20 & 2000 & 0.999 & 69 & 21 & 90 & 200{,}000 \\
  \bottomrule
  \end{tabular}

  \caption{$n_{\mathrm{MC}}$ is the number of independent on-policy rollouts averaged into $\hat Q(s,a)$, $H$ is the fixed rollout horizon, and $\gamma$ is the discount factor.}
  \label{tab:floq-datasets}
  \end{table}

  \paragraph{$x_0$ distribution.}
  All three FloQ runs use an isotropic Gaussian prior $x_0 \sim \mathcal{N}(0, 1)$ in the 1-D target space.

\paragraph{Architecture and training.}
For all three FloQ datasets we use the same MLP velocity field with hidden widths $[512,512,512,512]$ and ReLU activations, taking the full input $[c, x_t, \mathrm{TimeEmb}(t)]$, where $\mathrm{TimeEmb}$ is a $32$-dimensional cosine-geometric time embedding. The output is a scalar velocity. We optimize with Adam (lr $3\!\times\!10^{-4}$, no weight decay) using batch size $1024$.

\subsubsection{Behavior Cloning} \label{app:bc_details}

We train a flow-matching behavior-cloning policy on the state-based variant of the PushT benchmark introduced by \cite{chi2023diffusion}, where a 2-D end-effector pushes a T-shaped block toward a fixed target pose. The observation is the $5$-dimensional state $[\text{agent}_{x},\text{agent}_{y},\text{block}_{x},\text{block}_{y},\text{block}_{\theta}]$ and the action is the $2$-dimensional target end-effector position. Following the standard receding-horizon BC setup, each example predicts an $8$-step action chunk from $2$ steps of observation history. Both observations and actions are MinMax-normalized over the training set. We use an $80-20$ train-val split. The condition $c \in \mathbb{R}^{10}$ is the flattened observation history, the target $x \in \mathbb{R}^{16}$ is the flattened action chunk and a $128$-dimensional sinusoidal time embedding is used. The $x_0$ distribution is a standard gaussian.

The velocity field is an $8$-block residual MLP with embedding width $512$, feed-forward expansion factor $4$ (i.e., a $2{,}048$-dim FFN), GELU activations, LayerNorm, and dropout $0.1$. Time is encoded with a $128$-dim sinusoidal positional embedding with frequencies linear in $[0, 100]$ followed by an MLP. No image encoder is used (state-based variant). We train using Adam with learning rate $10^{-4}$, weight decay $10^{-5}$, gradient clipping at norm $10$ and batch size $1024$.

\subsection{Additional Experiments}
\label{app:additional_coverage_exp}

\paragraph{Policy Evaluation}

Section~\ref{sec:coverage} reports policy-evaluation results for \texttt{Antmaze-Large} in the main paper (Figure~\ref{fig:coverage_antmaze}). The same analysis on the remaining two OGBench environments, \texttt{Cube-Single} and \texttt{Humanoidmaze-medium}, is shown in Figures~\ref{fig:coverage_cube} and \ref{fig:coverage_hmmaze} below. Both figures use the layout described in Section~\ref{sec:coverage}: the top row shows training loss and flow coverage at three nominal levels, and the bottom row shows the variance-vs-error scatter along with mean interval widths for the flow, split conformal, and CQR. The qualitative findings discussed in Section~\ref{sec:coverage}, all hold consistently across these two additional environments.

\begin{figure}[!ht]
    \centering
    \includegraphics[width=1.0\textwidth]{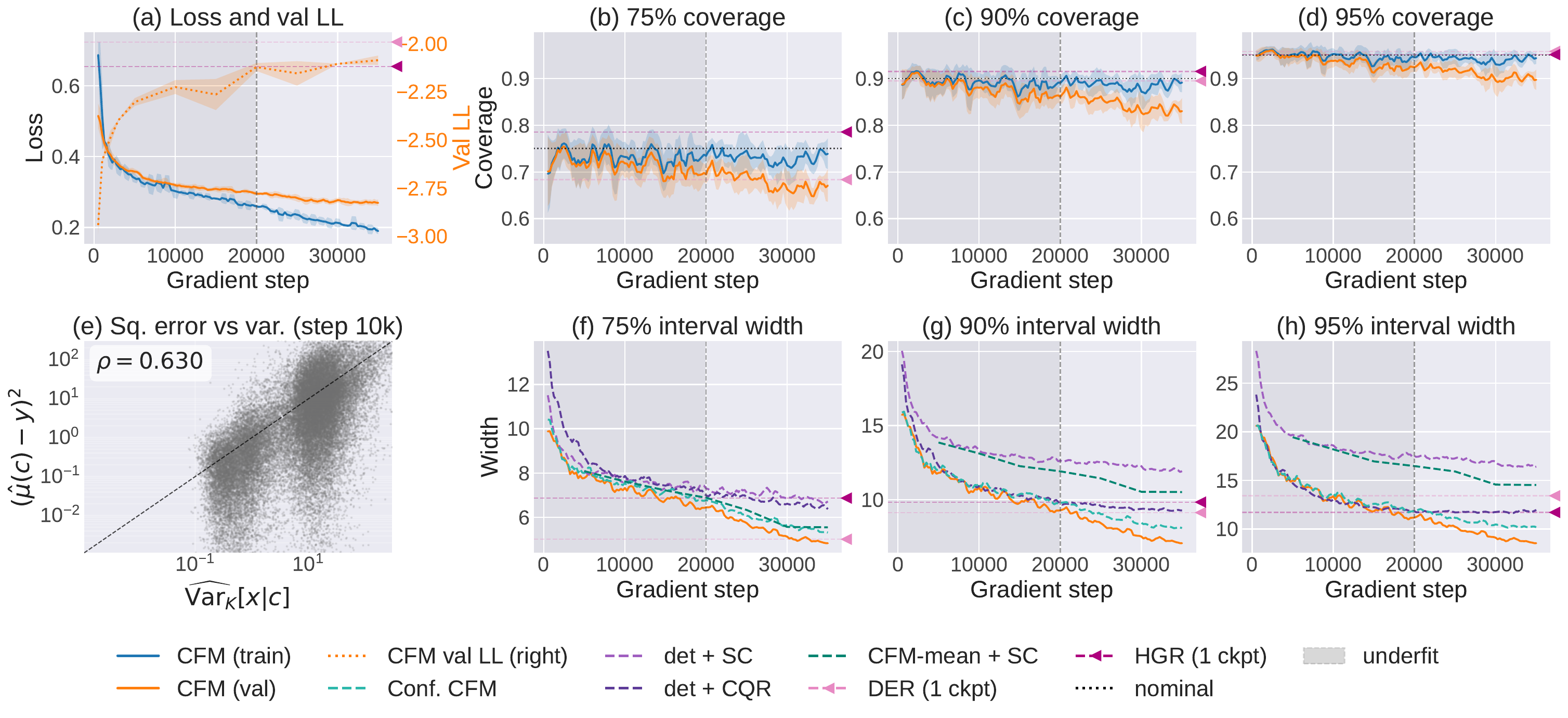}
    \caption{Coverage on \texttt{Cube-Single}. Layout matches Figure~\ref{fig:coverage_antmaze}.} 
    \label{fig:coverage_cube}
\end{figure}

\begin{figure}[!ht]
    \centering
    \includegraphics[width=1.0\textwidth]{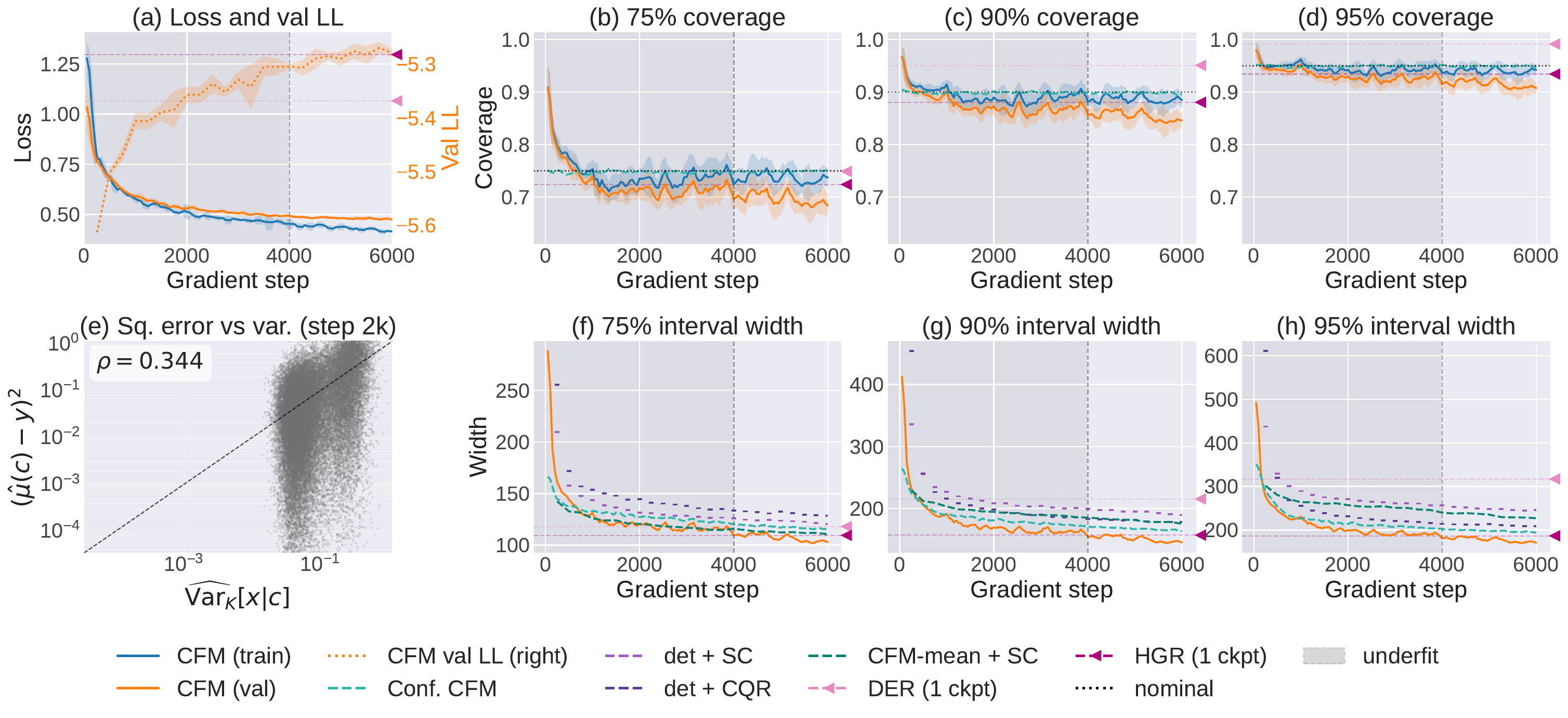}
    \caption{Coverage on \texttt{Humanoidmaze-medium}. Layout matches Figure~\ref{fig:coverage_antmaze}.} 
    \label{fig:coverage_hmmaze}
\end{figure}

\paragraph{Cell perturbation} \label{app:additional_bio}

Section~\ref{sec:coverage} reports coverage results for Norman in the main paper (Figure~\ref{fig:coverage_bio}). The same analysis on the remaining PBMC cytokines and scPerturb benchmark SciPlex3 (drug-dose perturbations), is shown in figures \ref{fig:coverage_pbmc} and \ref{fig:coverage_sciplex} respectively. The figures use the same layout as Figure~\ref{fig:coverage_bio}: training and validation flow-matching loss alongside KNN-based coverage at the three nominal levels.

The qualitative findings discussed in Section~\ref{sec:coverage} hold here as well: training coverage remains at or above nominal throughout training, and validation coverage is close to nominal during the underfitting regime. The flow's predictive sets are approximately calibrated on SciPlex3 and PBMC, consistent with the pattern observed on Norman.

\begin{figure}[!ht]
    \centering
    \includegraphics[width=1.0\textwidth]{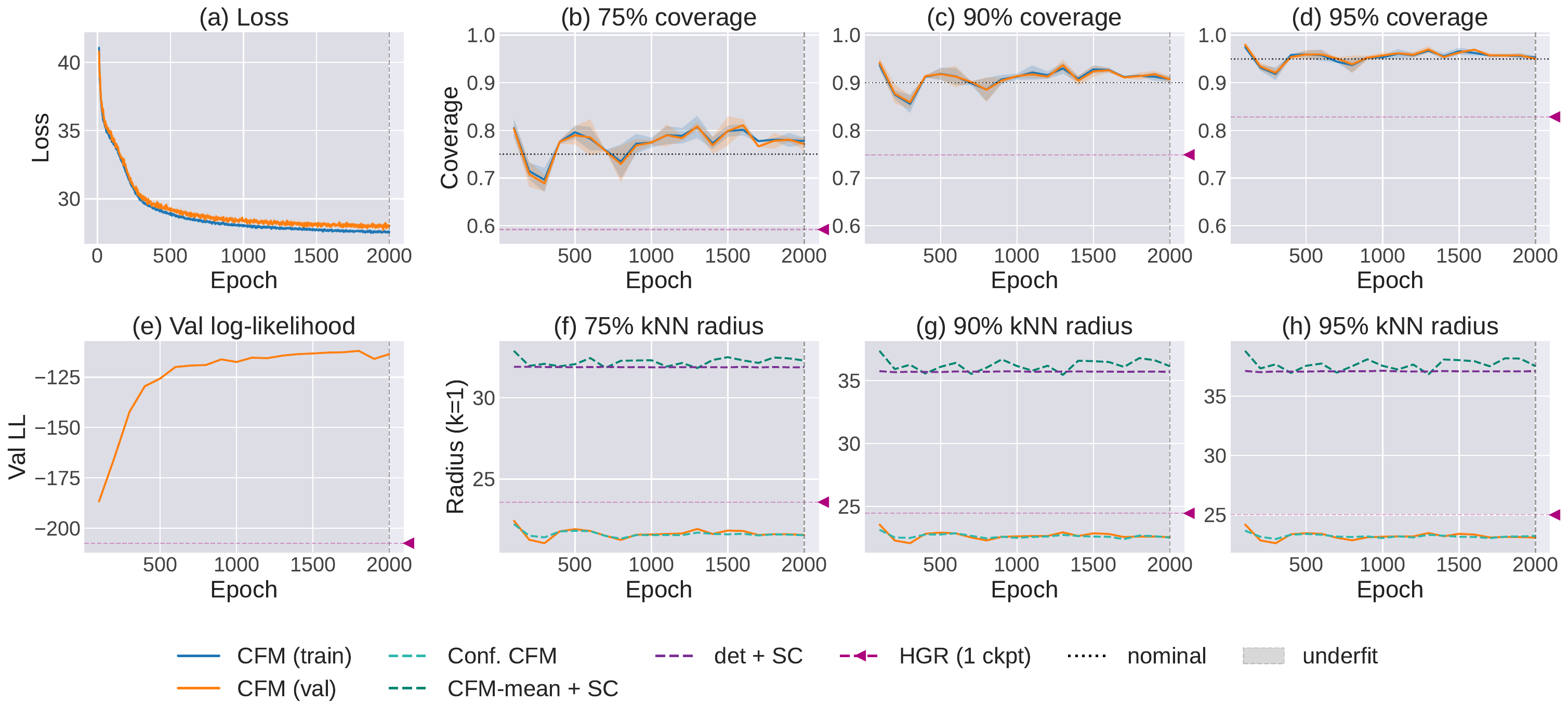}
    \caption{Coverage on PBMC. Layout matches Figure~\ref{fig:coverage_bio}.}
    \label{fig:coverage_pbmc}
\end{figure}

\begin{figure}[!ht]
    \centering
    \includegraphics[width=1.0\textwidth]{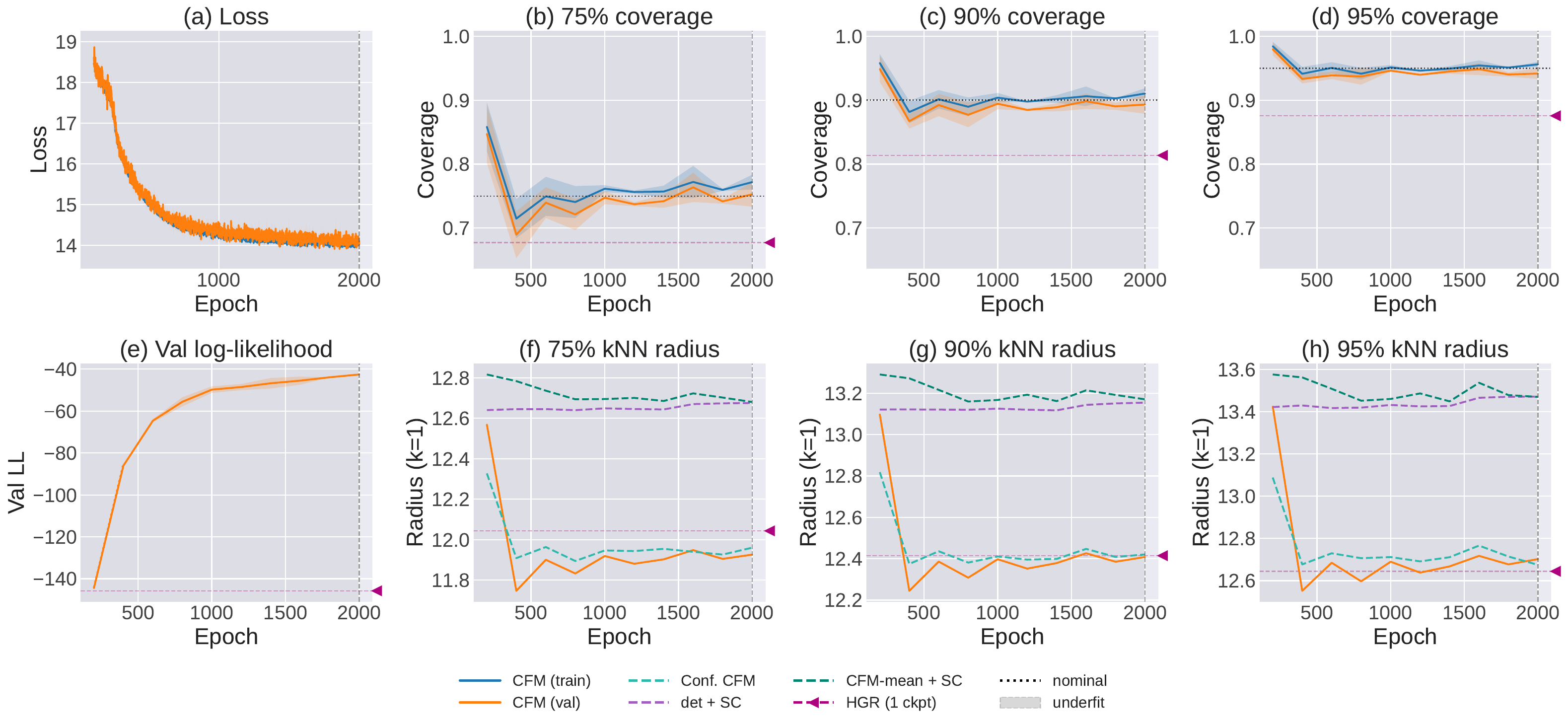}
    \caption{Coverage on SciPlex3 (drug-dose perturbations). Layout matches Figure~\ref{fig:coverage_bio}.}
    \label{fig:coverage_sciplex}
\end{figure}

%% file: body/app_variance_inflation.tex
\section{Variance Inflation: Additional Synthetic Experiments} \label{sec:add_synthetic}

In this section, we present evidence of variance inflation trends similar to Figure \ref{fig:reg_trends} across several more synthetic datasets. Similar to Figure \ref{fig:reg_trends}, for all the experiments below, the learned conditional variance increases as the learned mean becomes smoother. The average learned conditional variance decreases monotonically as smoothing decreases across all three estimators.

\begin{figure}[ht]
    \centering
    \begin{subfigure}[t]{0.9\textwidth}
        \centering
        \includegraphics[width=1.0\textwidth]{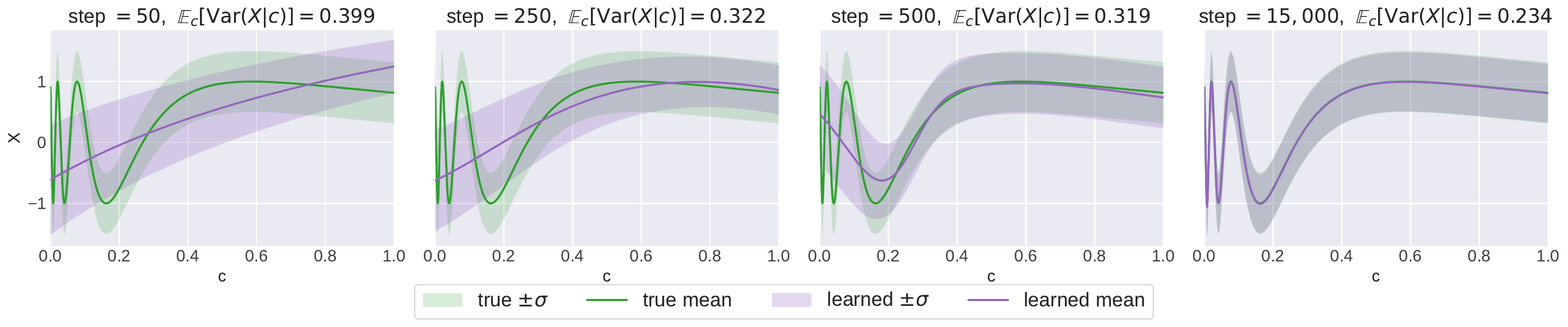}
        \caption{NN vector field parameterization: [Left to Right] Distributions learned with increasing training time}
    \end{subfigure}
    \begin{subfigure}[t]{0.9\textwidth}
        \centering
        \includegraphics[width=1.0\textwidth]{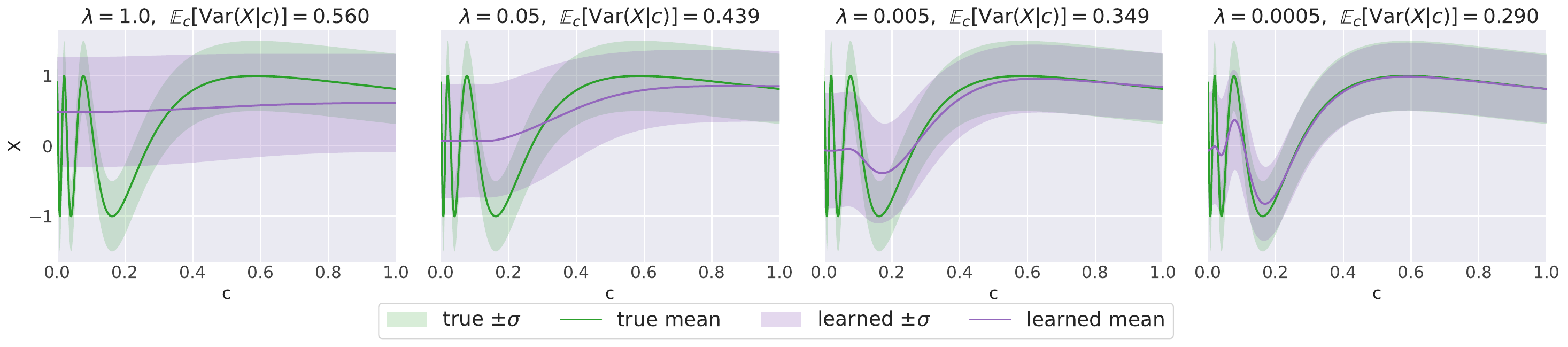}
        \caption{NN vector field parameterization with explicit regularization penalty $\lambda (\partial_c v )^2$}
    \end{subfigure}
    \begin{subfigure}[t]{0.9\textwidth}
        \centering
        \includegraphics[width=1.0\textwidth]{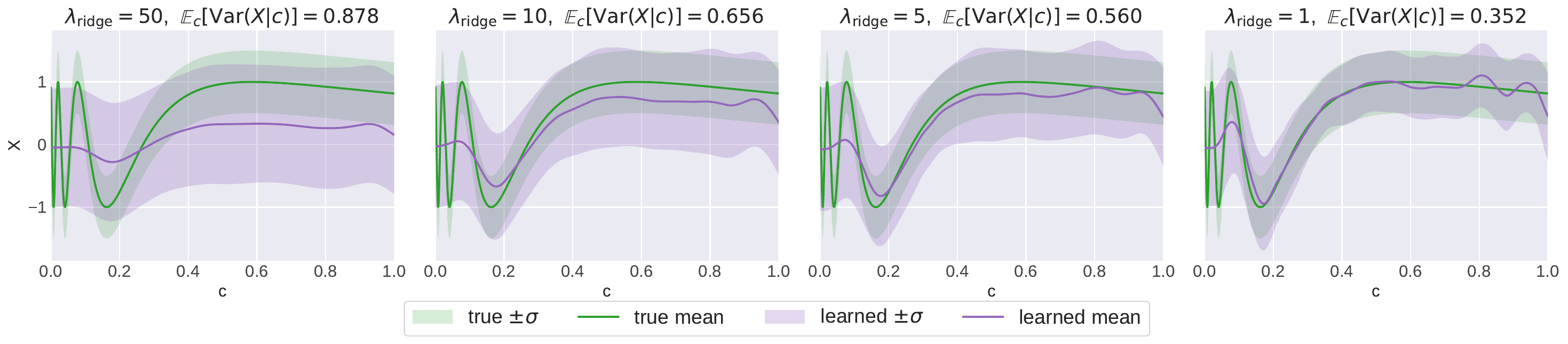}
        \caption{Vector field learning with kernel ridge regression with penalty $\lambda$}
    \end{subfigure}
    \caption{}
    \label{fig:sin_inverse}
\end{figure}

\begin{figure}[ht]
    \centering
    \begin{subfigure}[t]{0.9\textwidth}
        \centering
        \includegraphics[width=1.0\textwidth]{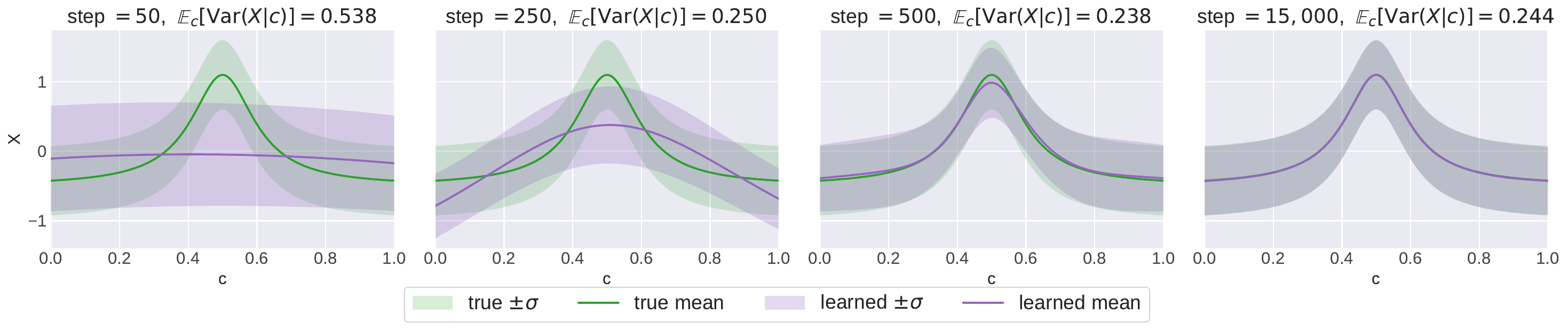}
        \caption{NN vector field parameterization: [Left to Right] Distributions learned with increasing training time}
    \end{subfigure}
    \begin{subfigure}[t]{0.9\textwidth}
        \centering
        \includegraphics[width=1.0\textwidth]{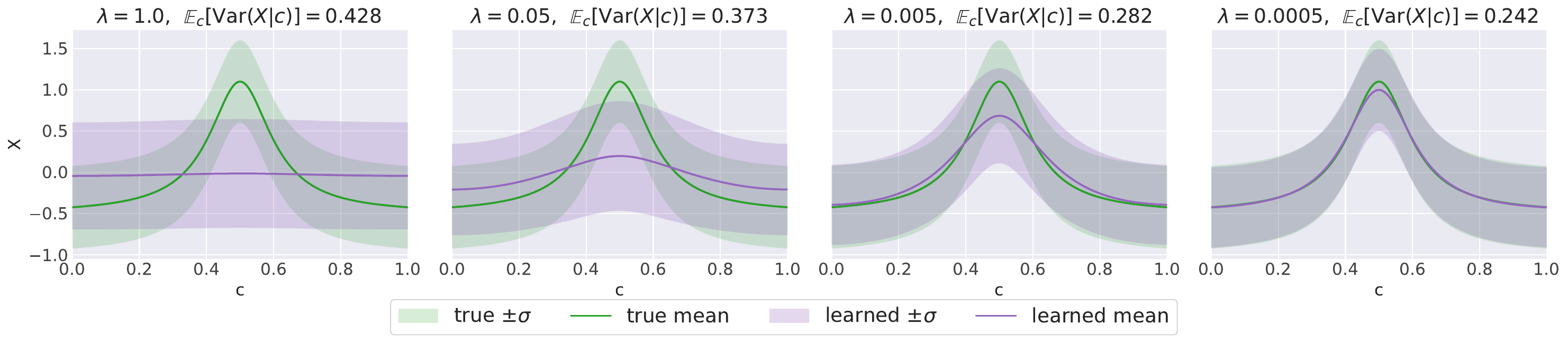}
        \caption{NN vector field parameterization with explicit regularization penalty $\lambda (\partial_c v )^2$}
    \end{subfigure}
    \begin{subfigure}[t]{0.9\textwidth}
        \centering
        \includegraphics[width=1.0\textwidth]{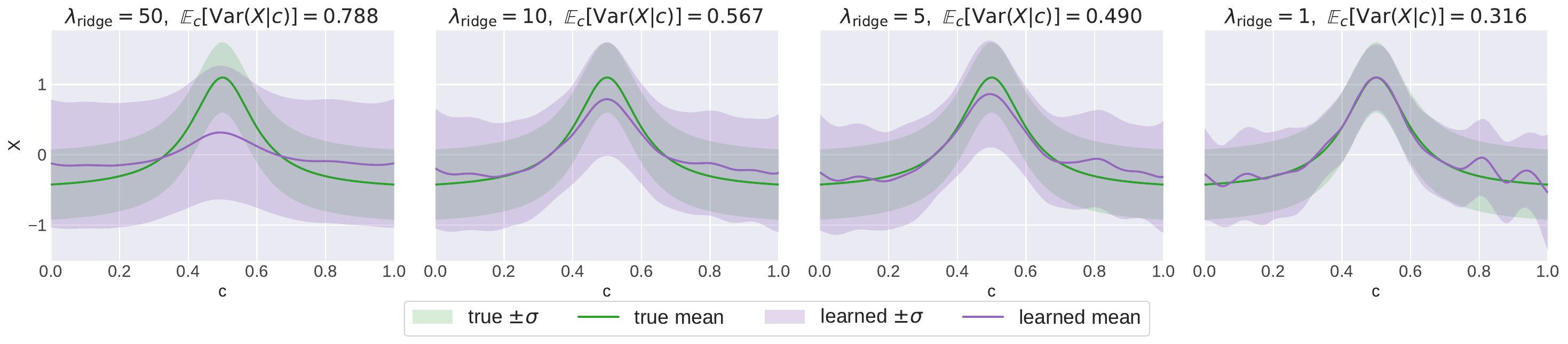}
        \caption{Vector field learning with kernel ridge regression with penalty $\lambda$}
    \end{subfigure}
    \caption{}
    \label{fig:peak}
\end{figure}

\begin{figure}[ht]
    \centering
    \begin{subfigure}[t]{0.9\textwidth}
        \centering
        \includegraphics[width=1.0\textwidth]{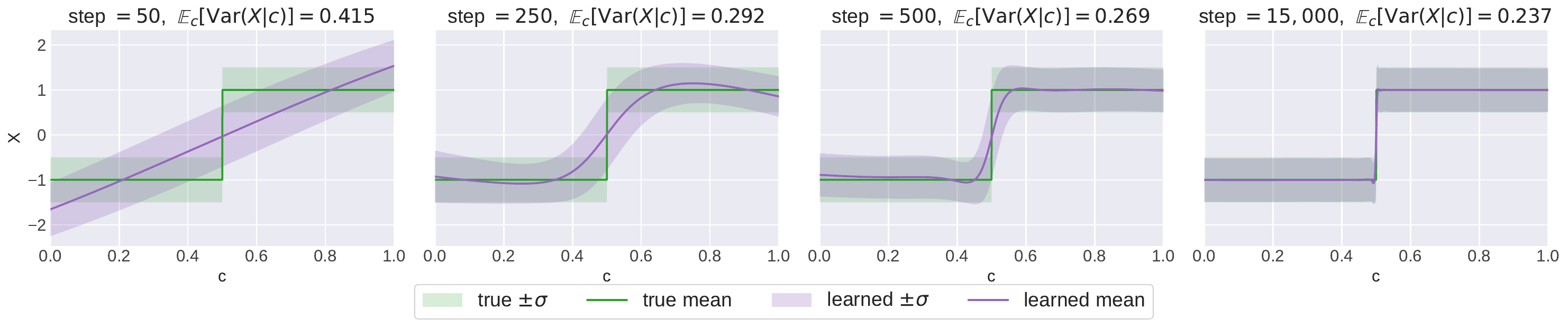}
        \caption{NN vector field parameterization: [Left to Right] Distributions learned with increasing training time}
    \end{subfigure}
    \begin{subfigure}[t]{0.9\textwidth}
        \centering
        \includegraphics[width=1.0\textwidth]{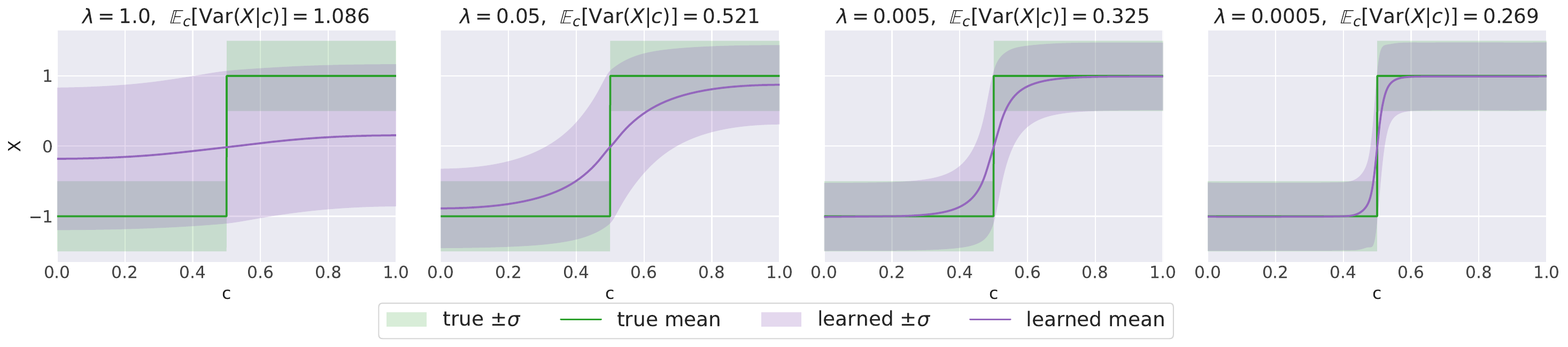}
        \caption{NN vector field parameterization with explicit regularization penalty $\lambda (\partial_c v )^2$}
    \end{subfigure}
    \begin{subfigure}[t]{0.9\textwidth}
        \centering
        \includegraphics[width=1.0\textwidth]{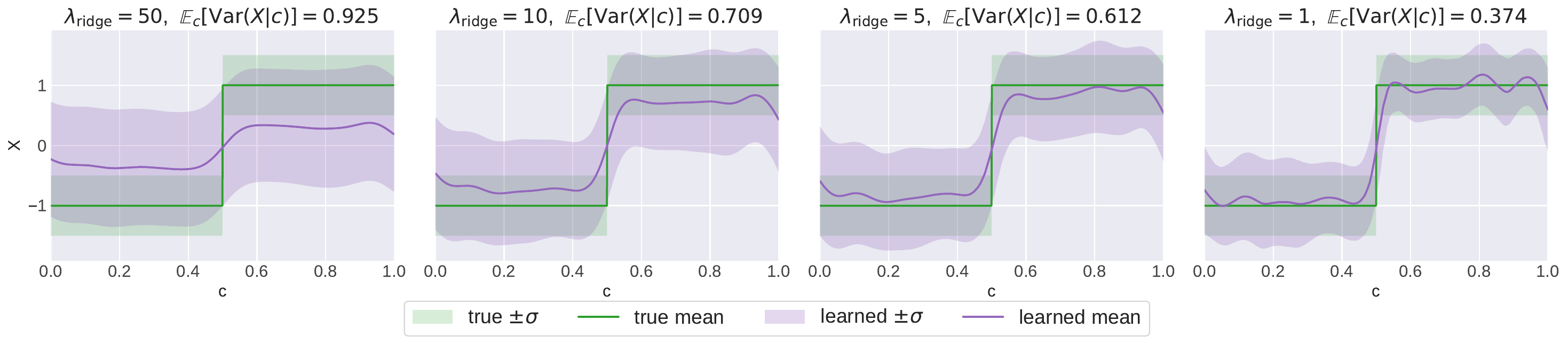}
        \caption{Vector field learning with kernel ridge regression with penalty $\lambda$}
    \end{subfigure}
    \caption{}
    \label{fig:step}
\end{figure}
\FloatBarrier

%% file: refs.bib
@misc{daniely2017sgdlearnsconjugatekernel,
      title={SGD Learns the Conjugate Kernel Class of the Network}, 
      author={Amit Daniely},
      year={2017},
      eprint={1702.08503},
      archivePrefix={arXiv},
      primaryClass={cs.LG},
      url={https://arxiv.org/abs/1702.08503}, 
}

@article{amini2020deep,
  title={Deep evidential regression},
  author={Amini, Alexander and Schwarting, Wilko and Soleimany, Ava and Rus, Daniela},
  journal={Advances in neural information processing systems},
  volume={33},
  pages={14927--14937},
  year={2020}
}

@inproceedings{yoon2023diffusion,
  title={Diffusion probabilistic models generalize when they fail to memorize},
  author={Yoon, TaeHo and Choi, Joo Young and Kwon, Sehyun and Ryu, Ernest K},
  booktitle={ICML 2023 workshop on structured probabilistic inference $\{$$\backslash$\&$\}$ generative modeling},
  year={2023}
}

@article{scarvelis2023closed,
  title={Closed-form diffusion models},
  author={Scarvelis, Christopher and Borde, Haitz S{\'a}ez de Oc{\'a}riz and Solomon, Justin},
  journal={arXiv preprint arXiv:2310.12395},
  year={2023}
}

@article{aithal2024understanding,
  title={Understanding hallucinations in diffusion models through mode interpolation},
  author={Aithal, Sumukh K and Maini, Pratyush and Lipton, Zachary C and Kolter, J Zico},
  journal={Advances in neural information processing systems},
  volume={37},
  pages={134614--134644},
  year={2024}
}

@article{song2025selective,
  title={Selective underfitting in diffusion models},
  author={Song, Kiwhan and Kim, Jaeyeon and Chen, Sitan and Du, Yilun and Kakade, Sham and Sitzmann, Vincent},
  journal={arXiv preprint arXiv:2510.01378},
  year={2025}
}

@misc{chisari2024learningroboticmanipulationpolicies,
      title={Learning Robotic Manipulation Policies from Point Clouds with Conditional Flow Matching}, 
      author={Eugenio Chisari and Nick Heppert and Max Argus and Tim Welschehold and Thomas Brox and Abhinav Valada},
      year={2024},
      eprint={2409.07343},
      archivePrefix={arXiv},
      primaryClass={cs.RO},
      url={https://arxiv.org/abs/2409.07343}, 
}

@article{yao2007early,
  title={On early stopping in gradient descent learning},
  author={Yao, Yuan and Rosasco, Lorenzo and Caponnetto, Andrea},
  journal={Constructive approximation},
  volume={26},
  number={2},
  pages={289--315},
  year={2007},
  publisher={Springer}
}

@misc{rouxel2024flowmatchingimitationlearning,
      title={Flow Matching Imitation Learning for Multi-Support Manipulation}, 
      author={Quentin Rouxel and Andrea Ferrari and Serena Ivaldi and Jean-Baptiste Mouret},
      year={2024},
      eprint={2407.12381},
      archivePrefix={arXiv},
      primaryClass={cs.RO},
      url={https://arxiv.org/abs/2407.12381}, 
}

@inproceedings{
    miller2024flowmm,
    title={Flow{MM}: Generating Materials with Riemannian Flow Matching},
    author={Benjamin Kurt Miller and Ricky T. Q. Chen and Anuroop Sriram and Brandon M Wood},
    booktitle={Forty-first International Conference on Machine Learning},
    year={2024},
    url={https://openreview.net/forum?id=W4pB7VbzZI}
}

@misc{zhang2024flowpolicyenablingfastrobust,
      title={FlowPolicy: Enabling Fast and Robust 3D Flow-based Policy via Consistency Flow Matching for Robot Manipulation}, 
      author={Qinglun Zhang and Zhen Liu and Haoqiang Fan and Guanghui Liu and Bing Zeng and Shuaicheng Liu},
      year={2024},
      eprint={2412.04987},
      archivePrefix={arXiv},
      primaryClass={cs.RO},
      url={https://arxiv.org/abs/2412.04987}, 
}

@inproceedings{vastola2025generalization,
  title = {Generalization through Variance: How Noise Shapes Inductive Biases in Diffusion Models},
  author = {Vastola, John J.},
  booktitle = {The Thirteenth International Conference on Learning Representations},
  year = {2025},
  url = {https://openreview.net/forum?id=7lUdo8Vuqa}
}

@inproceedings{kamb2025analytic,
  title = {An Analytic Theory of Creativity in Convolutional Diffusion Models},
  author = {Kamb, Mason and Ganguli, Surya},
  booktitle = {Proceedings of the 42nd International Conference on Machine Learning},
  year = {2025},
  url = {https://openreview.net/forum?id=ilpL2qACla}
}

@misc{pfrommer2025conditionaldiffusionmodelactually,
      title={Is Your Conditional Diffusion Model Actually Denoising?}, 
      author={Daniel Pfrommer and Zehao Dou and Christopher Scarvelis and Max Simchowitz and Ali Jadbabaie},
      year={2025},
      eprint={2512.18736},
      archivePrefix={arXiv},
      primaryClass={cs.LG},
      url={https://arxiv.org/abs/2512.18736}, 
}

@article{Lee_2020,
  title={Wide neural networks of any depth evolve as linear models under gradient descent},
  author={Lee, Jaehoon and Xiao, Lechao and Schoenholz, Samuel and Bahri, Yasaman and Novak, Roman and Sohl-Dickstein, Jascha and Pennington, Jeffrey},
  journal={Advances in neural information processing systems},
  volume={32},
  year={2019}
}

@misc{albergo2025stochasticinterpolantsunifyingframework,
      title={Stochastic Interpolants: A Unifying Framework for Flows and Diffusions}, 
      author={Michael S. Albergo and Nicholas M. Boffi and Eric Vanden-Eijnden},
      year={2025},
      eprint={2303.08797},
      archivePrefix={arXiv},
      primaryClass={cs.LG},
      url={https://arxiv.org/abs/2303.08797}, 
}

@article{heumos2024pertpy,                                            author  = {Heumos, Lukas and Ji, Yuge and May, Lilly and others},
      title   = {Pertpy: an end-to-end framework for perturbation analysis},                                   journal = {bioRxiv},                                      year    = {2024},                                         doi     = {10.1101/2024.08.04.606516}  
  }

@article{lin2023esm2,
    author  = {Lin, Zeming and Akin, Halil and Rao, Roshan and Hie, Brian and Zhu, Zhongkai and Lu, Wenting and Smetanin, Nikita and Verkuil, Robert and Kabeli, Ori and Shmueli, Yaniv and dos Santos Costa, Allan and Fazel-Zarandi, Maryam and Sercu, Tom and Candido, Salvatore and Rives, Alexander},
    title   = {Evolutionary-scale prediction of atomic-level protein structure},
    journal = {Science},
    volume  = {379},
    number  = {6637},
    pages   = {1123--1130},
    year    = {2023},
    doi     = {10.1126/science.ade2574}
}

@misc{parsebio_pbmc,                                                  title  = {{10 Million Human PBMCs in a Single Experiment}},
      author = {{Parse Biosciences}},                           year   = {2024},                                   
      note   = {Accessed: 2026-05-04},                          
      url    = {https://www.parsebiosciences.com/datasets/10-million-human-pbmcs-in-a-single-experiment/}
  }

@article{norman2019,
    author  = {Norman, Thomas M. and Horlbeck, Max A. and Replogle, Joseph M. and Ge, Alex Y. and Xu, Albert and Jost, Marco and Gilbert, Luke A. and Weissman, Jonathan S.},
    title   = {Exploring genetic interaction manifolds constructed from rich single-cell phenotypes},
    journal = {Science},
    volume  = {365},
    number  = {6455},
    pages   = {786--793},
    year    = {2019},
    doi     = {10.1126/science.aax4438}
}

@article{piran2024biolord,
    author  = {Piran, Zoe and Cohen, Niv and Hoshen, Yedid and Nitzan, Mor},
    title   = {Disentanglement of single-cell data with biolord},
    journal = {Nature Biotechnology},
    volume  = {42},
    pages   = {1678--1683},
    year    = {2024},
    doi     = {10.1038/s41587-023-02079-x}
}

@article{srivatsan2020,
    author  = {Srivatsan, Sanjay R. and McFaline-Figueroa, Jos\'e L. and Ramani, Vijay and Saunders, Lauren and Cao, Junyue and Packer, Jonathan and Pliner, Hannah A. and Jackson, Daniel L. and Daza, Riza M. and Christiansen, Lena and Zhang, Fan and Steemers, Frank and Shendure, Jay and Trapnell, Cole},
    title   = {Massively multiplex chemical transcriptomics at single-cell resolution},
    journal = {Science},
    volume  = {367},
    number  = {6473},
    pages   = {45--51},
    year    = {2020},
    doi     = {10.1126/science.aax6234}
}

@article{peidli2024scperturb,
  title={{scPerturb}: harmonized single-cell perturbation data},
  author={Peidli, Stefan and Green, Tessa D and Shen, Ciyue and Gross, Torsten and Min, Joseph and Garda, Samuele and Yuan, Bo and Schumacher, Linus J and Taylor-King, Jake P and Marks, Debora S and Luna, Augustin and Bl{\"u}thgen, Nils and Sander, Chris},
  journal={Nature Methods},
  volume={21},
  pages={531--540},
  year={2024}
}

@article{klein2025cellflow,
  title={{CellFlow} enables generative single-cell phenotype modeling with flow matching},
  author={Klein, Dominik and others},
  journal={bioRxiv},
  year={2025},
  doi={10.1101/2025.04.11.648220},
  note={Preprint, doi:10.1101/2025.04.11.648220}
}

@misc{agrawalla2025floqtrainingcriticsflowmatching,
      title={floq: Training Critics via Flow-Matching for Scaling Compute in Value-Based RL}, 
      author={Bhavya Agrawalla and Michal Nauman and Khush Agrawal and Aviral Kumar},
      year={2025},
      eprint={2509.06863},
      archivePrefix={arXiv},
      primaryClass={cs.LG},
      url={https://arxiv.org/abs/2509.06863}, 
}

@misc{wang2022probabilisticconformalpredictionusing,
      title={Probabilistic Conformal Prediction Using Conditional Random Samples}, 
      author={Zhendong Wang and Ruijiang Gao and Mingzhang Yin and Mingyuan Zhou and David M. Blei},
      year={2022},
      eprint={2206.06584},
      archivePrefix={arXiv},
      primaryClass={stat.ML},
      url={https://arxiv.org/abs/2206.06584}, 
}

@misc{park2025ogbenchbenchmarkingofflinegoalconditioned,
      title={OGBench: Benchmarking Offline Goal-Conditioned RL}, 
      author={Seohong Park and Kevin Frans and Benjamin Eysenbach and Sergey Levine},
      year={2025},
      eprint={2410.20092},
      archivePrefix={arXiv},
      primaryClass={cs.LG},
      url={https://arxiv.org/abs/2410.20092}, 
}

@article{silverman1984spline,
  title={Spline smoothing: The equivalent variable kernel method},
  author={Silverman, Bernard W},
  journal={The Annals of Statistics},
  volume={12},
  number={3},
  pages={898--916},
  year={1984},
  doi={10.1214/aos/1176346710}
}

@book{wahba1990spline,
  title={Spline Models for Observational Data},
  author={Wahba, Grace},
  series={CBMS-NSF Regional Conference Series in Applied Mathematics},
  volume={59},
  year={1990},
  publisher={SIAM},
  address={Philadelphia},
  doi={10.1137/1.9781611970128}
}

@book{green1994nonparametric,
  title={Nonparametric Regression and Generalized Linear Models: A Roughness Penalty Approach},
  author={Green, Peter J and Silverman, Bernard W},
  year={1994},
  publisher={Chapman and Hall},
  address={London}
}

@article{cucker2002mathematical,
  title={On the mathematical foundations of learning},
  author={Cucker, Felipe and Smale, Steve},
  journal={Bulletin of the American Mathematical Society},
  volume={39},
  number={1},
  pages={1--49},
  year={2002},
  doi={10.1090/S0273-0979-01-00923-5}
}

@article{caponnetto2007optimal,
  title={Optimal rates for the regularized least-squares algorithm},
  author={Caponnetto, Andrea and De Vito, Ernesto},
  journal={Foundations of Computational Mathematics},
  volume={7},
  number={3},
  pages={331--368},
  year={2007},
  doi={10.1007/s10208-006-0196-8}
}

@book{kress2014linear,
  title={Linear Integral Equations},
  author={Kress, Rainer},
  edition={3},
  series={Applied Mathematical Sciences},
  volume={82},
  year={2014},
  publisher={Springer},
  address={New York},
  doi={10.1007/978-1-4614-9593-2}
}

@book{engl1996regularization,
  title={Regularization of Inverse Problems},
  author={Engl, Heinz W and Hanke, Martin and Neubauer, Andreas},
  series={Mathematics and Its Applications},
  volume={375},
  year={1996},
  publisher={Kluwer Academic Publishers},
  address={Dordrecht}
}

@article{chi2023diffusion,
  title={Diffusion Policy: Visuomotor Policy Learning via Action Diffusion},
  author={Chi, Cheng and Feng, Siyuan and Du, Yilun and Xu, Zhenjia and Cousineau, Eric and Burchfiel, Benjamin and Song, Shuran},
  journal={arXiv preprint arXiv:2303.04137},
  year={2023}
}

@misc{lipman2023flow,
      title={Flow Matching for Generative Modeling}, 
      author={Yaron Lipman and Ricky T. Q. Chen and Heli Ben-Hamu and Maximilian Nickel and Matt Le},
      year={2023},
      eprint={2210.02747},
      archivePrefix={arXiv},
      primaryClass={cs.LG},
      url={https://arxiv.org/abs/2210.02747}, 
}

@inproceedings{albergo2023building,
  title={Building Normalizing Flows with Stochastic Interpolants},
  author={Albergo, Michael S. and Vanden-Eijnden, Eric},
  booktitle={International Conference on Learning Representations},
  year={2023}
}

@misc{liu2023flow,
      title={Flow Straight and Fast: Learning to Generate and Transfer Data with Rectified Flow}, 
      author={Xingchao Liu and Chengyue Gong and Qiang Liu},
      year={2022},
      eprint={2209.03003},
      archivePrefix={arXiv},
      primaryClass={cs.LG},
      url={https://arxiv.org/abs/2209.03003}, 
}

@inproceedings{chen2018neural,
  title={Neural ordinary differential equations},
  author={Chen, Ricky T. Q. and Rubanova, Yulia and Bettencourt, Jesse and Duvenaud, David K.},
  booktitle={Advances in Neural Information Processing Systems},
  volume={31},
  year={2018}
}

@inproceedings{rahaman2019spectral,
  title={On the Spectral Bias of Neural Networks},
  author={Rahaman, Nasim and Baratin, Aristide and Arpit, Devansh and Draxler, Felix and Lin, Min and Hamprecht, Fred A. and Bengio, Yoshua and Courville, Aaron},
  booktitle={International Conference on Machine Learning},
  pages={5301--5310},
  year={2019},
  organization={PMLR}
}

@article{xu2020frequency,
  title={Frequency Principle: {F}ourier Analysis Sheds Light on Deep Neural Networks},
  author={Xu, Zhi-Qin John and Zhang, Yaoyu and Luo, Tao and Xiao, Yanyang and Ma, Zheng},
  journal={Communications in Computational Physics},
  volume={28},
  number={5},
  pages={1746--1767},
  year={2020}
}

@inproceedings{tancik2020fourier,
  title={Fourier features let networks learn high frequency functions in low dimensional domains},
  author={Tancik, Matthew and Srinivasan, Pratul and Mildenhall, Ben and Fridovich-Keil, Sara and Raghavan, Nithin and Singhal, Utkarsh and Ramamoorthi, Ravi and Barron, Jonathan and Ng, Ren},
  booktitle={Advances in Neural Information Processing Systems},
  volume={33},
  pages={7537--7547},
  year={2020}
}

@inproceedings{jacot2018neural,
  title={Neural Tangent Kernel: Convergence and Generalization in Neural Networks},
  author={Jacot, Arthur and Gabriel, Franck and Hongler, Cl{\'e}ment},
  booktitle={Advances in Neural Information Processing Systems},
  volume={31},
  year={2018}
}

@inproceedings{scholkopf2001generalized,
  title={A Generalized Representer Theorem},
  author={Sch{\"o}lkopf, Bernhard and Herbrich, Ralf and Smola, Alex J.},
  booktitle={Computational Learning Theory},
  pages={416--426},
  year={2001},
  publisher={Springer}
}
